\documentclass{article}

\usepackage{microtype}
\usepackage{graphicx}
\usepackage{subcaption}
\usepackage{booktabs} 

\usepackage{hyperref}

\usepackage[preprint]{icml2026}

\usepackage{amsmath}
\usepackage{amssymb}
\usepackage{mathtools}
\usepackage{amsthm}

\usepackage[capitalize,noabbrev]{cleveref}

\theoremstyle{plain}
\newtheorem{theorem}{Theorem}[section]

\newtheorem{lemma}[theorem]{Lemma}
\newtheorem{corollary}[theorem]{Corollary}
\theoremstyle{definition}

\newtheorem{assumption}{Assumption}[section]
\theoremstyle{remark}
\newtheorem*{remark}{Remark}

\usepackage[textsize=tiny]{todonotes}

\icmltitlerunning{Probabilistic Smoothing with Ratio-Monotone Transforms for Global Optimization}

\begin{document}

\twocolumn[
  \icmltitle{Probabilistic Smoothing with Ratio-Monotone Transforms \\ for Global Optimization}



  \icmlsetsymbol{equal}{*}
  \begin{icmlauthorlist}
    \icmlauthor{Kukyoung Jang}{yyy}
    \icmlauthor{Taehyun Cho}{comp}
    \icmlauthor{Junrui Zhang}{comp}
    \icmlauthor{Ping Xu}{equal,sch}
    \icmlauthor{Kyungjae Lee}{equal,yyy}
  \end{icmlauthorlist}

  \icmlaffiliation{yyy}{Department of Statistics, Seoul, Korea University}
  \icmlaffiliation{comp}{Department of Electrical and Computer Engineering, Seoul, Seoul National University}
  \icmlaffiliation{sch}{Shandong University at Weihai, Weihai, China}

  \icmlcorrespondingauthor{Ping Xu}{xuping@sdu.edu.cn}
  \icmlcorrespondingauthor{Kyungjae Lee}{kyungjae\_lee@korea.ac.kr}

  \icmlkeywords{Machine Learning, ICML}

  \vskip 0.3in
]

\allowdisplaybreaks


\printAffiliationsAndNotice{}  

\begin{abstract}
Probabilistic smoothing is a standard tool for global optimization, but existing methods rely on Gaussian kernels and specific transforms, often resulting in strong hyperparameter sensitivity and limited robustness.
We propose a general smoothing framework that combines flexible symmetric unimodal kernels with monotonic ratio-based transformations.
Under mild conditions, we show that the smoothed objective preserves the global maximizer and that all stationary points concentrate near the true optimum for sufficiently large amplification, without requiring a decreasing smoothing schedule.
We further provide explicit complexity bounds for stochastic gradient ascent and show that a leave-one-out baseline provably reduces variance.
Experiments on high-dimensional benchmarks and black-box adversarial attacks demonstrate improved robustness and competitive performance.
\end{abstract}
\section{Introduction}

Global optimization over a compact domain is a long-standing challenge in machine learning and engineering, especially when the objective function is highly nonconvex, multimodal, or accessible only through black-box evaluations~\cite{naser2025review}.
In such settings, gradient-based methods are sensitive to initialization and often converge to suboptimal local extrema.

A classical approach to global optimization is \emph{homotopy-based optimization}, also known as continuation or graduated non-convexity methods, which replace the original objective with a smoothed surrogate and gradually recover the original landscape.
Among these, \emph{Gaussian homotopy}, which smooths the objective via Gaussian convolution, has been widely used in vision, inverse problems, and adversarial robustness
\cite{blake1987visual, brox2011optical, szegedy2013intriguing, mobahi2015continuation, hazan2016graduated}.
Classical Gaussian homotopy relies on an explicit continuation schedule implemented through a multi-loop procedure, leading to substantial computational overhead and sensitivity to hyperparameter tuning.
To mitigate this cost, \emph{single-loop Gaussian smoothing} methods update the optimization variable and smoothing scale simultaneously
\cite{iwakiri2022single_loop}.
While these methods reduce computational burden, their theoretical guarantees remain local, characterizing convergence only to stationary points of the smoothed objective rather than the global optimum.

A major advance within Gaussian smoothing was introduced by \citet{xu2025power}, which replaces explicit continuation by applying power or exponential transformations before smoothing.
This amplification forces stationary points of the smoothed objective to concentrate near the global maximizer, yielding the first approximate global optimality guarantee in a single-loop Gaussian smoothing setting.
However, strong amplification also sharpens curvature and inflates the variance of Monte Carlo gradient estimators, resulting in pronounced sensitivity to the amplification parameter and a clear trade-off between localization strength and optimization stability.

In parallel, zeroth-order optimization has shown that Gaussian smoothing enables gradient-free optimization via Monte Carlo score-function estimators
\cite{ghadimi2013stochastic, nesterov2017random, chen2019zo}.
While subsequent work generalized the perturbation distribution beyond Gaussian kernels to reduce estimator variance
\cite{gao2022generalizing}, these approaches provide guarantees only for local stationarity and do not address global localization.
\begin{table*}[t]
\caption{Comparison of representative zeroth-order optimization methods.
The table summarizes the evaluation setting (stochastic or deterministic),
the type of optimality guarantee (local stationarity or approximate global localization),
the required assumptions on the objective $f$,
and the iteration complexity to achieve $\|\nabla f\|^{2}\le\varepsilon$.}

\label{tab:zo_optimality_complexity}
\resizebox{\linewidth}{!}{
\centering
\begin{tabular}{lcccc}
\toprule
Method 
& Sto. / Det.
& Optimality
& Assumptions on $f$ 
& Iteration Complexity \\ 
\midrule

RSGF~\cite{ghadimi2013stochastic} 
& Stochastic 
& Local 
& $f$ smooth, Lipschitz gradient 
& $\mathcal{O}(d/\varepsilon^{2})$ \\

ZO-SGD~\cite{nesterov2017random}
& Stochastic 
& Local
& $f$ smooth or weakly smooth 
& $\mathcal{O}^a(M/\varepsilon^{2})$ \\

ZO-AdaMM~\cite{chen2019zo}
& Stochastic
& Local
& $f$ smooth, Lipschitz gradient
& $\mathcal{O}(d^2/\varepsilon^{2})$ \\

ZO-SLGH~\cite{iwakiri2022single_loop}
& Deterministic 
& Local 
& $f$ smooth, Lipschitz gradient 
& $\mathcal{O}^{b}(d/\varepsilon)$ \\

ZO-SLGH~\cite{iwakiri2022single_loop}
& Stochastic 
& Local 
& $f$ smooth, Lipschitz gradient 
& $\mathcal{O}^{b}(d/\varepsilon^{2})$ \\

BeS~\cite{gao2022generalizing}
& Stochastic
& Local
& $f$ smooth, bounded; biased gradients
& $\mathcal{O}^a(M/\varepsilon^{2})$ \\

EPGS~\cite{xu2025power} 
& Deterministic 
& Approx.\ global
& $f$ bounded on compact domain
& $\mathcal{O}(d^{4}/\varepsilon^{2})$ \\

\midrule

Ours
& Deterministic 
& Approx.\ global
& $f$ bounded on compact domain 
& $\mathcal{O}\!\left(d^2/\varepsilon^{2}\right)$ \\

Ours with Variance Reduction
& Deterministic
& Approx.\ global
& $f$ bounded on compact domain
& $\mathcal{O}^c\!\left((1-C/2)\,d^2/\varepsilon^{2}\right)$ \\

\bottomrule
\end{tabular}}
\vspace{1pt}

{\footnotesize
$^{a}$ The dependence on the dimension $d$ is not explicit.
Here, $M$ denotes an upper bound on the second moment of the stochastic gradient estimator,
which may implicitly depend on $d$.

$^{b}$ The original results in \citet{iwakiri2022single_loop} are stated for
$\|\nabla f\|\le \varepsilon$; all bounds are converted to
$\|\nabla f\|^{2}\le \varepsilon$ for consistency.

$^c$ The constant $C$ depends on the objective function $f$ and algorithmic hyperparameters,
but is independent of $\varepsilon$. While this does not change the asymptotic order of the iteration complexity, the exact bound is strictly smaller since $(1 - C/2) < 1$.
}
\vspace{-12pt}
\end{table*}

This paper revisits probabilistic smoothing from a unified perspective.
We view Gaussian homotopy, single-loop smoothing, and amplification-based methods as instances of a broader design space defined by
(i) the smoothing distribution and
(ii) the transformation applied to objective values prior to smoothing.
From this viewpoint, instability arises not from smoothing itself, but from restrictive kernel choices combined with overly aggressive transformations.

Motivated by this observation, we propose \emph{Probabilistic Smoothing with Ratio-Monotone Transforms (ProMoT)}, a single-loop framework that generalizes both components.
We allow a broad class of symmetric unimodal smoothing kernels, including heavy-tailed distributions, and introduce ratio-monotone transformations that subsume power and exponential forms.
Under mild conditions, we show that the global maximizer of the original objective is preserved and that all stationary points of the smoothed objective concentrate near the true optimum without requiring a decreasing smoothing schedule.

Finally, since probabilistic smoothing relies on Monte Carlo gradient estimation, its variance is a fundamental bottleneck.
We introduce \emph{ProMoT-loo}, a variance-reduced variant based on a leave-one-out baseline.
We show that this estimator is unbiased and yields a strictly smaller second-moment bound, and we explicitly quantify how this variance reduction improves the iteration-complexity.
We summarize our contributions as follows:
\begin{itemize}
\item We identify explicit conditions on both smoothing distributions and transformations, covering a broad class of symmetric unimodal kernels, including heavy-tailed distributions beyond Gaussian, and introducing ratio-monotone transformations that strictly generalize the power and exponential transforms of \citet{xu2025power}.

\item Under the proposed distributional and transformation conditions, ProMoT achieves the same $\varepsilon$-approximate global localization guarantees as \citet{xu2025power}.

\item We first introduce a leave-one-out variance reduction scheme for gradient estimation and demonstrate that it strictly improves the iteration-complexity constant.

\item We show that ProMoT is significantly more robust to hyperparameter choices than existing methods, with strong performance on high-dimensional benchmarks.
\end{itemize}

\section{Related Work}

\paragraph{Gaussian homotopy methods.}
Smoothing-based optimization replaces the original objective with a smoothed surrogate, most commonly via Gaussian convolution.
Classical Gaussian homotopy or graduated optimization methods rely on a continuation schedule that gradually decreases the smoothing scale to recover the original landscape \cite{blake1987visual, mobahi2015continuation, hazan2016graduated}.
While effective in practice, such multi-loop procedures incur additional computational overhead and are sensitive to the choice of the smoothing schedule.

To mitigate this issue, single-loop Gaussian homotopy (SLGH) methods update the optimization variable and the smoothing scale simultaneously \cite{iwakiri2022single_loop}.
Under smoothness and Lipschitz-gradient assumptions on $f$, deterministic and stochastic variants of SLGH guarantee convergence to stationary points with iteration complexity
$\mathcal{O}(d/\varepsilon)$ and $\mathcal{O}(d/\varepsilon^{2})$, respectively.
As reported in Table~\ref{tab:zo_optimality_complexity}, these guarantees are purely local, characterizing convergence to stationary points of the smoothed objective rather than to the global maximizer of $f$.

A further line of work removes explicit continuation schedules by modifying the objective prior to Gaussian smoothing.
In particular, exponential and power-based Gaussian smoothing (EPGS) was the first to provide a global optimality guarantee in a Gaussian smoothing framework \cite{xu2025power},
with iteration complexity $\mathcal{O}(d^{4}/\varepsilon^{2})$.

\paragraph{Non-Gaussian kernels and zeroth-order smoothing.}
Zeroth-order optimization enables gradient-free learning by estimating gradients through randomized perturbations. Early work established convergence to stationary points under Gaussian smoothing for nonconvex objectives \cite{ghadimi2013stochastic, nesterov2017random}, with iteration complexity on the order of $\mathcal{O}(d/\varepsilon^{2})$. Subsequent studies generalized the perturbation distribution beyond Gaussians to reduce estimator variance and improve sample efficiency \cite{chen2019zo, gao2022generalizing}. These approaches retain local stationarity guarantees and typically require smoothness or weak smoothness assumptions on $f$.

\paragraph{Relation to the present work.}
The proposed approach builds on amplification-based probabilistic smoothing and extends prior work in two key directions.
First, we generalize both the smoothing kernel and the transformation while preserving approximate global localization on compact domains.
Second, we derive explicit iteration-complexity bounds that quantify the impact of variance reduction in generalized smoothing, showing that it improves the complexity constant without changing the asymptotic rate, as summarized in Table~\ref{tab:zo_optimality_complexity}.

\section{Probabilistic Smoothing with Ratio-Monotone Transforms}

Consider a compact set \(\mathcal{S}\subset\mathbb{R}^{d}\) and a continuous function \(f:\mathcal{S}\to\mathbb{R}\).
The global optimization problem is
{\small\begin{equation}\label{eq:orig-prob}
\max_{\mathbf{x}\in\mathcal{S}} f(\mathbf{x}),
\end{equation}}%
and $\mathbf{x}^{\ast}$ is a global maximizer (i.e., \(\mathbf{x}^{\ast}\in\arg\max_{\mathbf{x}\in\mathcal{S}} f(\mathbf{x})\)).
Direct gradient ascent on \(f\) can converge to a local maximizer depending on the initialization, which motivates smoothing-based approaches.

Gaussian homotopy \cite{mobahi2015continuation} optimizes a smoothed surrogate
{\small\begin{equation}
G_{\sigma}(\boldsymbol{\mu})
:=\mathbb{E}_{\mathbf{X}\sim\mathcal{N}(\boldsymbol{\mu},\sigma I)}
\big[f(\mathbf{X});\mathcal{S}\big],
\end{equation}}%
and gradually decreases the smoothing scale $\sigma>0$ so that maximizers of
$G_{\sigma}$ track the global maximizer $\mathbf{x}^{*}$.
While effective, such continuation schemes require carefully tuned multi-loop
schedules and are computationally sensitive \cite{iwakiri2022single_loop}.
To remove explicit continuation, \citet{xu2025power} proposed amplifying the
objective prior to smoothing, enabling single-loop optimization with fixed
$\sigma$.
Specifically, power and exponential transforms,
{\small\begin{equation}
G_{\theta,\sigma}^{\mathrm{PGS}}(\boldsymbol{\mu})
=\mathbb{E}[f(\mathbf{X})^{\theta};\mathcal{S}],\;\;
G_{\theta,\sigma}^{\mathrm{EPGS}}(\boldsymbol{\mu})
=\mathbb{E}[e^{\theta f(\mathbf{X})};\mathcal{S}],
\end{equation}}%
concentrate stationary points near $\mathbf{x}^{*}$ for large $\theta$.
However, stronger amplification increases curvature and gradient variance,
leading to a stability--localization trade-off and increased sensitivity to
parameter tuning, a phenomenon that will be explicitly illustrated through a
motivating example in a later section.

The present formulation generalizes both lines along two axes.
First, the Gaussian kernel is replaced by a general product density that can include even long-tail distribution.
Let \(p\) be a one–dimensional probability density on \(\mathbb{R}\), define the product kernel \(\mathbf{p}(\mathbf{z}):=\prod_{i=1}^{d}p(z_{i})\), and let
{\small\begin{align}
\mathbf{p}_{\boldsymbol{\mu},\sigma}(\mathbf{x})
:= \sigma^{-d}\prod_{i=1}^{d}p\big((x_{i}-\mu_{i})/\sigma\big)
\end{align}}%
denote the same kernel \emph{shifted to center \(\boldsymbol{\mu}\) and rescaled by \(\sigma\)} (i.e., the original kernel moved to \(\boldsymbol{\mu}\) with bandwidth \(\sigma\)).

\begin{figure*}[t]
    \centering
    \subfloat[][$\sigma=2.0$]{\includegraphics[width=0.23\linewidth]{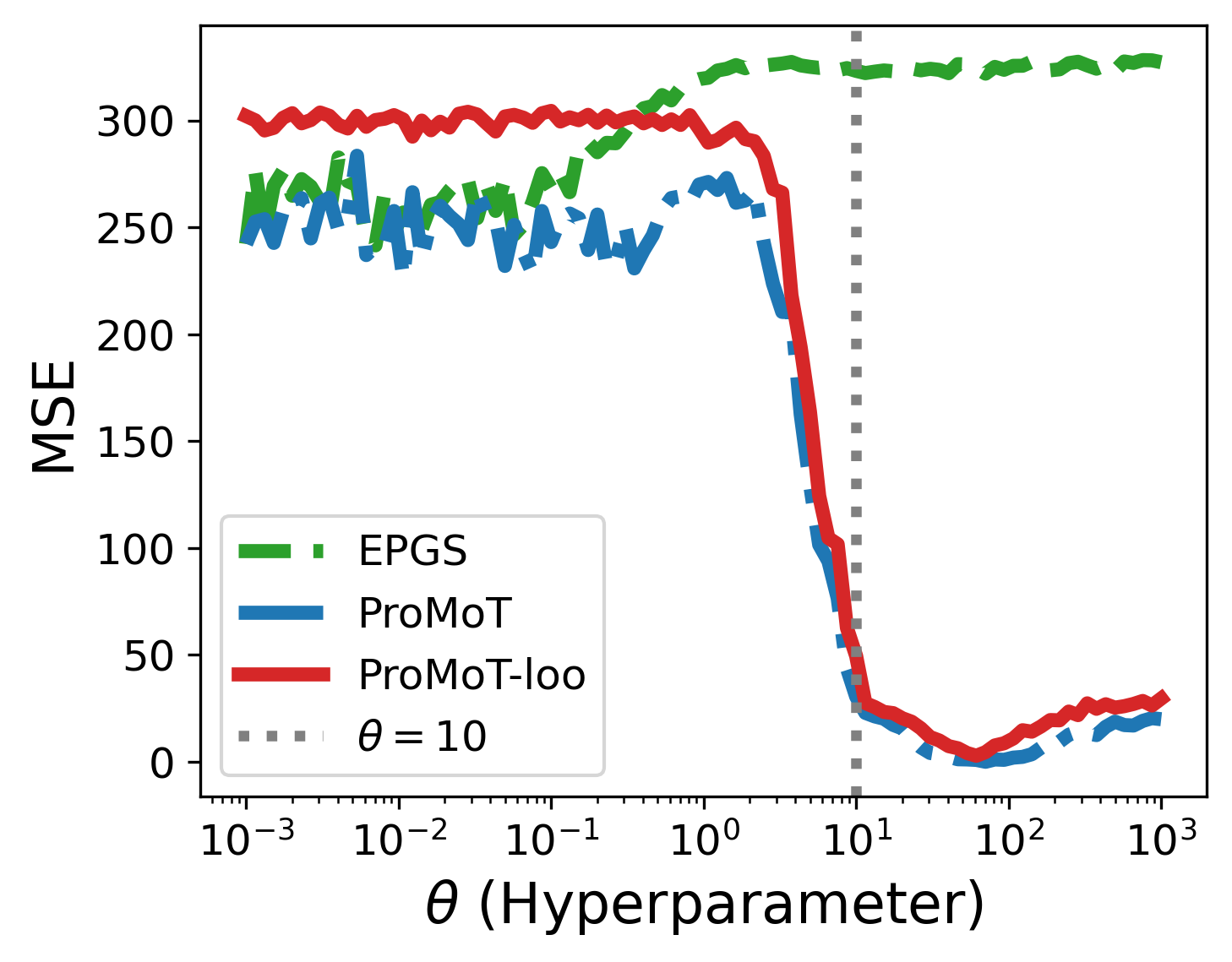}}\hfill
    \subfloat[][$\sigma=2.5$]{\includegraphics[width=0.23\linewidth]{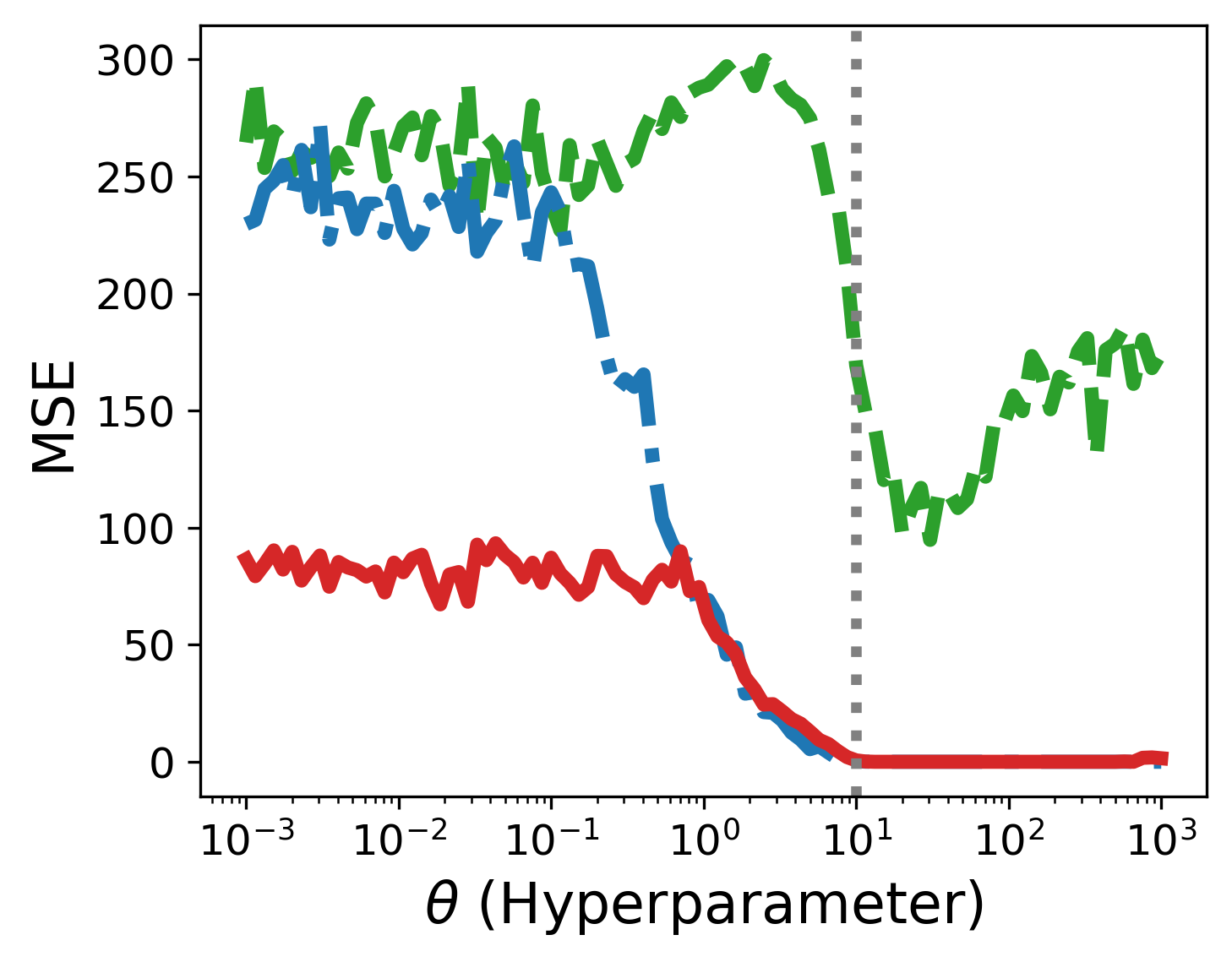}}\hfill
    \subfloat[][$\sigma=3.0$]{\includegraphics[width=0.23\linewidth]{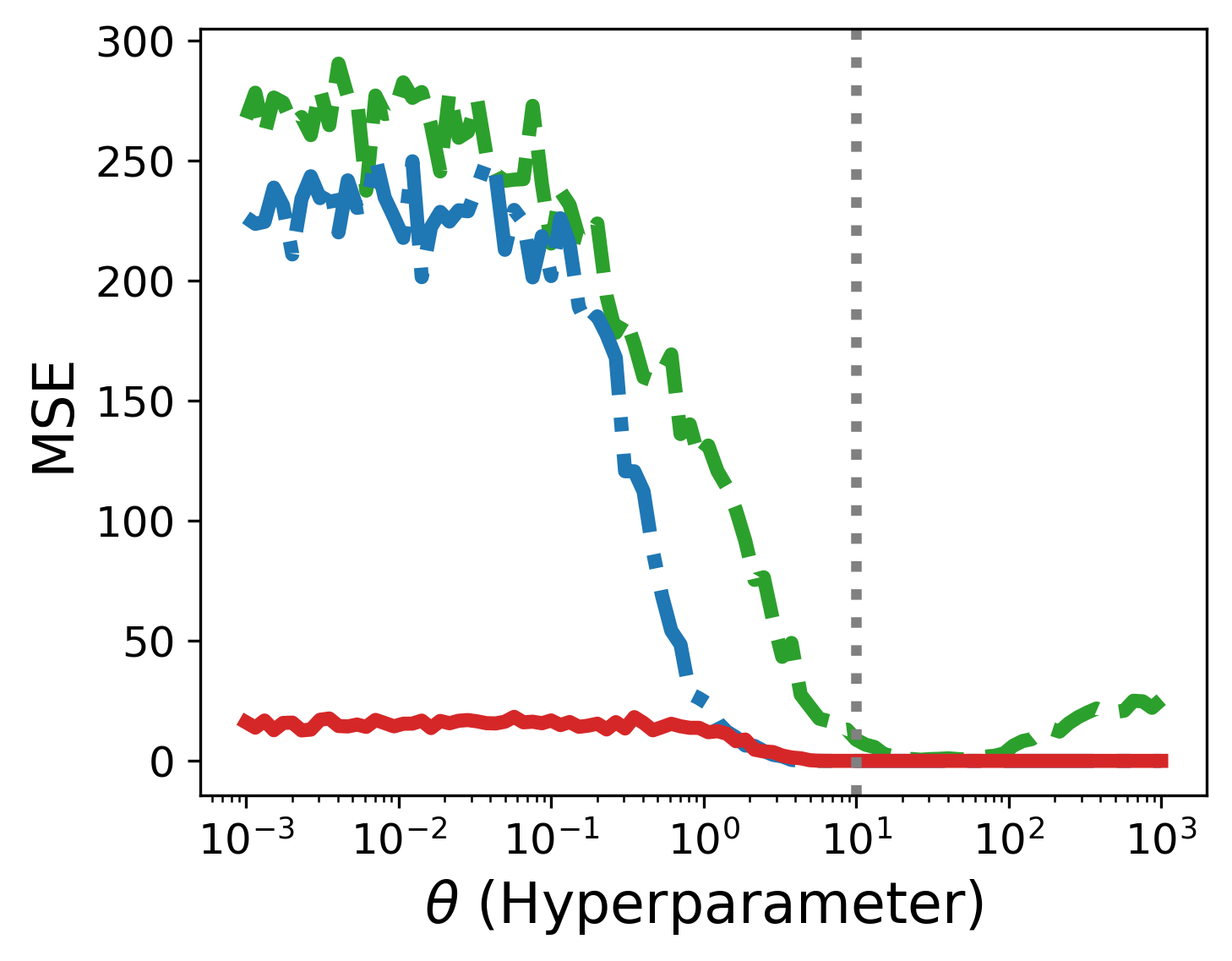}}\hfill
    \subfloat[][$\sigma=3.5$]{\includegraphics[width=0.23\linewidth]{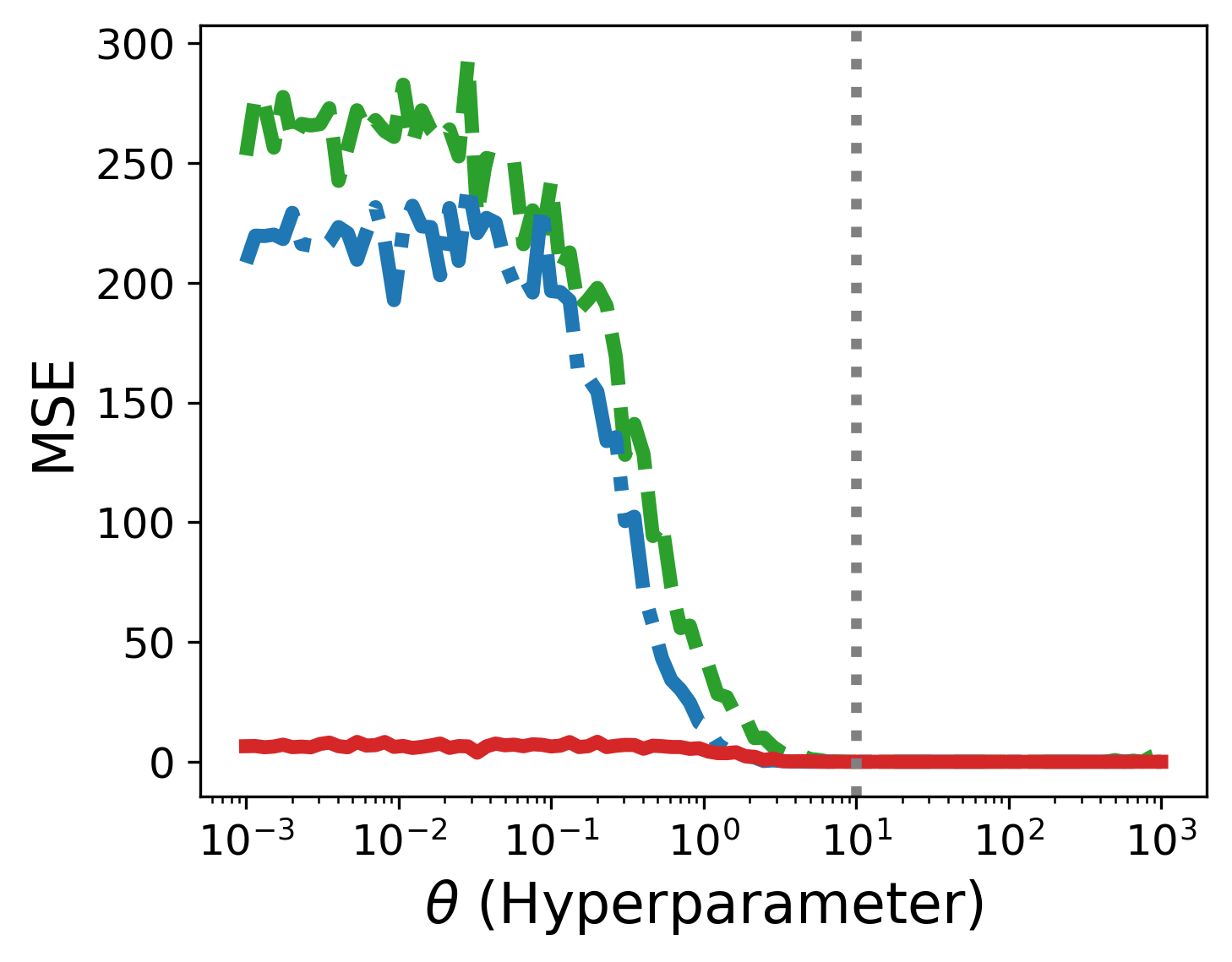}}
    \vspace{-5pt}
    \caption{Mean squared error (MSE) between the true global maximizer $x^*$ of the original objective $f(x)$ and the solution obtained by maximizing the corresponding smoothed objective, plotted as a function of the amplification parameter~$\theta$ (log scale).
    For each value of $\theta$, we run stochastic gradient ascent on the smoothed objective starting from a fixed initialization and record the final estimate $\hat x(\theta)$; the reported MSE is $\|\hat x(\theta)-x^*\|^{2}$, averaged over $10$ runs.}
    \label{fig:sensitivity}
    \vspace{-5pt}
\end{figure*}
\begin{figure*}[t]
    \centering
    \subfloat[][$\theta=10,\,\sigma=2.0$]{\includegraphics[width=0.23\linewidth]{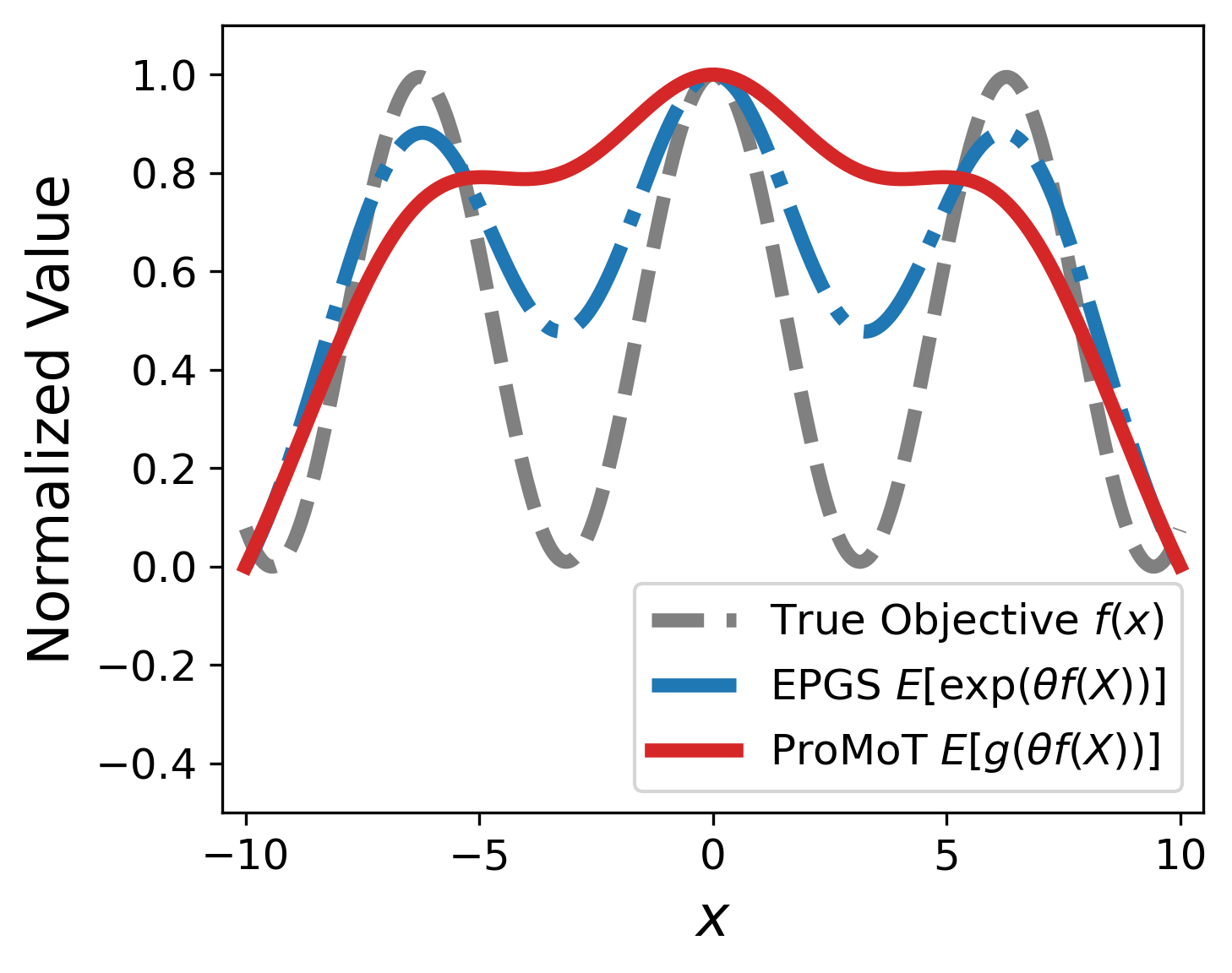}}\hfill
    \subfloat[][$\theta=10,\,\sigma=2.5$]{\includegraphics[width=0.23\linewidth]{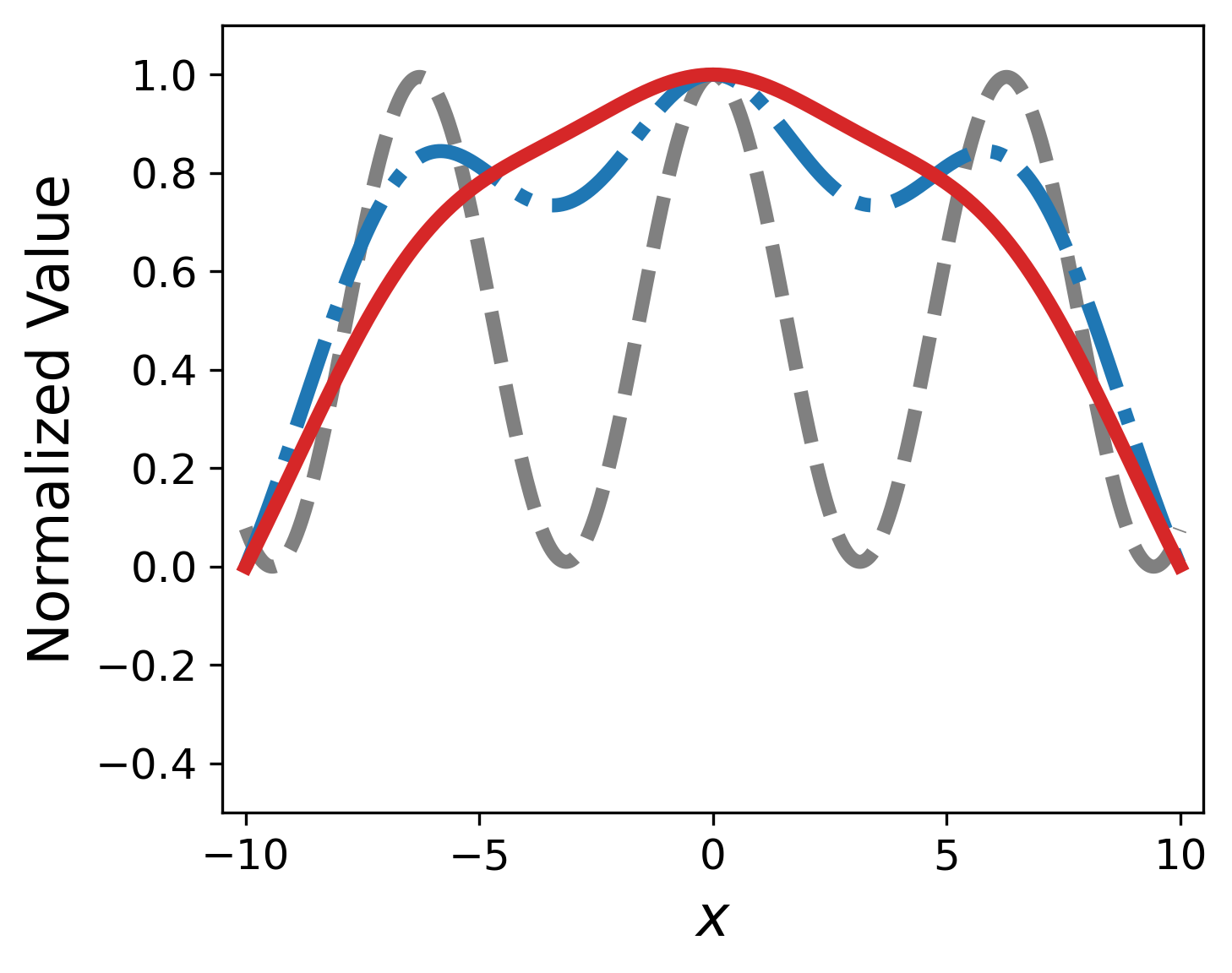}}\hfill
    \subfloat[][$\theta=10,\,\sigma=3.0$]{\includegraphics[width=0.23\linewidth]{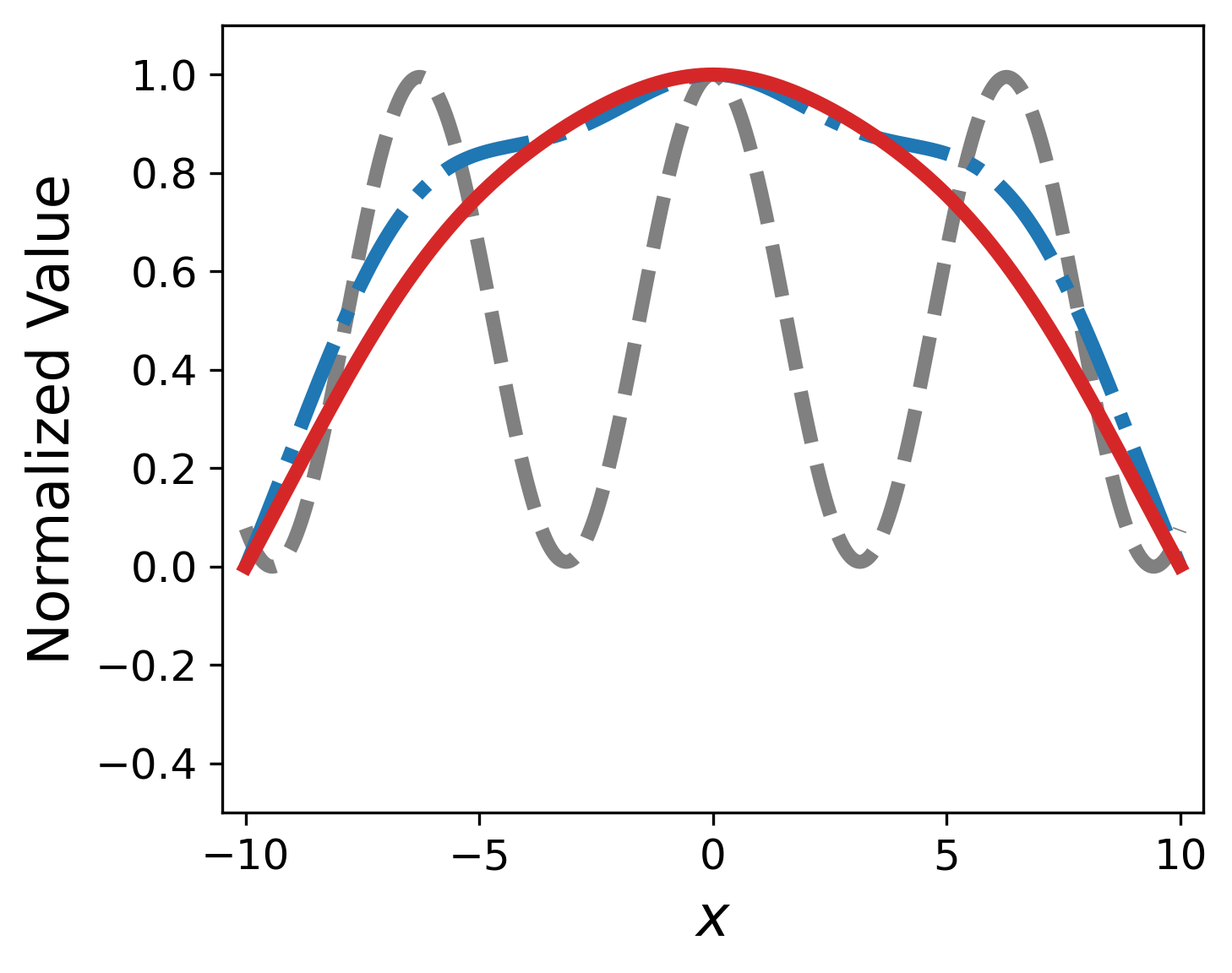}}\hfill
    \subfloat[][$\theta=10,\,\sigma=3.5$]{\includegraphics[width=0.23\linewidth]{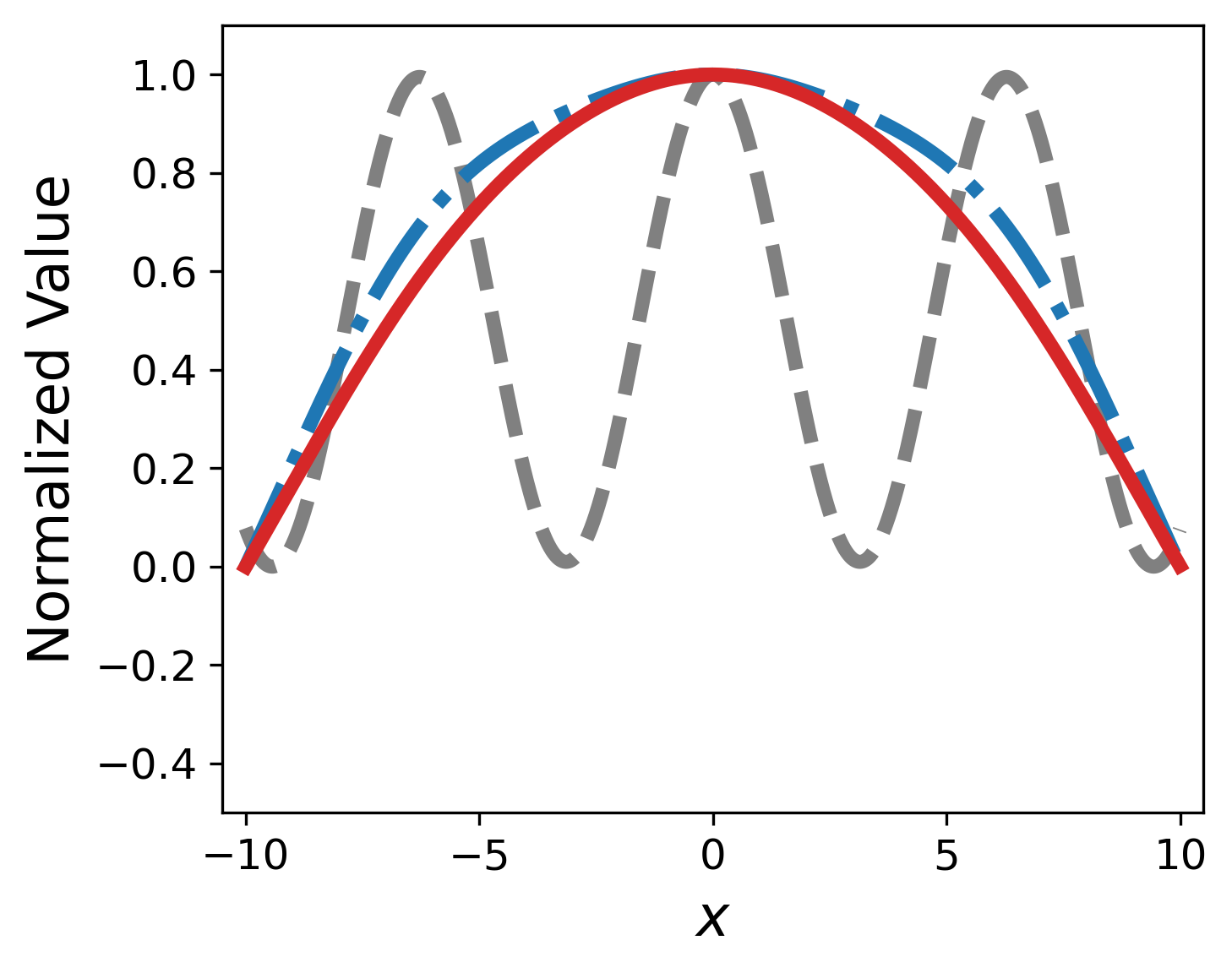}}
    \vspace{-5pt}
    \caption{Effect of probabilistic smoothing on a multimodal one-dimensional objective. The dashed gray curve denotes the original objective $f(x)$. Each colored curve represents a smoothed objective obtained by taking expectations under a shifted smoothing distribution with a fixed amplification parameter $\theta=10$ and varying the smoothing scale $\sigma$. The value $\theta=10$ corresponds to the gray dotted vertical reference line in Figure~\ref{fig:sensitivity}, and this figure visualizes how changing $\sigma$ at that fixed $\theta$ reshapes the optimization landscape.}
    \label{fig:landscape}
    \vspace{-15pt}
\end{figure*}
Second, the power/exponential transform is replaced by a general transform class that preserves order and expands level ratios.
For an amplification parameter \(\theta>0\) and a strictly increasing transform \(g(\theta,\cdot)\), the transformed probabilistic smoothing is defined as
{\small\begin{align}\label{eq:GNsigma}
G_{\theta,\sigma}(\boldsymbol{\mu})
&:= \mathbb{E}_{\mathbf{X}\sim \mathbf{p}_{\boldsymbol{\mu},\sigma}}\big[g\big(\theta,f(\mathbf{X})\big);\mathcal{S}\big].
\end{align}}
Choosing \(p\) as the standard Gaussian and \(g(\theta,y)=y\) recovers Gaussian homotopy \cite{mobahi2015continuation}; choosing \(p\) as Gaussian and \(g(\theta,y)=y^{\theta}\) or \(e^{\theta y}\) recovers power or exponential variants \cite{xu2025power}; the formulation in \eqref{eq:GNsigma} subsumes both by allowing general \(p\) and a general ratio-monotone transform \(g\).
Our goal is to generalize the kernel \(p\) and transform \(g(\theta,\cdot)\) so that maximizing \(G_{\theta,\sigma}(\boldsymbol{\mu})\) is equivalent to maximizing \(f(\mathbf{x})\) and admits a unique global maximizer reachable by gradient ascent.

\paragraph{Optimization of the Smoothed Objective}
We next describe how the generalized smoothed objective
$G_{\theta,\sigma}$ can be optimized in practice.
Under mild regularity conditions on the smoothing kernel,
the gradient of $G_{\theta,\sigma}$ with respect to the
location parameter $\boldsymbol{\mu}$ admits a score-function
representation,
{\footnotesize
\begin{equation}\label{eq:score-grad}
\nabla_{\boldsymbol{\mu}}G_{\theta,\sigma}(\boldsymbol{\mu})
=
\mathbb{E}_{\mathbf{X}\sim \mathbf{p}_{\boldsymbol{\mu},\sigma}}
\!\left[
g\big(\theta,f(\mathbf{X})\big)\,
\nabla_{\boldsymbol{\mu}}\log \mathbf{p}_{\boldsymbol{\mu},\sigma}(\mathbf{X})
\,;\,\mathcal{S}
\right],
\end{equation}}
which enables gradient estimation without requiring derivatives of $f$.
Throughout this work, we adopt this formulation to derive stochastic
optimization algorithms for $G_{\theta,\sigma}$.

Given a mini-batch $\{\mathbf{X}^{(k)}\}_{k=1}^{B}
\sim \mathbf{p}_{\boldsymbol{\mu},\sigma}$, define
$S(\mathbf{x})=\nabla_{\boldsymbol{\mu}}\log \mathbf{p}_{\boldsymbol{\mu},\sigma}(\mathbf{x})$
and $h(\mathbf{x})=g(\theta,f(\mathbf{x}))\,\mathbf{1}_{\{\mathbf{x}\in\mathcal S\}}$.
A Monte Carlo estimator of the gradient is then given by
{\footnotesize
\begin{equation}\label{eq:score-est}
\widehat{\nabla}_{\!\boldsymbol{\mu}}G_{\theta,\sigma}(\boldsymbol{\mu})
=
\frac{1}{B}\sum_{k=1}^{B}
h(\mathbf{X}^{(k)})\,S(\mathbf{X}^{(k)}).
\end{equation}}%
The parameter $\boldsymbol{\mu}$ is updated by stochastic gradient ascent,
{\footnotesize\begin{equation}\label{eq:update-rule}
\boldsymbol{\mu}_{t+1}
=
\boldsymbol{\mu}_{t}
+
\eta_t\,
\widehat{\nabla}_{\!\boldsymbol{\mu}}G_{\theta,\sigma}(\boldsymbol{\mu}_{t}),
\end{equation}}%
where $\eta_t>0$ is a step size.
Notably, the global localization guarantees established in this work
do not rely on decreasing the smoothing scale $\sigma$ over time.

\paragraph{Coordinate-wise smoothing.}
The formulation readily extends to anisotropic smoothing.
Instead of a single isotropic scale $\sigma$, we may use a diagonal
matrix $\Sigma=\mathrm{diag}(\sigma_1^2,\ldots,\sigma_d^2)$ and define
\(\mathbf{p}_{\boldsymbol{\mu},\Sigma}(\mathbf{x})=\prod_{i=1}^{d}\sigma_i^{-1}p\!\big((x_i-\mu_i)/\sigma_i\big).\)
This allows different smoothing resolutions across coordinates and is
useful when the objective exhibits heterogeneous curvature or sensitivity
along different dimensions.

\paragraph{Variance reduction via leave-one-out baselines.}
While amplification improves localization, it can increase the variance
of the stochastic gradient estimator.
To mitigate this effect, we introduce a leave-one-out baseline that
centers each sample contribution using information from the remaining
samples in the same mini-batch, while preserving unbiasedness.

For the baseline, define $U := \sum_{j=1}^{B} h(\mathbf{X}^{(j)})\,\|S(\mathbf{X}^{(j)})\|^{2}$ and $V := \sum_{j=1}^{B} \|S(\mathbf{X}^{(j)})\|^{2}$.
For each index $k$, the leave-one-out baseline is given by
{\small
\begin{equation}\label{eq:loo-baseline}
b_k=(U-h(\mathbf{X}^{(k)})\,\|S(\mathbf{X}^{(k)})\|^{2})\bigg/(V-\|S(\mathbf{X}^{(k)})\|^{2}+\lambda),
\end{equation}}%
where $\lambda>0$ is a small ridge parameter for numerical stability.
The resulting variance-reduced estimator is
{\small
\begin{equation}\label{eq:loo-est}
\widehat{\nabla}_{\!\boldsymbol{\mu}}^{\,\mathrm{loo}}
G_{\theta,\sigma}(\boldsymbol{\mu})
=
\frac{1}{B}\sum_{k=1}^{B}
\big(h(\mathbf{X}^{(k)})-b_k\big)\,S(\mathbf{X}^{(k)}).
\end{equation}}%
This estimator remains unbiased and admits a strictly smaller second-moment
bound than the baseline-free estimator.
As shown in the theoretical analysis, this variance reduction leads to an
explicit improvement in the iteration complexity constant, without altering
the asymptotic convergence rate.
\subsection{A Motivating Example: Parameter Sensitivity in Probabilistic Smoothing}\label{subsec:motivating-example}

We illustrate the effect of transformation and kernel generalization using a one-dimensional nonconvex objective $f(\mathbf{x})$ with a unique global maximizer $\mathbf{x}^*$.
For each method and each value of the amplification parameter~$\theta$, we construct the corresponding probabilistically smoothed surrogate objective by applying the specified transformation and smoothing operator.
Starting from a fixed initialization, we run stochastic gradient ascent on each smoothed objective for a fixed number of iterations and record the final estimate $\hat{\mathbf{x}}(\theta)$.
The reported error in Figure~\ref{fig:sensitivity} is the mean squared error $\|\hat{\mathbf{x}}(\theta)-\mathbf{x}^*\|^2$, averaged over multiple independent runs to account for stochasticity.

The key advantage of generalizing both the transformation and the smoothing kernel is improved optimization stability.
As shown in Figure~\ref{fig:sensitivity}, exponential-based smoothing (EPGS) achieves low error only within a narrowly tuned range of the amplification parameter~$\theta$, and its performance degrades rapidly under mild misspecification.
In contrast, ProMoT and ProMoT-loo maintain consistently low error over several orders of magnitude of~$\theta$, indicating substantially reduced sensitivity to hyperparameter choices.
In particular, ProMoT-loo exhibits pronounced robustness with respect to the choice of~$\theta$.

To further understand the origin of this stability, Figure~\ref{fig:landscape} visualizes the smoothed optimization landscapes produced by different methods.
Here, we directly plot the smoothed surrogate objectives obtained from the same base function $f(x)$ at a fixed amplification level $\theta=10$ while varying the smoothing scale $\sigma$, without performing any optimization.
The dashed gray curve denotes the original objective, and each colored curve corresponds to a different choice of $\sigma$.

The figure reveals a qualitative difference in the resulting smoothed landscapes.
Under exponential amplification, multiple local extrema persist after smoothing, and increasing~$\theta$ amplifies irregularities and leads to unstable gradient behavior.
By contrast, combining order-preserving, moderated transformations with more flexible smoothing kernels suppresses spurious extrema and yields an effectively unimodal surrogate.
This enhanced landscape regularity explains why ProMoT and ProMoT-loo converge reliably across a wide range of hyperparameters, without delicate tuning.

\section{Complexity Analysis}
We establish $\delta$-approximate global optimality and iteration complexity guarantees under a set of structural assumptions on the smoothing kernel and the transformation.
These assumptions are introduced solely for analysis and enable control of smoothness, variance, and global optimality properties of the smoothed objective.

\paragraph{Assumptions on Smoothing Probability}\label{subsec:kernel_assump}
To extend the Gaussian smoothing framework to a more general class of smoothing distributions, we identify two fundamental distributional conditions.

\begin{assumption}[Fisher Information and Regularity]\label{asm:p-score}
Assume $p\in C^{2}$. Then, there exist two constants,  $I:=\int_{\mathbb{R}}(p'(z))^{2} / p(z)\,dz<\infty$ and $K:=\int_{\mathbb{R}}|p''(z)|\,dz<\infty$
where $I$ is the Fisher information of $p$ and $K$ reflects a regularity condition controlling the total variation of $p'$.
\end{assumption}
\begin{assumption}[Symmetry and Unimodality]\label{asm:p-sym-uni}
Assume $p$ is symmetric and unimodal so that $p(z)=p(-z)$ and $p'(z)\le 0$ for $z>0$ to ensure $\lim_{z\rightarrow\infty}p(z)=0$.
\end{assumption}
\begin{theorem}[Uniform non-degeneracy on a compact window]\label{asm:p-non-deg}
Fix $\delta>0$. For every $r>0$ and every window center $c$ such that $|c|>r+\delta$
(i.e., $[c-r-\delta,c+r+\delta]$ does not intersect $0$),
there exist constants $C_{\delta}(c,r)>0$ and $c_{\delta}(c,r)>0$ such that
{\small\begin{align}
\inf_{|x-c|\le r}\big[F(x+\delta)-F(x-\delta)\big] &\ge C_{\delta}(c,r),\\
\inf_{|x-c|\le r}\big|p(x+\delta)-p(x-\delta)\big| &\ge c_{\delta}(c,r).
\end{align}}%
\end{theorem}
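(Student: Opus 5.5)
The plan is to reduce both inequalities to a continuity-and-compactness argument on the closed window $W:=[c-r,\,c+r]$. Here $F(x)=\int_{-\infty}^{x}p(z)\,dz$ denotes the distribution function of $p$. I expect the real obstacle to be \emph{strict} positivity at each individual point, not the passage to a uniform bound. Assumption~\ref{asm:p-sym-uni} only gives $p'\le 0$ on $(0,\infty)$. A symmetric unimodal density with compact support, or with a flat plateau, would make both infima vanish on suitable windows. So I will make explicit, and use, the additional non-degeneracy implicit in the statement: $p>0$ on $\mathbb{R}$, and $p$ is strictly decreasing on $(0,\infty)$ (no interval on which $p'\equiv 0$). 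This holds for the Gaussian, Student-$t$, logistic and smoothed Laplace kernels the paper has in mind.

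\emph{First inequality.} Define $\phi(x):=F(x+\delta)-F(x-\delta)=\int_{x-\delta}^{x+\delta}p(z)\,dz$. Since $p$ is continuous (indeed $C^2$ by Assumption~\ref{asm:p-score}), $\phi$ is continuous, in fact $C^1$ with $\phi'(x)=p(x+\delta)-p(x-\delta)$. For each $x$, $\phi(x)$ is the integral of a continuous, strictly positive function over an interval of length $2\delta>0$, so $\phi(x)>0$. On the compact set $W$ the continuous function $\phi$ attains its minimum, and that minimum is positive. Set $C_\delta(c,r):=\min_{x\in W}\phi(x)$. This step does not even use $|c|>r+\delta$.

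\emph{Second inequality.} Define $\psi(x):=p(x+\delta)-p(x-\delta)$, which is continuous. By the symmetry $p(z)=p(-z)$ we have $|\psi(-x)|=|\psi(x)|$, so we may assume without loss of generality that $c>r+\delta$. Then for every $x\in W$,
\[
0< c-r-\delta\le x-\delta< x+\delta.
\]
This is exactly where the hypothesis that $[c-r-\delta,c+r+\delta]$ avoids $0$ is used: both evaluation points lie on the same side of the mode. Strict decrease of $p$ on $(0,\infty)$ then gives
\[
-\psi(x)=p(x-\delta)-p(x+\delta)=-\int_{x-\delta}^{x+\delta}p'(z)\,dz>0.
\]
Here $p'\le 0$ and $p'$ does not vanish identically on the interval. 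Hence $|\psi|$ is continuous and strictly positive on the compact set $W$. Its minimum $c_\delta(c,r):=\min_{x\in W}|\psi(x)|$ is therefore positive, which gives the second bound.

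If one insists on only the stated assumptions, I would instead state the result as a lemma restricted to kernels with full support and strictly monotone tails. I would then remark that without these conditions the conclusion can fail (for example, for a uniform density on $[-1,1]$ with $c$ large). This is why I treat identifying this hypothesis as the main step, rather than the routine compactness argument.
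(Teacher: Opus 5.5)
Your proposal is correct and is essentially the paper's proof. The appendix likewise splits the claim into a CDF-gap lemma and a PDF-difference lemma, adds the same extra hypotheses you identified (strict positivity of $p$, strict decrease on $(0,\infty)$), reduces to $c>r+\delta$ by symmetry, and takes the positive minimum of the continuous function $p(x-\delta)-p(x+\delta)$ over the compact window. The only cosmetic difference is that for the first bound the paper gives the explicit constant $2\delta\inf_{[c-r-\delta,\,c+r+\delta]}p$ rather than minimizing $\phi$ directly; also, strict decrease on $(0,\infty)$ already forces $p>0$ everywhere, so your two added hypotheses reduce to one.
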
%
The proof can be found in Appendix~\ref{app:p-non-deg}.
This theorem provides the sign structure and tail behavior needed for localization arguments, ensuring that derivative signs are consistent and that mass vanishes at infinity.
Representative smoothing kernels satisfying
Assumptions~\ref{asm:p-score} and~\ref{asm:p-sym-uni}
are summarized in Table~\ref{tab:kernels},
together with the corresponding constants $I$ and $K$ that govern variance and curvature bounds in the subsequent analysis.

\begin{table}[t]
\caption{Examples of smoothing distributions satisfying Assumptions~\ref{asm:p-score} and \ref{asm:p-sym-uni}.} \label{tab:kernels}
\vspace{-5pt}
\centering
\resizebox{\linewidth}{!}{%
\begin{tabular}{lcc}
\toprule
Kernel $p(z)$                                                                  & $I=\int \frac{(p')^{2}}{p}$ & $K=\int |p''|$    \\ \midrule
Gaussian $\ \tfrac{1}{\sqrt{2\pi}}e^{-z^{2}/2}$                                            & $1$                         & $0.96749$            \\
Logistic $\ e^{-z}/(1+e^{-z})^{2}$                                           & $1/3$           & $0.38496$            \\
Student-$t_{\nu}$                                                                           & $\dfrac{\nu+1}{\nu+3}$      & (numeric)           \\
$\nu=1$ (Cauchy) & $0.50000$                   & $0.82691$           \\
$\nu=3$ & $0.66667$ & $0.87870$           \\
$\nu=10$ & $0.84615$ & $0.92883$           \\
Hyper. Secant $\ \tfrac{1}{2}\mathrm{sech}\!\big(\pi z/2\big)$ & $\pi^{2}/8$     & $\pi/2$ \\
Gen. Gaussian $\ \beta e^{-|z|^{\beta}}/(2\Gamma(1/\beta))$ & $4.05587$       & $3.36400$  \\ \bottomrule
\end{tabular}}
\end{table}
\vspace{-5pt}
\begin{table}[t]
\centering
\caption{Examples of ratio-monotone transformations $g(\theta,y)$.}
\vspace{-5pt}
\label{tab:transforms}{\small
\begin{tabular}{ll}
\toprule
$g(\theta,y)$
& $ \frac{d}{d\theta}\!\left(\frac{g(\theta,a)}{g(\theta,b)}\right)$ for $a>b$ \\
\midrule
$(y+c)^{\theta}$
& $
\frac{(a+c)^{\theta}}{(b+c)^{\theta}}\,
\log\!\frac{a+c}{\,b+c\,}>0
$ \\

$e^{\theta y}$
& $
(a-b)\,e^{\theta(a-b)}>0
$ \\

$e^{\theta y^{\alpha}}$
& $\displaystyle
(a^{\alpha}-b^{\alpha})\,e^{\theta(a^{\alpha}-b^{\alpha})}>0
$ \\

$(y+c)^{\beta}e^{\theta y}$
& $
(a-b)\,\frac{(a+c)^{\beta}}{(b+c)^{\beta}}\,e^{\theta(a-b)}>0
$ \\

$\log(1+e^{\theta y})$
& $
\frac{
a\,e^{\theta a}(1+e^{\theta b})
-
b\,e^{\theta b}(1+e^{\theta a})
}{
(1+e^{\theta a})(1+e^{\theta b})
}
>0$ \\

$\sinh(\theta y)$
& $\frac{\sinh(\theta a)}{\sinh(\theta b)}
\left(a\,\coth(\theta a)-b\,\coth(\theta b)\right)>0$ \\
\bottomrule
\end{tabular}
}
\vspace{-15pt}
\end{table}
\paragraph{Assumptions on Transformation}
Our framework also extends to a broad class of transformations.
Specifically, we impose two structural assumptions on the transformation.
\begin{assumption}[Monotonicity]\label{asm:g-essential}
For any fixed $\theta>0$, $y\mapsto g(\theta,y)$ is strictly increasing with respect to $y\in\mathbb{R}$.
\end{assumption}
\begin{assumption}[Ratio-Monotonicity]\label{asm:g-ratio}
For any fixed $a>b>0$, the map $\theta\mapsto g(\theta,a)/g(\theta,b)$ is strictly increasing.
\end{assumption}
Concrete examples of transformations satisfying
Assumption~\ref{asm:g-ratio} are listed in Table~\ref{tab:transforms},
illustrating that the condition accommodates classical power and exponential
forms as well as more general constructions.

\subsection{Invariance of Global-Optimum}
\begin{theorem}\label{thm:invariance}
    Suppose the smoothing kernel $p(x)$ and transformation $g(\theta,y)$ satisfy Assumption~\ref{asm:p-score}--\ref{asm:g-ratio}.
    Then, for any $M>0$ and $\delta>0$ such that $\mathrm{cube}(\mathbf{x}^{*};\delta):=\{\mathbf{x} | \forall i\in[d], |x_{i}-x_{i}^{*}|\leq \delta\}$ and $\mathrm{cube}(\mathbf{x}^{*};\delta)\subset \mathcal{S}$, there exists $\theta_{\delta,\sigma,M}>0$, such that whenever $\theta>\theta_{\delta,\sigma,M}$, for any $\|\boldsymbol{\mu}\|_{\infty} < M$ and any $i\in[d]$, we have $\partial G_{\theta,\sigma}(\boldsymbol{\mu})/\partial \mu_{i}>0$ if $\mu_{i} < x^{*}_{i} - \delta$, and $\partial G_{\theta,\sigma}(\boldsymbol{\mu})/\partial \mu_{i} < 0$ if $\mu_{i} > x^{*}_{i} + \delta$. Here, $\mu_{i}$ and $x^{*}_{i}$ denote the $i$th dimension of $\boldsymbol{\mu}$ and $\mathbf{x}^{*}$.
\end{theorem}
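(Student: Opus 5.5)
Write the partial derivative with respect to $\mu_i$ by differentiating under the integral, using the product structure of the kernel. With $\mathbf{p}_{\boldsymbol{\mu},\sigma}(\mathbf{x})=\sigma^{-d}\prod_j p((x_j-\mu_j)/\sigma)$, we have
\begin{equation*}
\frac{\partial G_{\theta,\sigma}}{\partial \mu_i}(\boldsymbol{\mu})
= -\sigma^{-1}\int_{\mathcal S} g(\theta,f(\mathbf{x}))\, \sigma^{-1}p'\!\big(\tfrac{x_i-\mu_i}{\sigma}\big)\prod_{j\neq i}\sigma^{-1}p\!\big(\tfrac{x_j-\mu_j}{\sigma}\big)\,d\mathbf{x}.
\end{equation*}
Differentiation under the integral is justified because $\mathcal S$ is compact, $g(\theta,f)$ is bounded there, and $\int|p'|<\infty$. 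The last fact follows from Assumption~\ref{asm:p-score} by Cauchy--Schwarz, $\int |p'|\le \sqrt{I}$. Treat the case $\mu_i<x_i^*-\delta$; the other case follows from the symmetry $p(z)=p(-z)$. By Assumption~\ref{asm:p-sym-uni}, $-p'((x_i-\mu_i)/\sigma)\ge 0$ exactly when $x_i\ge\mu_i$. The integrand is therefore nonnegative on the half-space $\{x_i>\mu_i\}$, which contains $\mathrm{cube}(\mathbf{x}^*;\delta)$, and possibly negative on $\{x_i<\mu_i\}$.

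Next, split the integral into a positive contribution from a small cube around $\mathbf{x}^*$ and a remainder. Since $g(\theta,\cdot)$ is increasing, one may first shift $g$ so that the relevant values are positive and the ratio argument applies; alternatively, work with $g>0$ as in the table examples. By continuity of $f$, choose $\epsilon$ with $0<\epsilon<\delta$ and a level gap such that $f(\mathbf{x})\ge a:=f(\mathbf{x}^*)-\eta$ on $\mathrm{cube}(\mathbf{x}^*;\epsilon)$. Let $b<a$ bound $f$ on the part of $\mathcal S$ where the integrand can be negative. This requires a uniqueness or separation argument: the negative region $\{x_i<\mu_i\}$ lies at distance greater than $\delta$ from $x_i^*$ in coordinate $i$, so by compactness $\sup f<f(\mathbf{x}^*)$ there if $\mathbf{x}^*$ is the unique maximizer, and we choose $\eta$ small. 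The positive part is then bounded below by $g(\theta,a)$ times a kernel mass. Using $\|\boldsymbol{\mu}\|_\infty<M$ and the compactness of $\mathcal S$, the arguments $(x_j-\mu_j)/\sigma$ range over a compact window. Theorem~\ref{asm:p-non-deg} then supplies uniform lower bounds: $c_\delta(c,r)$ bounds the difference $|p(\cdot+\delta)-p(\cdot-\delta)|$, equivalently $-p'$ integrated over the $\epsilon$-interval in coordinate $i$, and $C_\delta(c,r)$ bounds the masses $F(x+\delta)-F(x-\delta)$ in coordinates $j\neq i$. The result is a bound $\ge g(\theta,a)\,\kappa$ with $\kappa>0$ independent of $\boldsymbol{\mu}$ in the box. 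The negative part is bounded in absolute value by $g(\theta,b)\,\sigma^{-1}\int|p'|\le g(\theta,b)\sqrt{I}/\sigma$.

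Hence $\partial_{\mu_i}G_{\theta,\sigma}>0$ provided
\begin{equation*}
\frac{g(\theta,a)}{g(\theta,b)}>\frac{\sqrt I}{\sigma\kappa}.
\end{equation*}
By Assumption~\ref{asm:g-ratio}, the ratio is strictly increasing in $\theta$. To obtain a finite threshold $\theta_{\delta,\sigma,M}$, the ratio must in fact diverge. This is the main obstacle, since strict monotonicity alone gives only a limit, which might be finite. I would handle it by showing that the limit is $+\infty$ for the transforms of interest (all entries of Table~\ref{tab:transforms} diverge); failing that, I would read Assumption~\ref{asm:g-ratio} as implicitly including unbounded growth. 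The threshold is uniform in $\boldsymbol{\mu}$ and $i$ because $\kappa$, $a$, $b$ were chosen uniformly over $\|\boldsymbol{\mu}\|_\infty<M$. The other delicate step is the region where $\mu_i$ is far below $x_i^*-\delta$ while $x_i$ stays near $\mu_i$. There the negative region shrinks, but the positive weight $-p'$ near $\mathbf{x}^*$ can also be small. Theorem~\ref{asm:p-non-deg}, applied with window center $c=(x_i^*-\mu_i)/\sigma$ bounded away from $0$ and bounded by $(M+\sup_{\mathcal S}|x|)/\sigma$, is exactly what keeps $\kappa$ uniform.
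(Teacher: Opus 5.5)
Your argument is essentially the paper's proof. You isolate a small cube around $\mathbf{x}^*$ on which $\partial_{\mu_i}\mathbf{p}_{\boldsymbol{\mu},\sigma}$ has a fixed sign and bound that piece below by $g(\theta,a)$ times the CDF and pdf gaps from Theorem~\ref{asm:p-non-deg}. You bound the remainder by $g(\theta,b)\,\sigma^{-1}\int|p'|$ and finish with ratio monotonicity. The only differences are cosmetic: you bound just the half-space $\{x_i<\mu_i\}$ instead of the whole complement of the cube, and you use $\sqrt{I}$ where the paper uses $2p(0)$.

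The two caveats you raise are genuine, and the paper's proof silently relies on both. The separation $\sup f<f(\mathbf{x}^*)$ away from the cube needs a unique maximizer. The paper also simply asserts that a large enough $\theta$ exists, which, as you note, requires $g(\theta,a)/g(\theta,b)\to\infty$ rather than mere strict monotonicity.
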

\begin{remark}
See Appendix~\ref{app:invariance} for the proof. The inequalities in Theorem~\ref{thm:invariance} imply that any stationary point of $G_{\theta,\sigma}$ within the region $\{\|\boldsymbol{\mu}\|_{\infty}<M\}$ must lie inside the $\mathrm{cube}(\mathbf{x}^{*};\delta)$.  
In other words, the gradient cannot vanish outside this cube, since each coordinate derivative is strictly positive to the left of $x_i^{*}-\delta$ and strictly negative to the right of $x_i^{*}+\delta$.  
Thus, all stationary points (if any exist) are confined to a neighborhood of the true maximizer $\mathbf{x}^{*}$.
While \citet{xu2025power} analyzes a similar property under an $\ell_2$-ball with Gaussian kernels, such analysis becomes intractable for general product kernels. 
We instead use an $\ell_\infty$-cube, which better suits the product kernel and coordinate-wise control.
\end{remark}

\begin{theorem}\label{thm:existence-general}
Fix $\sigma>0$ and $\theta>0$. 
Under Assumption~\ref{asm:p-score}--\ref{asm:g-ratio},
if there exists a unique global maximizer, then the objective $G_{\theta,\sigma}(\boldsymbol{\mu})$ admits the global maximizer.
\end{theorem}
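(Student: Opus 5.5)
The plan is to show that $G_{\theta,\sigma}$ is continuous on $\mathbb{R}^d$ and that its supremum is attained on a bounded set. The argument is a Weierstrass-type one combined with control of the tails.

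\emph{Continuity and boundedness.} The integrand is $h(\mathbf{x})=g(\theta,f(\mathbf{x}))\mathbf{1}_{\{\mathbf{x}\in\mathcal S\}}$. Since $f$ is continuous on the compact set $\mathcal S$ and $g(\theta,\cdot)$ is monotone by Assumption~\ref{asm:g-essential}, $h$ is a bounded measurable function with compact support. Write
\[
G_{\theta,\sigma}(\boldsymbol{\mu})=\int h(\mathbf{x})\,\mathbf{p}_{\boldsymbol{\mu},\sigma}(\mathbf{x})\,d\mathbf{x}.
\]
Then $|G(\boldsymbol{\mu})-G(\boldsymbol{\nu})|\le \|h\|_\infty\|\mathbf{p}_{\boldsymbol{\mu},\sigma}-\mathbf{p}_{\boldsymbol{\nu},\sigma}\|_{L^1}$, and the right-hand side tends to $0$ as $\boldsymbol{\nu}\to\boldsymbol{\mu}$ by continuity of translation in $L^1$. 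Alternatively, one can use $\int|p'|\le\sqrt{I}$, which follows from Assumption~\ref{asm:p-score} by Cauchy--Schwarz, to obtain Lipschitz continuity. Hence $G$ is continuous and bounded.

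\emph{Decay at infinity.} Since $\mathcal S$ is compact, take $\mathcal S\subset[-R,R]^d$. Then
\[
|G(\boldsymbol{\mu})|\le \|h\|_\infty\,\mathbb{P}_{\mathbf{X}\sim\mathbf{p}_{\boldsymbol{\mu},\sigma}}\bigl(\mathbf{X}\in[-R,R]^d\bigr)\le \|h\|_\infty \min_i\bigl[F((R-\mu_i)/\sigma)-F((-R-\mu_i)/\sigma)\bigr].
\]
By Assumption~\ref{asm:p-sym-uni} the mass of $p$ vanishes at infinity, so this bound tends to $0$ as $\|\boldsymbol{\mu}\|_\infty\to\infty$.

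\emph{Existence via a positive value.} It remains to exhibit one point with $G>0$. If $g(\theta,\cdot)>0$ on $f(\mathcal S)$, which is the natural regime for Assumption~\ref{asm:g-ratio} with $a>b>0$, then at $\boldsymbol{\mu}=\mathbf{x}^*$ we have $G(\mathbf{x}^*)>0$. This holds because $\mathbf{p}_{\mathbf{x}^*,\sigma}$ puts positive mass on $\mathrm{cube}(\mathbf{x}^*;\delta)\subset\mathcal S$: unimodality and symmetry make $p$ positive near $0$, and Theorem~\ref{asm:p-non-deg} can also be invoked for positivity of $F(x+\delta)-F(x-\delta)$. With $\varepsilon:=G(\mathbf{x}^*)>0$, choose $M$ so that $|G|<\varepsilon$ whenever $\|\boldsymbol{\mu}\|_\infty\ge M$. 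The supremum is then the maximum of $G$ over the compact set $\{\|\boldsymbol{\mu}\|_\infty\le M\}$, which is attained by continuity.

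\emph{Main obstacle.} I expect the sign issue to be the hard step. If $g(\theta,f)$ can be negative, the supremum could be the limit value $0$ approached at infinity, and it would not be attained. I would handle this in one of two ways. The first is to note that adding a constant shift, i.e.\ replacing $g$ by $g-g(\theta,\min f)$, does not change maximizers modulo the factor $\mathbb{P}(\mathbf{X}\in\mathcal S)$; this is not exact, so I would prefer the second. The second is to restrict to nonnegative $g$ on $f(\mathcal S)$, as all examples in Table~\ref{tab:transforms} are after shifting $f$ to be positive, which is implicit in Assumption~\ref{asm:g-ratio}. The uniqueness of the global maximizer of $f$ is not needed for existence. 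It matters only for localizing the maximizer inside $\mathrm{cube}(\mathbf{x}^*;\delta)$ through Theorem~\ref{thm:invariance} when $\theta$ is large, and I would mention this as a corollary.
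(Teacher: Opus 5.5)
Your proposal is correct and follows essentially the paper's route: continuity of $G_{\theta,\sigma}$ (the paper uses the gradient bound $g_{*,\theta}\sqrt{dI}/\sigma$), decay to $0$ as $\|\boldsymbol{\mu}\|_\infty\to\infty$ via the CDF window bound, a point where $G_{\theta,\sigma}>0$, and the extreme value theorem on a large box. You are more careful than the paper, which silently uses $g\ge 0$ (stated only in the appendix verifying the transforms) and asserts without proof that some $G_{\theta,\sigma}(\boldsymbol{\mu}_0)>0$. One small correction: $\mathrm{cube}(\mathbf{x}^*;\delta)\subset\mathcal S$ is not a hypothesis of this theorem, but you do not need it, since with $g\ge 0$ either $G_{\theta,\sigma}\equiv 0$ (so the maximum is trivially attained) or some point already has positive value.
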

The proof can be found in Appendix~\ref{app:existence-general}.
Theorem~\ref{thm:existence-general} guarantees that the smoothed objective $G_{\theta,\sigma}$
actually attains its maximum.
This existence result does not depend on Gaussian convolution structure and therefore applies
to the broader kernel class considered in this work.

\subsection{Iteration Complexity of ProMoT}
{
This subsection derives an iteration-complexity guarantee for ProMoT by reducing the analysis to two reusable ingredients:
a global smoothness constant for $G_{\theta,\sigma}$ and a uniform second-moment bound for the score estimator.
We first show that the Lipschitz constant of $\nabla_{\mu} G_{\theta,\sigma}$ is controlled by $K$ (Lemma~\ref{lem:lip_grad_GNs}).
Next, we bound the second moment of the score estimator via $I$ (Lemma~\ref{lem:mc_second_moment}).
We apply a standard one-step smoothness inequality and telescope to control the cumulative squared gradient norm (Theorem~\ref{thm:onestep-promot}), which yields explicit rates under polynomial steps
(Corollary~\ref{cor:pgs-epgs-conv}) and their anisotropic variants (Corollaries~\ref{cor:dimwise-rate}).
Finally, we incorporate variance reduction via a leave-one-out baseline, proving unbiasedness and deriving an improved
second-moment bound that translates into a strictly better complexity (Lemmas~\ref{lem:loo-unbiased}--\ref{lem:loo-second-moment},
Theorem~\ref{thm:loo-simplified}, Corollaries~\ref{cor:final-LOO-complexity}).}

The following two lemmas characterize the global Lipschitz continuity of the gradient and the second moment of its Monte--Carlo gradient estimator.
\begin{lemma}\label{lem:lip_grad_GNs}
Under Assumption~\ref{asm:p-score}--\ref{asm:g-ratio}, the gradient $\nabla_{\mu}G_{\theta,\sigma}(\mu)$ is globally Lipschitz on $\mathbb{R}^{d}$ with Lipschitz constant
$L_{\theta}=g_{*,\theta}\,d\,\max(K,I)\big/\sigma^{2}$, 
where $g_{*,\theta}:=g(\theta,f(\mathbf{x}^{*}))$.
\end{lemma}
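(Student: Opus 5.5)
We write $G_{\theta,\sigma}(\boldsymbol{\mu})=\int_{\mathbb{R}^d} h(\mathbf{x})\,\mathbf{p}_{\boldsymbol{\mu},\sigma}(\mathbf{x})\,d\mathbf{x}$ with $h(\mathbf{x})=g(\theta,f(\mathbf{x}))\mathbf{1}_{\{\mathbf{x}\in\mathcal S\}}$. This is a convolution of the bounded function $h$ with the scaled product kernel. By Assumption~\ref{asm:g-essential}, $0\le h\le g_{*,\theta}$ on $\mathcal S$, since $f(\mathbf{x})\le f(\mathbf{x}^*)$; here we take $g$ nonnegative on the range of $f$, which is implicit in the ratio assumption. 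All derivatives can therefore be placed on the kernel. Because $p\in C^2$ with $\int|p'|<\infty$ (which follows from unimodality, giving $\int|p'|=2p(0)$) and $\int|p''|=K<\infty$, dominated convergence justifies differentiating twice under the integral. This gives
\begin{equation*}
\partial_{\mu_i}\partial_{\mu_j}G_{\theta,\sigma}(\boldsymbol{\mu})=\int h(\mathbf{x})\,\partial_{\mu_i}\partial_{\mu_j}\mathbf{p}_{\boldsymbol{\mu},\sigma}(\mathbf{x})\,d\mathbf{x}.
\end{equation*}
The Lipschitz constant of $\nabla G$ is then the supremum over $\boldsymbol{\mu}$ of the operator norm of the Hessian. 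We bound that operator norm by a sum of entrywise contributions.

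The key steps follow the product structure, after changing variables $\mathbf{z}=(\mathbf{x}-\boldsymbol{\mu})/\sigma$. For a diagonal entry, $\partial_{\mu_i}^2\mathbf{p}_{\boldsymbol{\mu},\sigma}$ produces $\sigma^{-2}p''(z_i)\prod_{k\ne i}p(z_k)$. Hence
\begin{equation*}
|\partial_{\mu_i}^2 G|\le g_{*,\theta}\,\sigma^{-2}\int|p''|=g_{*,\theta}K/\sigma^2.
\end{equation*}
For an off-diagonal entry $i\ne j$, the derivative is $\sigma^{-2}p'(z_i)p'(z_j)\prod_{k\ne i,j}p(z_k)$. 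For a unit vector $\mathbf{v}$, I would bound the quadratic form $\mathbf{v}^\top\nabla^2G\,\mathbf{v}$ directly rather than entrywise:
\begin{equation*}
\mathbf{v}^\top\nabla^2 G\,\mathbf{v}=\sigma^{-2}\int h\,\Big[\sum_i v_i^2\tfrac{p''(z_i)}{p(z_i)}+\sum_{i\ne j}v_iv_j\tfrac{p'(z_i)p'(z_j)}{p(z_i)p(z_j)}\Big]\mathbf{p}(\mathbf{z})\,d\mathbf{z}.
\end{equation*}
The first sum is bounded by $g_{*,\theta}K$. For the cross term, write it as $(\sum_i v_i s_i)^2-\sum_i v_i^2 s_i^2$ with $s_i=p'(z_i)/p(z_i)$. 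Since $0\le h\le g_{*,\theta}$, each piece is bounded by $g_{*,\theta}\,\mathbb{E}[(\sum v_is_i)^2]$ or $g_{*,\theta}\sum v_i^2\mathbb{E}[s_i^2]$. The scores are independent and mean zero (by symmetry, or simply because $\int p'=0$), so both expectations equal $I\|\mathbf{v}\|^2=I$. Combining the pieces gives $|\mathbf{v}^\top\nabla^2G\,\mathbf{v}|\le g_{*,\theta}(K+2I)/\sigma^2$, which is even dimension-free.

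To match the stated constant, it suffices to note the cruder bound. Bounding $|\sum_{i\ne j}v_iv_j s_is_j|\le\sum_{i\ne j}|v_i||v_j||s_i||s_j|$ and using Cauchy--Schwarz entrywise, $\mathbb{E}|s_i||s_j|\le I$, gives an off-diagonal contribution of at most $I(\sum_i|v_i|)^2\le dI$. The total is then at most $g_{*,\theta}(K+dI)/\sigma^2$, which is dominated by $g_{*,\theta}d\max(K,I)\cdot 2/\sigma^2$. Alternatively, the Gershgorin or Frobenius bound gives $\|\nabla^2G\|\le d\max_{ij}|\partial_{ij}G|\le g_{*,\theta}d\max(K,I)/\sigma^2$ directly, using the entrywise bounds $K$ on the diagonal and $(\int|p'|)^2$ off it. For the off-diagonal entries, I would rewrite $\int|p'|=\int|s|\,p\le\sqrt{I}$ by Cauchy--Schwarz, so each entry is at most $g_{*,\theta}I/\sigma^2$. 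This max-row-sum route is the one I would present, because it yields exactly $L_\theta=g_{*,\theta}d\max(K,I)/\sigma^2$.

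The main obstacle is justifying the interchange of differentiation and integration, together with the uniform-in-$\boldsymbol{\mu}$ domination needed for it. I would handle this with the mean value theorem. The difference quotients of $p((x_i-\mu_i)/\sigma)$ are dominated, after integrating out the other coordinates, by translates of $|p'|$ and $|p''|$. Their integrals are translation-invariant, namely $\le\sqrt I$ and $K$, so dominated convergence (or Fubini applied to $p'(b)-p'(a)=\int_a^b p''$) applies, with $h$ bounded. The Lipschitz statement on all of $\mathbb{R}^d$ then follows from $\nabla G(\boldsymbol{\mu})-\nabla G(\boldsymbol{\nu})=\int_0^1\nabla^2G(\boldsymbol{\nu}+t(\boldsymbol{\mu}-\boldsymbol{\nu}))(\boldsymbol{\mu}-\boldsymbol{\nu})\,dt$.
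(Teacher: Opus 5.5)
Your proposal is correct, and the route you say you would present is exactly the paper's proof: diagonal Hessian entries bounded by $g_{*,\theta}K/\sigma^2$ via $\int|p''|$, off-diagonal entries by $g_{*,\theta}(\int|p'|)^2/\sigma^2\le g_{*,\theta}I/\sigma^2$ via Cauchy--Schwarz, the max-row-sum bound $K+(d-1)I\le d\max(K,I)$ on the operator norm, and then the integral mean-value identity for $\nabla G_{\theta,\sigma}$. Your quadratic-form aside is also valid and gives a dimension-free constant: with $0\le h\le g_{*,\theta}$, both $\mathbb{E}[h(\sum_i v_is_i)^2]$ and $\mathbb{E}[h\sum_i v_i^2s_i^2]$ lie in $[0,g_{*,\theta}I]$, so you even get $g_{*,\theta}(K+I)/\sigma^2$, much sharper than the paper's bound for large $d$. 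Your Fubini/dominated-convergence justification of differentiating under the integral also fills in a step the paper leaves implicit.
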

\begin{lemma}\label{lem:mc_second_moment}
Under Assumption~\ref{asm:p-score}--\ref{asm:g-ratio}, its second moment is bounded as
$\mathbb{E}\bigl[\|\hat{\nabla}_{\mu}G_{\theta,\sigma}(\mu)\|^{2}\bigr]
\;\le\; Q_{\theta} := d\,I\,g_{*,\theta}^{2}/\sigma^{2}$.
\end{lemma}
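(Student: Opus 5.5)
We bound the second moment of the single-sample term and then pass to the mini-batch average. Write $\hat\nabla := \frac1B\sum_{k=1}^B h(\mathbf{X}^{(k)})S(\mathbf{X}^{(k)})$. By convexity of $\|\cdot\|^2$ (Jensen over the average), or directly because the samples are i.i.d.,
\[
\mathbb{E}\|\hat\nabla\|^2 \le \mathbb{E}\bigl[h(\mathbf{X})^2\|S(\mathbf{X})\|^2\bigr].
\]
The i.i.d. expansion in fact gives $\mathbb{E}\|\hat\nabla\|^2 = \frac1B\mathbb{E}\|hS\|^2 + (1-\frac1B)\|\nabla G\|^2$. Both terms are bounded by $\mathbb{E}\|hS\|^2$, since $\|\nabla G\|^2=\|\mathbb{E}[hS]\|^2\le \mathbb{E}\|hS\|^2$. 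It therefore suffices to treat $B=1$.

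Next we bound $h$ pointwise. The indicator restricts to $\mathbf{x}\in\mathcal S$, where $f(\mathbf{x})\le f(\mathbf{x}^*)$. Assumption~\ref{asm:g-essential} then gives $g(\theta,f(\mathbf{x}))\le g_{*,\theta}$. We also need $|h|\le g_{*,\theta}$, which requires $g(\theta,f(\mathbf{x}))\ge -g_{*,\theta}$. This follows if $g\ge0$ on the range of $f$; Assumption~\ref{asm:g-ratio} is stated for positive ratios, so we take $g(\theta,\cdot)>0$ on $f(\mathcal S)$, as in all examples of Table~\ref{tab:transforms}. Dropping the indicator upward then gives
\[
\mathbb{E}[h^2\|S\|^2]\le g_{*,\theta}^2\,\mathbb{E}_{\mathbf{X}\sim\mathbf{p}_{\boldsymbol{\mu},\sigma}}\|S(\mathbf{X})\|^2 .
\]

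Now compute the score. Since $\log\mathbf{p}_{\boldsymbol{\mu},\sigma}(\mathbf{x})=-d\log\sigma+\sum_i\log p((x_i-\mu_i)/\sigma)$, we have
\[
S_i(\mathbf{x})=-\frac1\sigma\frac{p'(z_i)}{p(z_i)},\qquad z_i=\frac{x_i-\mu_i}{\sigma}.
\]
Under $\mathbf{X}\sim\mathbf{p}_{\boldsymbol{\mu},\sigma}$ the $z_i$ are i.i.d. with density $p$. Hence
\[
\mathbb{E}S_i^2=\frac1{\sigma^2}\int\frac{(p')^2}{p}\,dz=\frac{I}{\sigma^2},
\]
which is finite by Assumption~\ref{asm:p-score}. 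Summing over the $d$ coordinates gives $\mathbb{E}\|S\|^2=dI/\sigma^2$, and therefore $Q_\theta=d\,I\,g_{*,\theta}^2/\sigma^2$.

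The main subtlety is the pointwise bound $|h|\le g_{*,\theta}$, i.e.\ nonnegativity of $g$ on the range of $f$. I would handle it by making explicit the positivity already implicit in Assumption~\ref{asm:g-ratio}. Alternatively, one can assume $f>0$ on $\mathcal S$ (shifting $f$ if needed) together with $g(\theta,y)>0$ for $y>0$. The integrability needed for the score identity is covered by $I<\infty$.
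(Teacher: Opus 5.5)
Your proof is correct and follows essentially the same route as the paper. It uses the i.i.d.\ decomposition $\frac1B\mathbb{E}\|Y\|^2+\frac{B-1}{B}\|m\|^2$ with $\|m\|^2\le\mathbb{E}\|Y\|^2$, the pointwise bound $g(\theta,f(\mathbf{X}))\le g_{*,\theta}$, and the score identity $\mathbb{E}\|S\|^2=dI/\sigma^2$. You also correctly point out that squaring the bound on $h$ requires $g(\theta,\cdot)\ge 0$ on $f(\mathcal S)$; the paper uses this only implicitly (its appendix lists nonnegativity under Assumption~\ref{asm:g-essential}, but the main-text statement of that assumption omits it).
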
%
The proofs can be found in Appendix~\ref{app:lip_grad_GNs} and Appendix~\ref{app:mc_second_moment}.
Under these lemmas, we can derive the following inequality.
\begin{theorem}\label{thm:onestep-promot}
Under Assumption~\ref{asm:p-score}--\ref{asm:g-ratio}, $\nabla_{\boldsymbol{\mu}}G_{\theta,\sigma}$ is $L_{\theta}$-Lipschitz and
the score estimator has a uniform second-moment bound with $Q_{\theta}$. 
Consequently, for any horizon $T\ge1$ and nonnegative steps $\{\eta_t\}_{t=0}^{T-1}$,
{\footnotesize\begin{align}
\sum_{t=0}^{T-1}\eta_{t}\,
&\mathbb{E}\!\big[\|\nabla_{\boldsymbol{\mu}}G_{\theta,\sigma}(\boldsymbol{\mu}_{t})\|^{2}\big]
\ \le \
g_{*,\theta}-\mathbb{E}\!\bigl[G_{\theta,\sigma}(\boldsymbol{\mu}_{0})\bigr]\nonumber\\
&\;+\;\frac{g_{*,\theta}\,d\,\max(K,I)}{\sigma^{2}}\cdot\frac{d\,I}{2\sigma^{2}}\,g_{*,\theta}^{2}\,
\sum_{t=0}^{T-1}\eta_{t}^{2}.
\end{align}}%
\end{theorem}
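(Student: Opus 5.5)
The plan is the standard descent-lemma argument for stochastic gradient ascent on an $L_\theta$-smooth function, followed by telescoping. Lemma~\ref{lem:lip_grad_GNs} gives that $\nabla_{\boldsymbol{\mu}}G_{\theta,\sigma}$ is $L_\theta$-Lipschitz on $\mathbb{R}^d$. Lemma~\ref{lem:mc_second_moment} gives $\mathbb{E}\|\widehat{\nabla}_{\boldsymbol{\mu}}G_{\theta,\sigma}(\boldsymbol{\mu})\|^2\le Q_\theta$ uniformly in $\boldsymbol{\mu}$. We will also use that the estimator \eqref{eq:score-est} is conditionally unbiased given $\boldsymbol{\mu}_t$. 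This follows from the score representation \eqref{eq:score-grad}, since the samples $\mathbf{X}^{(k)}$ are drawn fresh from $\mathbf{p}_{\boldsymbol{\mu}_t,\sigma}$.

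\emph{One-step inequality.} By $L_\theta$-smoothness, for the update \eqref{eq:update-rule},
\begin{equation*}
G_{\theta,\sigma}(\boldsymbol{\mu}_{t+1})\ge G_{\theta,\sigma}(\boldsymbol{\mu}_t)+\eta_t\langle\nabla G_{\theta,\sigma}(\boldsymbol{\mu}_t),\widehat{\nabla}G_{\theta,\sigma}(\boldsymbol{\mu}_t)\rangle-\tfrac{L_\theta}{2}\eta_t^2\|\widehat{\nabla}G_{\theta,\sigma}(\boldsymbol{\mu}_t)\|^2 .
\end{equation*}
Conditioning on the filtration $\mathcal{F}_t$ generated by $\boldsymbol{\mu}_0,\dots,\boldsymbol{\mu}_t$, unbiasedness turns the inner product into $\eta_t\|\nabla G_{\theta,\sigma}(\boldsymbol{\mu}_t)\|^2$. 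Lemma~\ref{lem:mc_second_moment} bounds the last term by $\tfrac{L_\theta}{2}\eta_t^2Q_\theta$. Taking total expectations gives
\begin{equation*}
\eta_t\,\mathbb{E}\|\nabla G_{\theta,\sigma}(\boldsymbol{\mu}_t)\|^2\le \mathbb{E}G_{\theta,\sigma}(\boldsymbol{\mu}_{t+1})-\mathbb{E}G_{\theta,\sigma}(\boldsymbol{\mu}_t)+\tfrac{L_\theta Q_\theta}{2}\eta_t^2 .
\end{equation*}

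\emph{Telescoping and the upper bound on $G$.} Summing over $t=0,\dots,T-1$ leaves $\mathbb{E}G_{\theta,\sigma}(\boldsymbol{\mu}_T)-\mathbb{E}G_{\theta,\sigma}(\boldsymbol{\mu}_0)$ plus the accumulated variance term. We then bound $G_{\theta,\sigma}(\boldsymbol{\mu}_T)\le g_{*,\theta}$. Since $f\le f(\mathbf{x}^*)$ on $\mathcal{S}$, Assumption~\ref{asm:g-essential} (monotonicity) gives $g(\theta,f(\mathbf{X}))\le g_{*,\theta}$ on $\mathcal{S}$. The indicator restricts the expectation to $\mathcal{S}$ and the kernel is a probability density, so
\begin{equation*}
G_{\theta,\sigma}(\boldsymbol{\mu})=\mathbb{E}\bigl[g(\theta,f(\mathbf{X}))\mathbf{1}_{\mathcal{S}}(\mathbf{X})\bigr]\le g_{*,\theta}\,\mathbb{P}(\mathbf{X}\in\mathcal{S})\le g_{*,\theta}.
\end{equation*}
Substituting $L_\theta=g_{*,\theta}d\max(K,I)/\sigma^2$ and $Q_\theta=dIg_{*,\theta}^2/\sigma^2$ gives the variance coefficient $\frac{g_{*,\theta}d\max(K,I)}{\sigma^2}\cdot\frac{dI}{2\sigma^2}g_{*,\theta}^2$, which is exactly the stated bound.

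\emph{Main obstacle.} The delicate point is the step $G_{\theta,\sigma}\le g_{*,\theta}$, which needs $g(\theta,f(\mathbf{x}))\ge 0$ on $\mathcal{S}$. Without nonnegativity, the mass outside $\mathcal{S}$ (which contributes $0$) could not be bounded by $g_{*,\theta}$ times that mass, and the bounds in Lemmas~\ref{lem:lip_grad_GNs}--\ref{lem:mc_second_moment} implicitly rely on the same fact ($|h|\le g_{*,\theta}$). I would make this explicit, as in the paper's setting: $f$ is positive, or $g(\theta,\cdot)>0$ on the range of $f$, which is what Assumption~\ref{asm:g-ratio} with $a>b>0$ presupposes. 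Then $0\le h\le g_{*,\theta}$ and the argument closes. The remaining points are routine: measurability and integrability to justify conditional expectations, and the fact that the $\eta_t$ are deterministic.
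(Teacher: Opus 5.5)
Your proposal is correct and follows the paper's proof: the ascent form of the descent lemma from $L_\theta$-smoothness, conditional unbiasedness of the score estimator, the uniform bound $Q_\theta$ from Lemma~\ref{lem:mc_second_moment}, and telescoping, which yields the coefficient $L_\theta Q_\theta/2$. You also make explicit the final step $\mathbb{E}[G_{\theta,\sigma}(\boldsymbol{\mu}_T)]\le g_{*,\theta}$ and the nonnegativity of $g$ it relies on (which the two lemmas also use tacitly), whereas the paper's proof stops with $\mathbb{E}[G_{\theta,\sigma}(\boldsymbol{\mu}_T)]$ still on the right-hand side, so your version closes that small gap.
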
%
The proof can be found in Appendix~\ref{app:onestep-promot}.
Theorem~\ref{thm:onestep-promot} establishes a generic inequality for the score-based stochastic ascent applied to $G_{\theta,\sigma}$.
In contrast to Gaussian homotopy analyses \cite{iwakiri2022single_loop,xu2025power}, this argument does not rely on Gaussian distribution or gradients of the original objective $f(\textbf{x})$, but applies directly to a general kernel through the score-function formulation.

\begin{corollary}\label{cor:pgs-epgs-conv}
Under Assumption~\ref{asm:p-score}--\ref{asm:g-ratio}, fix $\sigma>0$, $M>0$, and $\delta>0$. Then there exists $\theta_{\delta,\sigma,M}>0$ such that for all $\theta\ge\theta_{\delta,\sigma,M}$, ProMoT converges into a local maximizer of $G_{\theta,\sigma}(\boldsymbol{\mu})$ within $\{\|\boldsymbol{\mu}\|_{\infty}<M\}$ lies in $\mathrm{cube}(\mathbf{x}^{*};\delta)$.
Moreover, consider the step-size $\eta_t=\sigma^{2}/d\cdot(t+1)^{-(\frac12+\gamma)}$ for $\gamma\in(0,1/2)$.
Then, for any $\varepsilon>0$, ProMoT achieves $\mathbb{E}\bigl[\|\nabla_{\boldsymbol{\mu}}G_{\theta,\sigma}(\boldsymbol{\mu}_{T})\|^{2}\bigr]<\varepsilon$, whenever $T>
\left(C_{\gamma} d\cdot(g_{*,\theta}+\,I \max(K,I)\,g_{*,\theta}^{3})/(\sigma^{2}\varepsilon)\right)^{\!\frac{2}{1-2\gamma}}$ where $C_{\gamma}$ indicates $(\frac12-\gamma)/(2^{\frac{1}{2}-\gamma}-1)$.
\end{corollary}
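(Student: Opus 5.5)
The corollary has two parts: a localization statement and a rate. I would treat them separately.

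\textbf{Localization.} I would read this directly off Theorem~\ref{thm:invariance}. Set $\theta_{\delta,\sigma,M}$ to be the threshold given there. Then for every $\boldsymbol{\mu}$ with $\|\boldsymbol{\mu}\|_\infty<M$ and $\boldsymbol{\mu}\notin\mathrm{cube}(\mathbf{x}^*;\delta)$, some coordinate $i$ has $\mu_i<x_i^*-\delta$ or $\mu_i>x_i^*+\delta$, so $\partial G_{\theta,\sigma}/\partial\mu_i\neq 0$. Hence no stationary point, and in particular no local maximizer, lies in $\{\|\boldsymbol{\mu}\|_\infty<M\}\setminus\mathrm{cube}(\mathbf{x}^*;\delta)$. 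Theorem~\ref{thm:existence-general} guarantees that a maximizer exists, so the set of local maximizers in this region is nonempty only inside the cube. Combining this with the rate below, which drives the gradient norm to zero, I would phrase "ProMoT converges into" as: any limit point of the iterates inside the $M$-box is stationary, hence lies in the cube.

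\textbf{Rate.} I would start from Theorem~\ref{thm:onestep-promot} and use $G_{\theta,\sigma}(\boldsymbol{\mu}_0)\ge 0$. This requires $g\ge 0$ and $g_{*,\theta}$ bounding $g(\theta,f)$, which the paper implicitly uses. The right-hand side is then bounded by
\[
g_{*,\theta}+\frac{d^2 I\max(K,I)g_{*,\theta}^3}{2\sigma^4}\sum_{t<T}\eta_t^2 .
\]
Next I plug in $\eta_t=\frac{\sigma^2}{d}(t+1)^{-(1/2+\gamma)}$. The squared step sum becomes $\frac{\sigma^4}{d^2}\sum_t (t+1)^{-1-2\gamma}$, which is bounded by a constant depending only on $\gamma$, since $1+2\gamma>1$. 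The $d^2/\sigma^4$ factors then cancel, leaving a bound of order $g_{*,\theta}+I\max(K,I)g_{*,\theta}^3$.

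On the left-hand side I need a lower bound for $\sum_t\eta_t$. Comparing with an integral, $\sum_{t<T}(t+1)^{-(1/2+\gamma)}\ge \int_1^{T+1}s^{-(1/2+\gamma)}\,ds$. To obtain the specific constant $C_\gamma$, with its factor $2^{1/2-\gamma}-1$, I would split the sum dyadically and keep only the last block, $t\in[T/2,T)$. That block gives a lower bound of the form $\frac{\sigma^2}{d}\,T^{1/2-\gamma}(2^{1/2-\gamma}-1)/(\frac12-\gamma)$, up to constants.

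\textbf{Output iterate.} The left-hand side controls a weighted average of $\mathbb{E}\|\nabla G(\boldsymbol{\mu}_t)\|^2$. The statement about $\boldsymbol{\mu}_T$ should therefore be read as holding for an iterate drawn with probability proportional to $\eta_t$, or for the best iterate. I would state this interpretation explicitly and note that the minimum over $t$ of $\mathbb{E}\|\nabla G\|^2$ obeys the same bound. Dividing the two bounds then gives
\[
\mathbb{E}\|\nabla G\|^2\lesssim \frac{C_\gamma\, d\,(g_{*,\theta}+I\max(K,I)g_{*,\theta}^3)}{\sigma^2\,T^{1/2-\gamma}} .
\]
Requiring this to be below $\varepsilon$ and solving for $T$ gives the stated exponent $2/(1-2\gamma)$.

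\textbf{Main obstacle.} I expect the delicate step to be matching the exact constants, namely the $d/\sigma^2$ prefactor and $C_\gamma$. The sum $\sum(t+1)^{-1-2\gamma}$ contributes a factor like $1+1/(2\gamma)$, and this must be absorbed into the constant (or into the slack of the inequality). Besides that, the localization claim relies on interpreting convergence via stationarity, and the paper does not spell that interpretation out.
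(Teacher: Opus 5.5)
Your proposal follows the paper's proof. Localization is read off Theorem~\ref{thm:invariance} exactly as you do. The rate comes from Theorem~\ref{thm:onestep-promot} with $\sum_t\eta_t^2\le\frac{\sigma^4}{d^2}\bigl(1+\frac{1}{2\gamma}\bigr)$ and a lower bound on $\sum_t\eta_t$. The paper drops $-\mathbb{E}[G_{\theta,\sigma}(\boldsymbol{\mu}_0)]$, so $g\ge 0$ is indeed used, and it bounds $\min_{0\le t\le T-1}\mathbb{E}\|\nabla_{\boldsymbol{\mu}}G_{\theta,\sigma}(\boldsymbol{\mu}_t)\|^2$. That is precisely your best-iterate reading.

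Three refinements, all at the level of constants or interpretation.

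First, write $b=\frac12-\gamma$. The exact $C_\gamma$ comes from the full integral comparison $\sum_{t<T}(t+1)^{-(\frac12+\gamma)}\ge\frac{(T+1)^{b}-1}{b}\ge\frac{2^{b}-1}{b}\,T^{b}$. The last step holds because $u\mapsto(1+u)^{b}-u^{b}$ is decreasing on $(0,1]$; evaluate it at $u=1/T$. Keeping only the last dyadic block does not give this constant: it loses a factor $2^{b}$, and at $T=1$ the block is empty.

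Second, your worry about $1+\frac{1}{2\gamma}$ is justified. Plugging in gives $\frac12 L_\theta Q_\theta\sum_t\eta_t^2\le\frac{1+2\gamma}{4\gamma}\,I\max(K,I)\,g_{*,\theta}^3$. The paper's displayed bound silently drops the factor $\frac{1+2\gamma}{4\gamma}$, which exceeds $1$ for every $\gamma<\frac12$. So neither your argument nor the paper's yields the stated threshold verbatim. What the argument actually delivers is that threshold with $I\max(K,I)g_{*,\theta}^3$ replaced by $\frac{1+2\gamma}{4\gamma}I\max(K,I)g_{*,\theta}^3$.

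Third, the paper proves only the static claim: no stationary point of $G_{\theta,\sigma}$ lies in $\{\|\boldsymbol{\mu}\|_\infty<M\}\setminus\mathrm{cube}(\mathbf{x}^*;\delta)$. Your stronger reading, that every limit point of the iterates is stationary, does not follow from the expectation bound. The uniform-in-$T$ bound gives $\sum_t\eta_t\mathbb{E}\|\nabla_{\boldsymbol{\mu}}G_{\theta,\sigma}(\boldsymbol{\mu}_t)\|^2<\infty$, which yields only $\liminf_t\|\nabla_{\boldsymbol{\mu}}G_{\theta,\sigma}(\boldsymbol{\mu}_t)\|=0$ almost surely. Getting every limit point stationary would need an additional almost-sure (Robbins--Monro type) argument.
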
%
The proof can be found in Appendix~\ref{app:pgs-epgs-conv}.
Corollary~\ref{cor:pgs-epgs-conv} combines Theorem~\ref{thm:invariance} with the iteration bound of Theorem~\ref{thm:onestep-promot}.
As a result, ProMoT is guaranteed not only to approach a stationary point of $G_{\theta,\sigma}$, but also to approach one that lies within a prescribed $\delta$-neighborhood of the true global maximizer $\mathbf{x}^*$.
Moreover, as $\gamma \to 0$, the iteration complexity bound approaches the optimal rate shown in Table~\ref{tab:zo_optimality_complexity}.

\begin{corollary}\label{cor:dimwise-rate}
Under Assumption~\ref{asm:p-score}--\ref{asm:g-ratio}, assume the single base kernel setting with product density
$\mathbf{p}_{\mu,\Sigma}(x)=\prod_{i=1}^d \sigma_i^{-1} p\!\big((x_i-\mu_i)/\sigma_i\big)$.
Define $S_2(\Sigma):=d \max_{i}\sigma_i^{-2}$.
Then, for any fixed $(\theta,\Sigma)$:
{\small\begin{align}\label{eq:Ltheta-dimwise}
L_\theta(\Sigma)
~=~
g_{*,\theta}\,\max(K,I)\,S_2(\Sigma),\;\;
Q_\theta(\Sigma)
~=~
g_{*,\theta}^{2}\,I\,S_2(\Sigma).
\end{align}}
Moreover, consider the step-size $\eta_t=(t+1)^{-(\frac12+\gamma)}\cdot S_{2}(\Sigma)^{-1}$ for any $\gamma\in(0,\frac12)$.
Then, ProMoT achieves $\mathbb{E}\bigl[\|\nabla_{\boldsymbol{\mu}}G_{\theta,\sigma}(\boldsymbol{\mu}_{T})\|^{2}\bigr]<\varepsilon$, whenever $T>
\left(C_{\gamma}\cdot S_{2}(\Sigma)\cdot(g_{*,\theta}+\,I \max(K,I)\,g_{*,\theta}^{3})/\varepsilon\right)^{\!\frac{2}{1-2\gamma}}$.
\end{corollary}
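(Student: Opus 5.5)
The plan is to repeat the isotropic argument of Lemmas~\ref{lem:lip_grad_GNs}--\ref{lem:mc_second_moment}, Theorem~\ref{thm:onestep-promot} and Corollary~\ref{cor:pgs-epgs-conv}, keeping track of each coordinate's own bandwidth $\sigma_i$. Afterwards we bound every per-coordinate factor $\sigma_i^{-2}$ by $\max_i\sigma_i^{-2}$, so that $d$ such terms give $S_2(\Sigma)$.

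\textbf{Smoothness constant.} Write $G_{\theta,\sigma}(\boldsymbol{\mu})=\int h(\mathbf{x})\,\mathbf{p}_{\boldsymbol{\mu},\Sigma}(\mathbf{x})\,d\mathbf{x}$ with $0\le h\le g_{*,\theta}$. Since $f\le f(\mathbf{x}^*)$ and $g$ is increasing, we use $g\ge 0$ on the range of $f$, as the isotropic lemmas already do. Differentiating under the integral, the Hessian entries are
\[
\partial_{\mu_i}\partial_{\mu_j}G=\int h\,\partial_{\mu_i}\partial_{\mu_j}\mathbf{p}_{\boldsymbol{\mu},\Sigma}.
\]
\emph{Diagonal entries:} after the change of variables $z_i=(x_i-\mu_i)/\sigma_i$, they are bounded by $g_{*,\theta}\sigma_i^{-2}\int|p''|=g_{*,\theta}K\sigma_i^{-2}$. \emph{Off-diagonal entries:} the integrand factorizes as $\sigma_i^{-1}\sigma_j^{-1}p'(z_i)p'(z_j)$, so they are bounded by $g_{*,\theta}\sigma_i^{-1}\sigma_j^{-1}(\int|p'|)^2$. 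By Cauchy--Schwarz, $(\int|p'|)^2\le I$.

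Summing these bounds exactly as in Lemma~\ref{lem:lip_grad_GNs} (whatever norm bound that proof uses) and replacing each $\sigma_i^{-1}\sigma_j^{-1}$ and $\sigma_i^{-2}$ by $\max_k\sigma_k^{-2}$ gives an operator norm of at most $g_{*,\theta}\max(K,I)\,d\max_i\sigma_i^{-2}=L_\theta(\Sigma)$. For the variance, the score is $S(\mathbf{x})_i=-\sigma_i^{-1}p'(z_i)/p(z_i)$, so
\[
\mathbb{E}\|S\|^2=\sum_i\sigma_i^{-2}I\le I\,S_2(\Sigma).
\]
Together with $h^2\le g_{*,\theta}^2$, the argument of Lemma~\ref{lem:mc_second_moment} then gives $Q_\theta(\Sigma)=g_{*,\theta}^2 I S_2(\Sigma)$.

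\textbf{Rate.} Plug these two constants into the one-step argument of Theorem~\ref{thm:onestep-promot}, which uses only $L$-smoothness, unbiasedness and the second-moment bound. This gives
\[
\sum_t\eta_t\,\mathbb{E}\|\nabla G\|^2\le g_{*,\theta}+\tfrac12 L_\theta(\Sigma)Q_\theta(\Sigma)\sum_t\eta_t^2.
\]
With $\eta_t=(t+1)^{-(1/2+\gamma)}S_2(\Sigma)^{-1}$, the right side is at most
\[
g_{*,\theta}+\tfrac12 g_{*,\theta}^3 I\max(K,I)\,S_2(\Sigma)\sum_t(t+1)^{-1-2\gamma},
\]
which is bounded by a $\gamma$-dependent constant times $(g_{*,\theta}+I\max(K,I)g_{*,\theta}^3)$. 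On the left, $\sum_t\eta_t\ge S_2(\Sigma)^{-1}\,T^{1/2-\gamma}/C_\gamma$ after comparing with an integral. Dividing, and interpreting the guarantee for the last or a randomly selected iterate exactly as in Corollary~\ref{cor:pgs-epgs-conv}, the condition $\mathbb{E}\|\nabla G\|^2<\varepsilon$ holds once $T^{1/2-\gamma}>C_\gamma S_2(\Sigma)(g_{*,\theta}+I\max(K,I)g_{*,\theta}^3)/\varepsilon$. Squaring the exponent conversion gives the stated power $2/(1-2\gamma)$.

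\textbf{Main obstacle.} The delicate step is the Hessian bound. One must check that mixing different $\sigma_i$ across off-diagonal entries still yields the factor $d\max_i\sigma_i^{-2}$ rather than something like $(\sum_i\sigma_i^{-1})^2$, which could be larger. I would first bound the Hessian via the Frobenius or row-sum norm as in the isotropic proof, and then use $\sigma_i^{-1}\sigma_j^{-1}\le\max_k\sigma_k^{-2}$ entrywise. Matching the constants $C_\gamma$ and the exact factor in the rate inherited from Corollary~\ref{cor:pgs-epgs-conv} is routine bookkeeping.
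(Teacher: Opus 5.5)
Your proposal follows the paper's proof essentially step for step. It uses the same entrywise Hessian bounds ($g_{*,\theta}K\sigma_i^{-2}$ on the diagonal, $g_{*,\theta}I\sigma_i^{-1}\sigma_j^{-1}$ off the diagonal via Cauchy--Schwarz), the row-sum norm together with $\sigma_i^{-1}\sigma_j^{-1}\le\max_k\sigma_k^{-2}$, the bound $\mathbb{E}\|S\|^2=I\sum_i\sigma_i^{-2}\le I\,S_2(\Sigma)$, and then the telescoping argument of Theorem~\ref{thm:onestep-promot} and Corollary~\ref{cor:pgs-epgs-conv}.

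Two small caveats. First, the factor $S_2(\Sigma)$ in your bound on $\tfrac12 L_\theta(\Sigma)Q_\theta(\Sigma)\sum_t\eta_t^2$ should not be there, since it cancels against $\eta_t^2\propto S_2(\Sigma)^{-2}$. Second, exactly as in the paper's own proof, the division step only controls $\min_{t<T}\mathbb{E}\|\nabla_{\boldsymbol{\mu}}G_{\theta,\Sigma}(\boldsymbol{\mu}_t)\|^2$, or a step-size-weighted random iterate, not $\boldsymbol{\mu}_T$. It also leaves a factor $\tfrac12\bigl(1+\tfrac{1}{2\gamma}\bigr)>1$ on the $g_{*,\theta}^3$ term, which $C_\gamma$ does not absorb.
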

The proof can be found in Appendix~\ref{app:dimwise-rate}.
Corollary~\ref{cor:dimwise-rate} shows that ProMoT allows different smoothing scales
to be assigned to different coordinates.
Each coordinate $i$ is smoothed with its own scale $\sigma_i$,
and both the smoothness constant and the second-moment bound depend on the aggregate quantity
$S_2(\Sigma)=\sum_i \sigma_i^{-2}$.
Coordinates with smaller $\sigma_i$ contribute more strongly to the curvature and variance terms,
making explicit how dimension-wise sensitivity affects the convergence rate.
In contrast, standard Gaussian smoothing enforces a single isotropic scale across all coordinates.

\subsection{Iteration Complexity of ProMoT-loo}
We now turn to the theoretical analysis of \emph{ProMoT-loo}, 
which incorporates a leave-one-out baseline into the score-function estimator.

\begin{lemma}\label{lem:loo-unbiased}
Under Assumption~\ref{asm:p-score}--\ref{asm:g-ratio}, consider the leave-one-out estimator defined in \eqref{eq:loo-est}.
Then, conditional on $\boldsymbol\mu_t$,
{\small\begin{align}
\mathbb E\!\left[\widehat{\nabla}_{\!\boldsymbol\mu}^{\,\mathrm{loo}}G_{\theta,\Sigma}(\boldsymbol\mu_t)\,\middle|\,\boldsymbol\mu_t\right]
=\nabla_{\boldsymbol\mu}G_{\theta,\Sigma}(\boldsymbol\mu_t).    
\end{align}}
\end{lemma}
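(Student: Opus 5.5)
Conditional on $\boldsymbol\mu_t$, the samples $\mathbf X^{(1)},\dots,\mathbf X^{(B)}$ are i.i.d.\ from $\mathbf p_{\boldsymbol\mu_t,\Sigma}$. I would split the estimator \eqref{eq:loo-est} into two parts:
\[
\frac1B\sum_{k}h(\mathbf X^{(k)})S(\mathbf X^{(k)})\;-\;\frac1B\sum_k b_k S(\mathbf X^{(k)}).
\]
The first part is the plain estimator \eqref{eq:score-est}. By the score-function identity \eqref{eq:score-grad}, each summand has conditional expectation $\nabla_{\boldsymbol\mu}G_{\theta,\Sigma}(\boldsymbol\mu_t)$, so the first part is unbiased. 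To use this identity I need differentiation under the integral. Here $h$ is bounded by $g_{*,\theta}$ on the compact $\mathcal S$ and zero outside, and $\mathbb E\|S\|^2<\infty$ by the finite Fisher information in Assumption~\ref{asm:p-score}, so dominated convergence applies. This is the same justification used for Lemma~\ref{lem:mc_second_moment}. It then suffices to show $\mathbb E[b_kS(\mathbf X^{(k)})\mid\boldsymbol\mu_t]=0$ for each $k$.

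The key observation is that $b_k$ is built only from the other samples. Expanding the definition,
\[
b_k=\frac{\sum_{j\ne k}h(\mathbf X^{(j)})\|S(\mathbf X^{(j)})\|^2}{\sum_{j\ne k}\|S(\mathbf X^{(j)})\|^2+\lambda},
\]
so $b_k$ is a measurable function of $\{\mathbf X^{(j)}\}_{j\ne k}$ alone. Because the samples are independent, $b_k$ is independent of $\mathbf X^{(k)}$. Conditioning on $\{\mathbf X^{(j)}\}_{j\ne k}$ (and $\boldsymbol\mu_t$) gives
\[
\mathbb E[b_kS(\mathbf X^{(k)})\mid\cdot]=b_k\,\mathbb E[S(\mathbf X^{(k)})\mid\boldsymbol\mu_t].
\]
The expected score vanishes:
\[
\mathbb E[S]=\int\nabla_{\boldsymbol\mu}\mathbf p_{\boldsymbol\mu,\Sigma}=\nabla_{\boldsymbol\mu}\!\int\mathbf p_{\boldsymbol\mu,\Sigma}=0.
\]
Coordinatewise, this reduces to $\int p'(z)\,dz=0$, which holds because $p'\in L^1$ by Cauchy--Schwarz with $I<\infty$ and $p(z)\to0$ at $\pm\infty$ by Assumption~\ref{asm:p-sym-uni}. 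The tower property then finishes the argument.

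The step I expect to need the most care is integrability, since the tower-property and product steps require $b_kS(\mathbf X^{(k)})$ to be integrable. This holds because $b_k$ is bounded. It is a weighted average of values $h\in[\min(0,\inf g),g_{*,\theta}]$ with nonnegative weights summing to at most one, thanks to the ridge $\lambda>0$, which also rules out a zero denominator. So $|b_k|\le\sup_{\mathcal S}|g(\theta,f)|$ is finite by continuity of $f$ on compact $\mathcal S$, and $\mathbb E\|S\|\le(\mathbb E\|S\|^2)^{1/2}<\infty$. This also explains why the baseline must exclude sample $k$ from both $U$ and $V$: including it would correlate $b_k$ with $S(\mathbf X^{(k)})$ and introduce bias.
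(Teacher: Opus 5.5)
Your proposal is correct and follows the paper's proof. Both split off the plain score term, use that $b_k$ depends only on $\{\mathbf X^{(j)}\}_{j\ne k}$ and is therefore independent of $\mathbf X^{(k)}$, and conclude from $\mathbb E[S(\mathbf X^{(k)})\mid\boldsymbol\mu_t]=0$. Your extra integrability checks go beyond what the paper writes: the bound on $b_k$ via the ridge $\lambda>0$, the identity $\int p'=0$, and differentiation under the integral. The paper leaves these implicit; note that the domination for the last one really comes from compactness of $\mathcal S$ and continuity of $p'$, not from $\mathbb E\|S\|^2<\infty$.
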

See Appendix~\ref{app:loo-unbiased} for the proof.
Lemma~\ref{lem:loo-unbiased} shows that introducing a leave-one-out baseline does not change the expected update direction.
This allows variance reduction to be incorporated without biasing the optimization.

\begin{lemma}\label{lem:loo-second-moment}
Under Assumption~\ref{asm:p-score}--\ref{asm:g-ratio}, define
{\footnotesize\begin{align}
R^{2}_{\theta,\mu,\Sigma}
:=\frac{\big(\mathbb E[h(\mathbf X)\|S(\mathbf X)\|^{2}\mid \mu]\big)^{2}}
{\mathbb E[h(\mathbf X)^{2}\|S(\mathbf X)\|^{2}|\mu]\,\mathbb E[\|S(\mathbf X)\|^{2}|\mu]}.
\end{align}}
Then, conditionally on $\boldsymbol\mu_t$,
{\footnotesize\begin{align}
\mathbb{E}\!\left[\left\|\widehat{\nabla}_{\!\boldsymbol\mu}^{\,\mathrm{loo}}G_{\theta,\Sigma}(\boldsymbol\mu_t)\right\|^{2} \bigg| \mu_{t} \right]
&\le \,(1-R^{2}_{\theta,\boldsymbol\mu_t,\Sigma})\,g_{*,\theta}^{2}\,I\,S_{2}(\Sigma) \nonumber\\
&\quad+\,C_{\mathrm{loo}}(B,\lambda)\,I\,S_{2}(\Sigma),
\label{eq:loo-master}
\end{align}}%
where
{\footnotesize\begin{align}
&C_{\mathrm{loo}}(B,\lambda)
:=\left(\frac{b^\star\lambda}{\mu_V+\lambda}\right)^{\!2} + \frac{2\,\mathrm{Std}(U)}{\lambda \sqrt{s}} + \frac{2\mu_U\,\mathrm{Std}(V)}{\lambda^{2}\sqrt{s}}\nonumber\\
&+O\left(\frac{1}{\lambda^{2}s}\right) + \frac{2\,\mathrm{Var}(U)}{\lambda^{2}s}+ \frac{4\mu_U^{2}\,\mathrm{Var}(V)}{\lambda^{4}s}+ O\!\Big(\frac{1}{\lambda^{4}s^{2}}\Big),
\label{eq:Cloo-def}
\end{align}}%
with $s:=B-1$ and $\mu_U=\mathbb E[h(\mathbf X)\|S(\mathbf X)\|^2|\mu]$, $\mu_V=\mathbb E[\|S(\mathbf X)\|^2|\mu]$, $b^\star=\mu_U/\mu_V$.
Here, $O(\cdot)$ ignores constant depending only on $U$ and $V$ (but not on $B$ or $\lambda$).
\end{lemma}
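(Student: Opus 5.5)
We plan to decompose each summand of the leave-one-out estimator around the ideal (population) baseline $b^\star=\mu_U/\mu_V$, writing $h(\mathbf X^{(k)})-b_k=\big(h(\mathbf X^{(k)})-b^\star\big)+\big(b^\star-b_k\big)$. Because $b_k$ is built only from the other $s=B-1$ samples, it is independent of $\mathbf X^{(k)}$ conditionally on $\boldsymbol\mu_t$; this is exactly the structure that gave unbiasedness in Lemma~\ref{lem:loo-unbiased}. We then bound the second moment by the second moment of a single term (Jensen over the average of $B$ identically distributed summands) and apply $\|a+b\|^2\le\|a\|^2+2\|a\|\|b\|+\|b\|^2$. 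This splits the bound into three parts: an ``oracle-baseline'' term, a cross term, and a ``baseline-estimation error'' term.

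For the oracle term we expand $\mathbb E[(h-b)^2\|S\|^2]=\mathbb E[h^2\|S\|^2]-2b\,\mu_U+b^2\mu_V$. This quadratic in $b$ is minimized exactly at $b=b^\star$, where it equals $\mathbb E[h^2\|S\|^2]\big(1-R^2_{\theta,\boldsymbol\mu_t,\Sigma}\big)$ by the definition of $R^2$. To reach the stated form, we bound $\mathbb E[h^2\|S\|^2]\le g_{*,\theta}^2\,\mathbb E\|S\|^2$, using $0\le h\le g_{*,\theta}$, which follows from Assumption~\ref{asm:g-essential} and the maximality of $\mathbf x^*$. We also need $\mathbb E\|S\|^2= I\,S_2(\Sigma)$ up to the convention that $\sum_i\sigma_i^{-2}\le S_2(\Sigma)$, which is the same computation as in Lemma~\ref{lem:mc_second_moment} and Corollary~\ref{cor:dimwise-rate}. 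Together these give the first term $(1-R^2)g_{*,\theta}^2 I S_2(\Sigma)$.

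For the remaining terms, independence lets us factor $\mathbb E[(b^\star-b_k)^2\|S(\mathbf X^{(k)})\|^2]=\mathbb E[(b^\star-b_k)^2]\,\mu_V$, and similarly for the cross term via Cauchy--Schwarz. It therefore suffices to control the mean-square error of $b_k=U_{-k}/(V_{-k}+\lambda)$, where $U_{-k},V_{-k}$ are sums over the $s$ remaining samples. We write $b^\star-b_k$ as a deterministic ridge bias plus a fluctuation. The bias is $b^\star-\tfrac{\mu_U}{\mu_V+\lambda}=\tfrac{b^\star\lambda}{\mu_V+\lambda}$ after normalizing sums by $s$, and it produces the first summand of $C_{\mathrm{loo}}$. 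For the fluctuation we linearize the ratio: a first-order expansion in $U_{-k}-\mathbb E U_{-k}$ and $V_{-k}-\mathbb E V_{-k}$, with the denominator bounded below by $\lambda$, gives terms of size $\mathrm{Std}(U)/(\lambda\sqrt s)$ and $\mu_U\mathrm{Std}(V)/(\lambda^2\sqrt s)$. These enter through the cross term, and their squares give $2\mathrm{Var}(U)/(\lambda^2 s)$ and $4\mu_U^2\mathrm{Var}(V)/(\lambda^4 s)$ via $(a+b)^2\le 2a^2+2b^2$. Second-order remainders of the expansion are collected into the $O(1/(\lambda^2 s))$ and $O(1/(\lambda^4 s^2))$ terms. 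Finally we factor out $I\,S_2(\Sigma)$, using the same bound on $\mu_V$.

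The main obstacle is the ratio-estimator step, specifically making the linearization rigorous rather than heuristic. We would first try to avoid a Taylor expansion by using the exact identity
\[
\frac{u}{v+\lambda}-\frac{\bar u}{\bar v+\lambda}=\frac{u-\bar u}{v+\lambda}-\frac{\bar u\,(v-\bar v)}{(v+\lambda)(\bar v+\lambda)},
\]
together with $v\ge 0$, which gives denominators at least $\lambda$. Squaring and taking expectations with Chebyshev-type moment bounds should then yield each listed term with explicit constants, leaving only genuinely higher-order products inside the $O(\cdot)$ terms. The cross term's dependence on $\|h-b^\star\|$ would be absorbed using $|h-b^\star|\le g_{*,\theta}$.
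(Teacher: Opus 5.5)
\textbf{There is a gap in how you handle the cross term.} You expand with $\|a+b\|^2\le\|a\|^2+2\|a\|\|b\|+\|b\|^2$ and then use $|h-b^\star|\le g_{*,\theta}$. This bounds the cross term by $2\,\mathbb E|b^\star-b_k|\cdot\mathbb E\big[|h(\mathbf X)-b^\star|\,\|S(\mathbf X)\|^2\big]$, which is \emph{first order} in $b^\star-b_k$. Under the convention used for the lemma, $\mathbb E|b^\star-b_k|\to\delta_0=b^\star\lambda/(\mu_V+\lambda)$ as $s\to\infty$. So your cross term leaves a contribution of order $g_{*,\theta}\,\delta_0\,I\,S_2(\Sigma)$, which is linear in $\lambda$ and does not decay with $B$. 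Nothing in $C_{\mathrm{loo}}(B,\lambda)$ dominates it: the ridge bias enters $C_{\mathrm{loo}}$ only as $\delta_0^2$, and every other term vanishes as $s\to\infty$ for fixed $\lambda$. Your chain of inequalities therefore ends at a strictly weaker bound than the one stated. The $\mathrm{Std}$ terms you route through this cross term also pick up a factor $g_{*,\theta}$ that is not in the stated form. Downstream, with $\lambda=(B-1)^{-1/8}$, the extra term is of order $B^{-1/8}$, which would break the $O(B^{-1/4})$ claim of the next theorem.

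\textbf{The missing observation is that the signed cross term vanishes exactly, so it should not be bounded in absolute value.} Keep the exact expansion $(h-b_k)^2=(h-b^\star)^2+2(h-b^\star)(b^\star-b_k)+(b^\star-b_k)^2$. Because $b_k$ depends only on the other $s$ samples, conditional independence gives $\mathbb E\big[(b^\star-b_k)(h(\mathbf X^{(k)})-b^\star)\|S(\mathbf X^{(k)})\|^2\mid\boldsymbol\mu_t\big]=\mathbb E[b^\star-b_k\mid\boldsymbol\mu_t]\,(\mu_U-b^\star\mu_V)=0$. This is exactly the first-order condition you already wrote down when minimizing the quadratic in $b$. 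It yields the identity $\mathbb E[(h-b_k)^2\|S\|^2\mid\boldsymbol\mu_t]=(1-R^2_{\theta,\boldsymbol\mu_t,\Sigma})\,\mathbb E[h^2\|S\|^2\mid\boldsymbol\mu_t]+\mathbb E[(b^\star-b_k)^2\mid\boldsymbol\mu_t]\,\mu_V$, which is the paper's route. All of $C_{\mathrm{loo}}$ is then a bound on the baseline mean-square error alone. Writing $b^\star-b_k=\delta_0-\Delta$, the $\mathrm{Std}$ terms come from the cross term $-2\delta_0\,\mathbb E[\Delta]$ inside $\mathbb E[(\delta_0-\Delta)^2]$, not from the $(h-b^\star)(b^\star-b_k)$ term, and the variance terms come from $\mathbb E[\Delta^2]$. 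For that last step your exact ratio identity is a good substitute for the paper's mean-value argument. Since the centering point is deterministic, it gives $|\Delta|\le|\Delta_u|/\lambda+\mu_U|\Delta_v|/(\lambda(\mu_V+\lambda))$ directly, with no product remainder in $|\Delta_u||\Delta_v|$.
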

See Appendix~\ref{app:loo-second-moment} for the proof.
Lemma~\ref{lem:loo-second-moment} characterizes how variance reduction through a leave-one-out baseline affects the second moment of the score estimator.
The leading term shows that the variance (or the second moment) is reduced proportionally to $(1-R^2_{\theta,\mu,\Sigma})$, where $R^2_{\theta,\mu,\Sigma}$ measures the alignment between the score magnitude and the transformed objective.
The additional term $C_{\mathrm{loo}}(B,\lambda)$ captures the bias induced by using a regularized finite-sample baseline defined in \eqref{eq:loo-baseline}, instead of the variance-optimal one $b^\star$, and is explicitly controlled by the ridge parameter $\lambda$.

\begin{theorem}\label{thm:loo-simplified}
With the setting of Lemma~\ref{lem:loo-unbiased} and ~\ref{lem:loo-second-moment}, choose $\lambda=(B-1)^{-1/8}$. Then $C_{\mathrm{loo}}(B,\lambda)=O(B^{-1/4})$ and
{\footnotesize\begin{align}
\mathbb{E}&\left[\left\|\widehat{\nabla}_{\!\boldsymbol\mu}^{\,\mathrm{loo}}G_{\theta,\Sigma}(\boldsymbol\mu_t)\right\|^{2}\middle|\boldsymbol\mu_t\right]\nonumber\\
&~\le~\left((1-R^{2}_{\theta,\boldsymbol\mu_t,\Sigma})\,g_{*,\theta}^{2}+O(B^{-1/4})\right)IS_{2}(\Sigma).
\end{align}}%
Define $C_{\theta,\Sigma}:=\min_{t\in[T]}\mathbb{E}_{\mu_{t}}\left[R^{2}_{\theta,\boldsymbol\mu_t,\Sigma}\right]$ and $B:=\Omega(16/(C_{\theta,\Sigma}g_{*,\theta}^{2})^4)$, then,
{\footnotesize\begin{align}
\mathbb{E}\!\left[\left\|\widehat{\nabla}_{\!\boldsymbol\mu}^{\,\mathrm{loo}}G_{\theta,\Sigma}(\boldsymbol\mu_t)\right\|^{2}\right]
\leq \left(1-C_{\theta,\Sigma}/2\right)g_{*,\theta}^{2}IS_{2}(\Sigma).
\end{align}}%
\end{theorem}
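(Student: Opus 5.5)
The plan is to start from the master bound of Lemma~\ref{lem:loo-second-moment} and split the work into two parts. The first part bounds $C_{\mathrm{loo}}(B,\lambda)$ under the choice $\lambda=s^{-1/8}$ with $s=B-1$. The second part takes total expectation over $\boldsymbol\mu_t$ and absorbs the residual term into half of the variance-reduction gain.

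\textbf{Step 1: the choice of $\lambda$.} I substitute $\lambda=s^{-1/8}$ into \eqref{eq:Cloo-def} and read off the order of each term.
\begin{itemize}
\item The first, bias-type term is $(b^\star\lambda/(\mu_V+\lambda))^2\le (b^\star/\mu_V)^2\lambda^2=O(s^{-1/4})$.
\item The $1/(\lambda\sqrt{s})$ term is $O(s^{-3/8})$.
\item The $1/(\lambda^2\sqrt s)$ term is $O(s^{-1/4})$.
\item The terms $1/(\lambda^2 s)$ and $\mathrm{Var}(U)/(\lambda^2 s)$ are $O(s^{-3/4})$.
\item The term $\mathrm{Var}(V)/(\lambda^4 s)$ is $O(s^{-1/2})$.
\item The term $1/(\lambda^4 s^2)$ is $O(s^{-3/2})$.
\end{itemize}
The worst order is therefore $s^{-1/4}$, which gives $C_{\mathrm{loo}}=O(B^{-1/4})$.

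For this bookkeeping I need the constants $\mu_U,\mu_V,\mathrm{Std}(U),\mathrm{Std}(V),b^\star$ to be bounded uniformly in $\boldsymbol\mu$. This follows because $0\le h\le g_{*,\theta}$ on $\mathcal S$, and because the moments of $\|S\|^2$ under the product kernel depend only on $p$ and $\Sigma$, not on $\boldsymbol\mu$. Finiteness of the second moment of $\|S\|^2$ (a fourth-moment condition on the score) is implicit in Lemma~\ref{lem:loo-second-moment}, and I will assume it.

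Factoring out $I S_2(\Sigma)$ then gives the first displayed inequality directly.

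\textbf{Step 2: unconditioning.} I take expectation over $\boldsymbol\mu_t$ using the tower property. The conditional bound is linear in $R^2_{\theta,\boldsymbol\mu_t,\Sigma}$, so this yields
\[
\mathbb E\bigl[\|\widehat\nabla^{\mathrm{loo}}\|^2\bigr]\le\bigl((1-\mathbb E[R^2_{\theta,\boldsymbol\mu_t,\Sigma}])g_{*,\theta}^2+c_0B^{-1/4}\bigr)IS_2(\Sigma).
\]
By the definition of $C_{\theta,\Sigma}$ as a minimum over $t$, we have $\mathbb E[R^2]\ge C_{\theta,\Sigma}$ for every $t\in[T]$. It then suffices to have $c_0B^{-1/4}\le C_{\theta,\Sigma}g_{*,\theta}^2/2$, which is equivalent to $B\ge (2c_0/(C_{\theta,\Sigma}g_{*,\theta}^2))^4$. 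Up to the constant $c_0$, this is the stated $B=\Omega(16/(C_{\theta,\Sigma}g_{*,\theta}^2)^4)$, and it yields the factor $1-C_{\theta,\Sigma}/2$.

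\textbf{Main obstacle.} The main difficulty is Step 1's requirement that the hidden $O(\cdot)$ constants be uniform in $\boldsymbol\mu$, so that a single $B$ works for all iterates. I would argue this by bounding every moment of $U$ and $V$ by powers of $g_{*,\theta}$ times moments of $\|S\|^2$. I would also check that $R^2\in[0,1]$ by Cauchy--Schwarz, so that $1-C_{\theta,\Sigma}/2$ is a meaningful contraction.
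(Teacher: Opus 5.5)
Your proposal is correct and follows the paper's proof step for step. Like the paper, you substitute $\lambda=(B-1)^{-1/8}$ into \eqref{eq:Cloo-def} to get $C_{\mathrm{loo}}=O(B^{-1/4})$, take total expectation, use $\mathbb E[R^2_{\theta,\boldsymbol\mu_t,\Sigma}]\ge C_{\theta,\Sigma}$, and choose $B$ so that the residual is at most $\tfrac12 C_{\theta,\Sigma}g_{*,\theta}^2$. Your term-by-term order check, and your point that the hidden constants must be uniform in $\boldsymbol\mu$ (true because the law of $S(\mathbf X)$ does not depend on $\boldsymbol\mu$ and $0\le h\le g_{*,\theta}$), are details the paper asserts without verification.
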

See Appendix~\ref{app:loo-simplified} for the proof.
Theorem~\ref{thm:loo-simplified} selects an explicit ridge parameter that balances baseline estimation bias and variance.
The key implication of Theorem~\ref{thm:loo-simplified} is that introducing a
leave-one-out baseline strictly reduces the variance of the score estimator compared to the baseline-free estimator.
The leading second-moment term is multiplied by $(1-C_{\theta,\Sigma}/2)$,
which is always no larger than one and is strictly smaller whenever the baseline
is informative.

\begin{corollary}\label{cor:final-LOO-complexity}
Under the same conditions of Theorem~\ref{thm:loo-simplified}.
Let the step size be $\eta_t=S_2(\Sigma)^{-1}(t+1)^{-(\frac12+\gamma)}$ with any $\gamma\in(0,\frac12)$. Then, ProMoT achieves $\mathbb{E}[\|\nabla_{\boldsymbol\mu}G_{\theta,\Sigma}(\boldsymbol\mu_T)\|^{2}]<\varepsilon$, whenever
{\small\begin{align}
T~>~\left[\frac{C_{\gamma}S_2(\Sigma)}{\varepsilon}\,\Big\{g_{*,\theta}+ I \max(K,I)\, g_{*,\theta}^{3}\big(1-\tfrac{C_{\theta,\Sigma}}{2}\big)\Big\}\right]^{\!\frac{2}{1-2\gamma}}.\label{eq:final-LOO-Tbound}
\end{align}}
\end{corollary}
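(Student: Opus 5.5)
The plan is to repeat the argument behind Corollary~\ref{cor:dimwise-rate}, with the baseline-free second-moment bound $Q_\theta(\Sigma)=g_{*,\theta}^2 I S_2(\Sigma)$ replaced by the reduced bound of Theorem~\ref{thm:loo-simplified}.

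\textbf{Step 1 (one-step inequality).} We start from the one-step inequality underlying Theorem~\ref{thm:onestep-promot}. By Lemma~\ref{lem:lip_grad_GNs} in its anisotropic form (Corollary~\ref{cor:dimwise-rate}), $\nabla_{\boldsymbol\mu}G_{\theta,\Sigma}$ is $L_\theta(\Sigma)=g_{*,\theta}\max(K,I)S_2(\Sigma)$-Lipschitz. The ascent lemma therefore gives
\[
G(\boldsymbol\mu_{t+1})\ge G(\boldsymbol\mu_t)+\eta_t\langle\nabla G(\boldsymbol\mu_t),\widehat\nabla^{\mathrm{loo}}\rangle-\tfrac{L_\theta}{2}\eta_t^2\|\widehat\nabla^{\mathrm{loo}}\|^2 .
\]
We take the conditional expectation given $\boldsymbol\mu_t$. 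Lemma~\ref{lem:loo-unbiased} turns the inner product into $\|\nabla G(\boldsymbol\mu_t)\|^2$. Theorem~\ref{thm:loo-simplified}, under the stated choices $\lambda=(B-1)^{-1/8}$ and large $B$, bounds the total expectation of the squared norm by $(1-C_{\theta,\Sigma}/2)g_{*,\theta}^2 I S_2(\Sigma)$.

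\textbf{Step 2 (telescoping).} We sum over $t$ and use $G_{\theta,\Sigma}\le g_{*,\theta}$, which holds because $g$ is increasing and the kernel integrates to at most one on $\mathcal S$. This yields
\[
\sum_t\eta_t\mathbb E\|\nabla G(\boldsymbol\mu_t)\|^2\le g_{*,\theta}-\mathbb E G(\boldsymbol\mu_0)+\tfrac12 g_{*,\theta}\max(K,I)S_2(\Sigma)\big(1-\tfrac{C_{\theta,\Sigma}}2\big)g_{*,\theta}^2IS_2(\Sigma)\sum_t\eta_t^2 .
\]
We then bound $g_{*,\theta}-\mathbb E G(\boldsymbol\mu_0)\le g_{*,\theta}$, assuming $g\ge0$ as in the positive transforms considered; otherwise a constant offset is absorbed.

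\textbf{Step 3 (step-size sums).} We plug in $\eta_t=S_2(\Sigma)^{-1}(t+1)^{-(1/2+\gamma)}$. The factor $S_2(\Sigma)^2\eta_t^2$ collapses to $(t+1)^{-(1+2\gamma)}$, whose sum is bounded by a constant depending only on $\gamma$. The left-hand side is at least $S_2(\Sigma)^{-1}\sum_t(t+1)^{-(1/2+\gamma)}\min_t\mathbb E\|\nabla G\|^2$, or equivalently the value at a randomly selected output iterate, as in the proof of Corollary~\ref{cor:pgs-epgs-conv}. We lower-bound $\sum_{t<T}(t+1)^{-(1/2+\gamma)}$ by a multiple of $T^{1/2-\gamma}$, and the constant $C_\gamma=(\tfrac12-\gamma)/(2^{1/2-\gamma}-1)$ arises exactly as in the appendix proof of Corollary~\ref{cor:pgs-epgs-conv}. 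Multiplying through, the gradient quantity is at most
\[
C_\gamma S_2(\Sigma)\Big(g_{*,\theta}+I\max(K,I)g_{*,\theta}^3(1-\tfrac{C_{\theta,\Sigma}}2)\Big)\Big/T^{\frac{1-2\gamma}{2}}
\]
(up to the same absorbed constants). Requiring this to be below $\varepsilon$ and solving for $T$ gives \eqref{eq:final-LOO-Tbound}.

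\textbf{Main obstacle.} The delicate point is Step 1's use of the reduced second moment inside the total expectation. Theorem~\ref{thm:loo-simplified} gives a conditional bound involving $R^2_{\theta,\boldsymbol\mu_t,\Sigma}$, which varies with $t$. We must pass to $\mathbb E_{\boldsymbol\mu_t}[R^2]\ge C_{\theta,\Sigma}$ uniformly over $t\in[T]$, and absorb the $O(B^{-1/4})$ remainder into half of $C_{\theta,\Sigma}g_{*,\theta}^2$ via the choice of $B$. Since the second moment enters linearly after taking expectations, the minimum over $t$ applies uniformly, so the constant factor can be pulled out of the sum. The rest is identical bookkeeping to Corollary~\ref{cor:dimwise-rate}: only the variance term is scaled by $(1-C_{\theta,\Sigma}/2)$, while the initial-gap term $g_{*,\theta}$ is unchanged.
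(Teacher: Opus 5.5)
Your route is the paper's route. You plug the unconditional leave-one-out bound $(1-C_{\theta,\Sigma}/2)\,g_{*,\theta}^{2}IS_2(\Sigma)$ from Theorem~\ref{thm:loo-simplified} into the telescoping of Theorem~\ref{thm:onestep-promot}, with $L_\theta(\Sigma)=g_{*,\theta}\max(K,I)S_2(\Sigma)$, and then use the polynomial step sums. Your Step~1 is handled correctly: Lemma~\ref{lem:loo-unbiased} kills the cross term after conditioning, the total-expectation bound controls the quadratic term, and $g\ge 0$ handles the initial gap.

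The argument fails to reach the displayed statement at the phrase ``up to the same absorbed constants.'' You (and the paper) use the sum bounds $\sum_{t<T}\eta_t^2\le(1+\frac{1}{2\gamma})S_2(\Sigma)^{-2}$ and $\sum_{t<T}\eta_t\ge C_\gamma^{-1}S_2(\Sigma)^{-1}T^{\frac12-\gamma}$. With these, the argument actually yields
\[
\min_{0\le t<T}\mathbb{E}\bigl[\|\nabla_{\boldsymbol\mu}G_{\theta,\Sigma}(\boldsymbol\mu_t)\|^{2}\bigr]
\le\frac{C_\gamma S_2(\Sigma)}{T^{\frac12-\gamma}}\Big\{g_{*,\theta}+\Big(\frac12+\frac{1}{4\gamma}\Big)I\max(K,I)\,g_{*,\theta}^{3}\Big(1-\frac{C_{\theta,\Sigma}}{2}\Big)\Big\}.
\]
Because $\gamma<\frac12$, the factor $\frac12+\frac{1}{4\gamma}$ is strictly larger than $1$, and it is of order $1/\gamma$ as $\gamma\to0$. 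It therefore cannot be absorbed into the bracket of the statement. Solving for $T$ gives the displayed threshold only after the variance term is multiplied by this factor.

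There is a second gap. The telescoped inequality bounds $\min_{t<T}\mathbb{E}\|\nabla_{\boldsymbol\mu}G_{\theta,\Sigma}(\boldsymbol\mu_t)\|^2$, or the gradient at an iterate sampled with probability proportional to $\eta_t$. The statement, however, concerns the last iterate $\boldsymbol\mu_T$. Nothing in your argument transfers the bound to $\boldsymbol\mu_T$; for example, you never show that $t\mapsto\mathbb{E}\|\nabla_{\boldsymbol\mu}G_{\theta,\Sigma}(\boldsymbol\mu_t)\|^2$ is monotone. So your conclusion has to be stated for the best or randomized iterate.

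Both defects appear verbatim in the paper's proof. It inherits them from the proof of Corollary~\ref{cor:pgs-epgs-conv}, which likewise drops the factor $\frac12(1+\frac{1}{2\gamma})$ and bounds only the minimum over $t$. What your proposal and the paper actually establish is the $T^{-(\frac12-\gamma)}$ rate, with the $(1-C_{\theta,\Sigma}/2)$ improvement on the variance term, for a best or randomized iterate and with a larger $\gamma$-dependent constant than the one displayed.
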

The proof can be found in Appendix~\ref{app:final-LOO-complexity}.
Corollary~\ref{cor:final-LOO-complexity} shows that the leave-one-out baseline yields a \emph{provable} reduction in the iteration complexity of ProMoT by strictly improving the second-moment bound of the score estimator.
Moreover, as $\gamma \to 0$, the iteration complexity bound approaches the rate shown in Table~\ref{tab:zo_optimality_complexity}.
To the best of our knowledge, this is the first analysis that makes the benefit of variance reduction explicit at the level of iteration complexity, rather than variance stabilization alone.
This theoretical improvement is consistent with the empirical results in Figure~1, where ProMoT-loo achieves the best convergence behavior among all compared methods.

\section{Experiments}
In this section, we evaluate ProMoT and ProMoT-loo on
(i) canonical high-dimensional non-convex optimization benchmarks and
(ii) real-world black-box targeted adversarial attack tasks.
We compare against the following baselines:
EPGS~\cite{xu2025power},
RSGF~\cite{ghadimi2013stochastic},
ZO-SGD~\cite{nesterov2017random},
ZO-AdaMM~\cite{chen2019zo},
ZO-SLGHd/r~\cite{iwakiri2022single_loop},
and CMA-ES~\cite{hansen2001completely,hansen2019pycma}.

\paragraph{Evaluation metrics.}
For canonical optimization benchmarks, we report:
(i) \emph{Mean Squared Error (MSE)}, defined as the minimum squared distance between
the optimization trajectory and the true global optimum;
(ii) \emph{hitting time}, defined as the iteration at which this minimum MSE is first achieved;
and (iii) \emph{best value}, defined as the objective value corresponding to the minimum MSE.
All metrics are averaged over independent runs, with standard deviations reported in parentheses.

For adversarial attack experiments, we report:
(i) \emph{success rate (SR)}, defined as the fraction of inputs for which a successful attack is achieved;
(ii) mean $R^2(\mathbf{x},\mathbf{x}+\boldsymbol{\mu})$, measuring similarity between the original input and the perturbed input;
and (iii) mean $L_\infty$ norm of the perturbation, capturing the maximum per-coordinate distortion.
Higher $R^2$ and lower $L_\infty$ indicate less perceptible adversarial attacks.

\begin{table}[h]
\vspace{-7pt}
\caption{Ackley}
\vspace{-5pt}
\label{tab:ackley}
\centering
\resizebox{\linewidth}{!}{
\small
\begin{tabular}{llll}
\hline
Method       & MSE         & Hitting Time & Best Value       \\ \hline
ProMoT       & 0.49(0.04)  & 393.60(7.70) & -4.33(0.11)  \\
ProMoT-loo   & \textbf{0.04(0.00)}  & 392.60(9.06) & \textbf{-1.71(0.05)}  \\
EPGS         & 5.98(0.93)  & 399.50(1.07) & -9.41(0.49)  \\
RSGF & 24.90(0.13) & 22.65(38.56) & -13.42(0.69) \\
ZO-SGD       & 0.28(0.05)  & 397.75(3.30) & -3.37(0.16)  \\
ZO-AdaMM     & 12.25(0.70) & 395.40(5.65) & -10.76(0.20) \\
ZO-SLGHd     & 0.58(0.15)  & 398.70(1.62) & -4.32(0.31)  \\
ZO-SLGHr     & 0.57(0.11)  & 399.25(1.22) & -4.33(0.23)  \\
CMA-ES       & 0.09(0.01)  & 397.85(1.35) & -2.61(0.10)  \\ \hline
\end{tabular}
}
\vspace{-12pt}\end{table}
\begin{table}[h]
\caption{Rosenbrock}
\vspace{-5pt}
\label{tab:rosenbrock}
\centering
\resizebox{\linewidth}{!}{
\small
\begin{tabular}{llll}
\hline
Method        & MSE        & Hitting Time & Best Value      \\ \hline
ProMoT        & 0.13(0.01) & 398.95(1.50) & -38.63(2.30)    \\
ProMoT-loo    & \textbf{0.02(0.01)} & 396.40(4.66) & \textbf{-3.11(0.21)}     \\
EPGS          & 0.37(0.01) & 399.85(0.36) & -113.43(4.32)   \\
RSGF  & 3.99(0.01) & 0.85(1.24)   & -3622.45(29.14) \\
ZO-SGD        & 0.06(0.01) & 400.00(0.00) & -3.81(0.44)     \\
ZO-AdaMM      & 0.05(0.01) & 400.00(0.00) & -5.95(0.45)     \\
ZO-SLGHd      & 0.44(0.02) & 400.00(0.00) & -120.81(4.36)   \\
ZO-SLGHr      & 0.40(0.02) & 400.00(0.00) & -111.46(3.61)   \\
CMA-ES        & 0.65(0.03) & 96.90(8.04)  & -449.42(56.74)  \\ \hline
\end{tabular}
}
\vspace{-12pt}
\end{table}
\begin{table}[h]
\caption{Griewank}
\vspace{-5pt}
\label{tab:griewank}
\centering
\resizebox{\linewidth}{!}{
\small
\begin{tabular}{llll}
\hline
Method        & MSE         & Hitting Time & Best Value   \\ \hline
ProMoT        & 1.95(0.12)  & 399.05(1.02) & -1.03(0.35)  \\
ProMoT-loo    & \textbf{0.23(0.01)}  & 394.60(7.53) & 74.71(4.41)  \\
EPGS          & 3.32(0.17)  & 399.30(1.58) & -1.42(0.02)  \\
RSGF  & 24.86(0.17) & 24.00(29.72) & -4.11(0.02)  \\
ZO-SGD        & 3.62(0.06)  & 400.00(0.00) & -1.45(0.01)  \\
ZO-AdaMM      & 0.24(0.05)  & 397.10(5.10) & \textbf{105.38(1.79)} \\
ZO-SLGHd      & 2.77(0.16)  & 228.00(8.88) & -0.75(0.31)  \\
ZO-SLGHr      & 2.81(0.17)  & 227.35(9.98) & -0.89(0.30)  \\
CMA-ES        & 1.16(0.11)  & 395.70(2.59) & 38.55(4.15)  \\ \hline
\end{tabular}
}
\vspace{-15pt}
\end{table}
\subsection{Canonical Non-Convex Benchmarks}
All canonical benchmark experiments are conducted in a high-dimensional setting with dimension $d=500$.
Across the Ackley, Rosenbrock, and Griewank benchmarks (Tables~\ref{tab:ackley}--\ref{tab:griewank}),
ProMoT-loo consistently achieves the lowest MSE, indicating reliable convergence toward the global optimum across diverse non-convex landscapes.
On Ackley and Rosenbrock, it attains both the lowest MSE and the best fitted values, demonstrating robustness to strong multimodality and curvature anisotropy, while on Griewank it maintains the lowest MSE with competitive objective values.
One contributing factor to this advantage in high dimensions is the choice of the smoothing kernel.
As the dimension increases, Gaussian smoothing concentrates most samples within a narrow neighborhood (e.g., within $3\sigma$), which limits effective exploration, whereas the heavier-tailed kernels used in ProMoT maintain a higher probability of sampling distant regions, enabling more effective global search and improved performance.
Overall, these results highlight that combining heavier-tailed smoothing kernels with ratio-monotone transformations, together with the leave-one-out estimator, yields stable and consistently superior performance.

\subsection{Black-Box Targeted Adversarial Attacks}
We evaluate ProMoT and ProMoT-loo on black-box targeted adversarial attacks against models trained on CIFAR-10 and VitalDB, with results summarized in Tables~\ref{tab:cifar10} and~\ref{tab:vitaldb}.
For CIFAR-10, the input dimension is $d=3{,}072$, corresponding to $32\times32\times3$ images.
For VitalDB, the final input dimension after preprocessing is $d=42$.
Across both domains, ProMoT-loo consistently achieves the smallest $L_\infty$ perturbations while maintaining high attack success rates and competitive mean $R^2$, indicating less perceptible adversarial examples.
On CIFAR-10, ProMoT-loo outperforms all baselines in terms of $L_\infty$ without sacrificing reconstruction quality, while on VitalDB—a non-smooth tree-based classifier with severe class imbalance—it again yields the smallest perturbations and remains competitive in $R^2$.
Notably, although ProMoT-loo attains a slightly lower mean $R^2$ than CMA-ES (by approximately 1\%), it achieves about a 5\% reduction in mean $L_\infty$, demonstrating improved imperceptibility with comparable reconstruction quality.
Overall, these results highlight the robustness of the proposed framework across heterogeneous black-box models and data modalities.

\begin{table}[t]
\caption{CIFAR-10}
\vspace{-5pt}
\label{tab:cifar10}
\centering
{\small
\begin{tabular}{llll}
\hline
Method        & SR   & mean $R^2$ & mean $L_\infty$ \\ \hline
ProMoT        & 1    & 0.97(0.02)  & 0.28(0.07) \\
ProMoT-loo    & 1    & \textbf{0.98(0.01)}  & \textbf{0.20(0.06)} \\
EPGS          & 1    & 0.97(0.02)  & 0.28(0.07) \\
RSGF  & 0.34 & -0.50(1.50) & 1.49(0.54) \\
ZO-SGD        & 1    & 0.98(0.02)  & 0.25(0.07) \\
ZO-AdaMM      & 1    & 0.96(0.03)  & 0.31(0.08) \\
ZO-SLGHd      & 0.99 & 0.98(0.01)  & 0.22(0.06) \\
ZO-SLGHr      & 1    & 0.95(0.04)  & 0.31(0.09) \\
CMA-ES        & 1    & 0.94(0.04)  & 0.32(0.04) \\ \hline
\end{tabular}
}
\vspace{-10pt}
\end{table}
\begin{table}[t]
\caption{VitalDB}
\vspace{-5pt}
\label{tab:vitaldb}
\centering
{\small
\begin{tabular}{llll}
\hline
Method        & SR   & mean $R^2$ & mean $L_\infty$ \\ \hline
ProMoT        & 1    & 0.97(0.01) & 0.58(0.17)      \\
ProMoT-loo    & 1    & 0.98(0.01) & \textbf{0.57(0.17)}      \\
EPGS          & 1    & 0.97(0.01) & 0.65(0.18)      \\
RSGF  & 1    & 0.91(0.04) & 0.92(0.23)      \\
ZO-SGD        & 1    & 0.83(0.09) & 1.24(0.33)      \\
ZO-AdaMM      & 1    & 0.96(0.02) & 0.82(0.27)      \\
ZO-SLGHd      & 1    & 0.87(0.04) & 1.24(0.25)      \\
ZO-SLGHr      & 1    & 0.95(0.02) & 0.80(0.19)      \\
CMA-ES        & 1    & \textbf{0.99(0.01)} & 0.60(0.27)      \\ \hline
\end{tabular}
}
\vspace{-15pt}
\end{table}

\section{Conclusion}
We introduced ProMoT, a single-loop probabilistic smoothing framework that achieves approximate global optimization using general smoothing distributions and ratio-monotone transformations.
We established global localization and iteration-complexity guarantees with provable variance reduction, and validated robustness on high-dimensional benchmarks and black-box adversarial attacks.




\section*{Impact Statement}

This paper presents work whose goal is to advance the field of Machine
Learning. There are many potential societal consequences of our work, none
which we feel must be specifically highlighted here.

\bibliography{reference}
\bibliographystyle{icml2026}

\newpage
\appendix
\onecolumn

\section{Verification of Transformation Assumptions}
\label{sec:transform-check}

In this section, we verify that the transformation families listed in
Table~\ref{tab:transforms} satisfy
Assumption~\ref{asm:g-essential} (measurability, nonnegativity, and monotonicity)
and Assumption~\ref{asm:g-ratio} (ratio monotonicity).
When applicable, we also explicitly state a boundedness constant $G_\theta$.

Throughout, we fix $\theta>0$ and assume that $y$ lies in the domain where each
transformation is well-defined (e.g., $y>-c$ when $(y+c)$ appears).

\subsection*{Power: $g(\theta,y)=(y+c)^{\theta}$, $c\ge 0$, $y>-c$}

\textbf{Assumption~\ref{asm:g-essential}.}
We have
\[
\partial_y g(\theta,y)=\theta (y+c)^{\theta-1}\ge 0
\quad \text{for } \theta>0,
\]
so $g$ is nondecreasing in $y$ and nonnegative.
Measurability is immediate.

\textbf{Assumption~\ref{asm:g-ratio}.}
For $a>b>-c$,
\[
\frac{g(\theta,a)}{g(\theta,b)}
=\Big(\frac{a+c}{b+c}\Big)^{\theta},
\]
and since $(a+c)/(b+c)>1$, the ratio is strictly increasing in $\theta$.

\subsection*{Exponential: $g(\theta,y)=e^{\theta y}$}

\textbf{Assumption~\ref{asm:g-essential}.}
\[
\partial_y g(\theta,y)=\theta e^{\theta y}\ge 0,
\]
so $g$ is nondecreasing, nonnegative, and measurable.

\textbf{Assumption~\ref{asm:g-ratio}.}
For $a>b$,
\[
\frac{g(\theta,a)}{g(\theta,b)}=e^{\theta(a-b)},
\]
which is strictly increasing in $\theta$.

\subsection*{Softplus (log-exp): $g(\theta,y)=\log\!\big(1+e^{\theta y}\big)$}

\textbf{Assumption~\ref{asm:g-essential}.}
We have
\[
\partial_y g(\theta,y)
=\theta\,\sigma(\theta y)\ge 0,
\qquad
\sigma(t)=\frac{1}{1+e^{-t}},
\]
so $g$ is nondecreasing in $y$ for $\theta>0$.
Moreover, $g(\theta,y)\ge 0$ and is measurable, hence
Assumption~\ref{asm:g-essential} is satisfied.

\textbf{Assumption~\ref{asm:g-ratio}.}
For $a>b$, consider
\[
R(\theta)
:=\log g(\theta,a)-\log g(\theta,b)
=\log\!\Big(\frac{\log(1+e^{\theta a})}{\log(1+e^{\theta b})}\Big).
\]
It suffices to show that $R'(\theta)\ge 0$.
Define $\phi(t):=t\,\sigma(t)-\log(1+e^{t})$.
A direct calculation yields
\[
R'(\theta)
=\frac{\phi(\theta a)-\phi(\theta b)}
{\big(\log(1+e^{\theta a})\big)\big(\log(1+e^{\theta b})\big)}.
\]
Since
\[
\phi'(t)=t\,\sigma(t)\big(1-\sigma(t)\big)\ge 0
\quad \text{for } t\ge 0,
\]
the function $\phi$ is increasing on $[0,\infty)$.
Thus, for $a>b>0$ and $\theta>0$, we have
$\phi(\theta a)\ge \phi(\theta b)$ and hence $R'(\theta)\ge 0$.
Therefore, $\theta\mapsto g(\theta,a)/g(\theta,b)$ is nondecreasing,
establishing Assumption~\ref{asm:g-ratio}.

\paragraph{Remark.}
Compared to pure exponential transforms, the softplus form grows
exponentially only for large $\theta y$ while remaining nearly linear
for small $\theta y$, which moderates amplification and improves
gradient conditioning in practice.

\subsection*{Sigmoid--power (bounded): $g(\theta,y)=\sigma(\alpha y)^{\theta}$, $\alpha>0$}

\textbf{Assumption~\ref{asm:g-essential}.}
Since $\sigma(\alpha y)$ is increasing in $y$ and takes values in $(0,1)$,
$g(\theta,y)$ is nondecreasing, nonnegative, and measurable.
Moreover, $0<g(\theta,y)\le 1$, so the boundedness requirement in
Assumption~\ref{asm:g-essential} holds with $G_\theta=1$.

\textbf{Assumption~\ref{asm:g-ratio}.}
For $a>b$,
\[
\frac{g(\theta,a)}{g(\theta,b)}
=\Big(\frac{\sigma(\alpha a)}{\sigma(\alpha b)}\Big)^{\theta},
\]
which is strictly increasing in $\theta$.

\subsection*{Fractional exponential:
$g(\theta,y)=\exp\{\theta\,y_+^{\alpha}\}$, $\alpha>0$, $y_+=\max\{y,0\}$}

\textbf{Assumption~\ref{asm:g-essential}.}
The map $y\mapsto y_+^{\alpha}$ is nondecreasing and nonnegative, hence $g$ is
nondecreasing, nonnegative, and measurable.

\textbf{Assumption~\ref{asm:g-ratio}.}
For $a>b$,
\[
\frac{g(\theta,a)}{g(\theta,b)}
=\exp\{\theta(a_+^{\alpha}-b_+^{\alpha})\},
\]
which is strictly increasing in $\theta$.

\subsection*{Power--exponential hybrid:
$g(\theta,y)=(y+c)^{\beta}e^{\theta y}$, $\beta\ge 0$, $c\ge 0$, $y>-c$}

\textbf{Assumption~\ref{asm:g-essential}.}
\[
\partial_y g(\theta,y)
=\beta(y+c)^{\beta-1}e^{\theta y}
+\theta(y+c)^{\beta}e^{\theta y}
\ge 0,
\]
so $g$ is nondecreasing and nonnegative.

\textbf{Assumption~\ref{asm:g-ratio}.}
For $a>b>-c$,
\[
\frac{g(\theta,a)}{g(\theta,b)}
=\Big(\frac{a+c}{b+c}\Big)^{\beta}e^{\theta(a-b)},
\]
which is strictly increasing in $\theta$.

\subsection*{Hyperbolic variant:
$g(\theta,y)=\sinh(\theta(y+c))_+$}

Restricting to $y\ge -c$, $g$ is nondecreasing and nonnegative, satisfying
Assumption~\ref{asm:g-essential}.
For $a>b$, the ratio
$\sinh(\theta(a+c))/\sinh(\theta(b+c))$ is strictly increasing in $\theta$,
establishing Assumption~\ref{asm:g-ratio}.

\subsection*{Summary}
All transformation families above satisfy
Assumptions~\ref{asm:g-essential} and~\ref{asm:g-ratio} under the stated parameter
and domain conditions.
Exponential and fractional-exponential forms provide strong contrast at the cost
of increased variance, while softplus and sigmoid-based transformations offer
moderated amplification and improved stability through effective boundedness.

\section{Proofs of Theoretical Results}
\subsection{Proof of Theorem \ref{asm:p-non-deg}}\label{app:p-non-deg}
The proof can be done by following two lemmas.
{\color{black}\begin{lemma}[CDF gap on a compact window]\label{lem:cdf-gap}
Fix $\delta>0$. Let $p$ be a density that is continuous and strictly positive on
an interval $[a,b]$. Let $F$ be its CDF.
If $[x-\delta,x+\delta]\subset[a,b]$, then
\[
F(x+\delta)-F(x-\delta)=\int_{x-\delta}^{x+\delta}p(t)\,dt
\ \ge\ 2\delta\cdot \inf_{t\in[a,b]}p(t).
\]
\end{lemma}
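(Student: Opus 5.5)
The plan is to write the CDF increment as the integral of the density, which is the fundamental theorem of calculus, and then bound the integrand from below pointwise.

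\textbf{Step 1.} Since $p$ is a density, $F(s)=\int_{-\infty}^{s}p(t)\,dt$. Hence for any $x$,
\[
F(x+\delta)-F(x-\delta)=\int_{x-\delta}^{x+\delta}p(t)\,dt .
\]
This requires only integrability of $p$, and it gives the stated equality.

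\textbf{Step 2.} Set $m:=\inf_{t\in[a,b]}p(t)$. Because $p$ is continuous on the compact interval $[a,b]$, the infimum is attained at some point of $[a,b]$. Since $p$ is strictly positive there, $m>0$; this positivity is only needed later, when the lemma is used to get $C_\delta(c,r)>0$.

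\textbf{Step 3.} By hypothesis $[x-\delta,x+\delta]\subset[a,b]$, so $p(t)\ge m$ for every $t$ in the integration range. Monotonicity of the integral then gives
\[
\int_{x-\delta}^{x+\delta}p(t)\,dt\ \ge\ \int_{x-\delta}^{x+\delta}m\,dt = 2\delta m .
\]

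\textbf{Use in Theorem~\ref{asm:p-non-deg}.} There I would take $[a,b]=[c-r-\delta,\,c+r+\delta]$. Every $x$ with $|x-c|\le r$ then satisfies $[x-\delta,x+\delta]\subset[a,b]$, so the bound is uniform in $x$ and gives $C_\delta(c,r)=2\delta\inf_{[a,b]}p$.

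\textbf{Main obstacle.} The lemma itself has no real difficulty. The only subtle point belongs to the theorem: it must be checked that $p>0$ on that window. For a unimodal symmetric density this could fail if $p$ has compact support. I would either assume full support, or note that under Assumption~\ref{asm:p-sym-uni} positivity holds on the window whenever the window lies inside the support.
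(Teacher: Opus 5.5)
Your proof is correct and follows the paper's argument: write the increment as $\int_{x-\delta}^{x+\delta}p(t)\,dt$, then bound the integrand below by $\inf_{[a,b]}p$ over an interval of length $2\delta$. Your extra remarks go beyond what the paper states without changing the argument: that the infimum is attained, and that $p>0$ must be checked on the window when applying the lemma in Theorem~\ref{asm:p-non-deg} (since Assumption~\ref{asm:p-sym-uni} alone does not force full support).
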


\begin{proof}
By definition,
$F(x+\delta)-F(x-\delta)=\int_{x-\delta}^{x+\delta}p(t)\,dt$.
Since $[x-\delta,x+\delta]\subset[a,b]$, we have
$p(t)\ge \inf_{[a,b]}p$ for all $t$ in the integral range. Hence
{\small\begin{align}
\int_{x-\delta}^{x+\delta}p(t)\,dt \ge \int_{x-\delta}^{x+\delta}\inf_{[a,b]}p\,dt
=2\delta\,\inf_{[a,b]}p.
\end{align}}
\end{proof}

\begin{lemma}[PDF difference gap away from the mode]\label{lem:pdf-gap}
Assume $p\in C^{1}(\mathbb{R})$ is symmetric and unimodal, and strictly decreasing
on $(0,\infty)$. Fix $\delta>0$ and assume $|c|>r+\delta$.
Then there exists $c_{\delta}(c,r)>0$ such that
{\small\begin{align}
\inf_{|x-c|\le r}\big|p(x+\delta)-p(x-\delta)\big|\ \ge\ c_{\delta}(c,r).
\end{align}}
\end{lemma}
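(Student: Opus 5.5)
The plan is to use symmetry to reduce to a window on the positive half-line, show that the difference $p(x-\delta)-p(x+\delta)$ is strictly positive there because both points lie where $p$ is strictly decreasing, and conclude by continuity and compactness.

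\textbf{Step 1: reduce to $c>0$.} Set $\phi(x):=p(x-\delta)-p(x+\delta)$. By symmetry $p(z)=p(-z)$,
\[
\phi(-x)=p(-x-\delta)-p(-x+\delta)=p(x+\delta)-p(x-\delta)=-\phi(x).
\]
Hence $|\phi(-x)|=|\phi(x)|$. The window $\{|x-c|\le r\}$ is mapped onto $\{|x+c|\le r\}$ by $x\mapsto -x$. So the infimum for center $c$ equals the infimum for center $-c$, and I may assume $c>r+\delta>0$.

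\textbf{Step 2: place both evaluation points in $(0,\infty)$.} For every $x$ with $|x-c|\le r$,
\[
x-\delta\ \ge\ c-r-\delta\ >\ 0 .
\]
The last inequality is exactly the hypothesis $|c|>r+\delta$; this is the only place it is used. So $0<x-\delta<x+\delta$, and since $p$ is strictly decreasing on $(0,\infty)$ we get $p(x-\delta)>p(x+\delta)$, that is, $\phi(x)>0$ at every point of the window.

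\textbf{Step 3: compactness.} Since $p\in C^1$, $\phi$ is continuous. On the compact interval $[c-r,c+r]$ it attains its minimum at some $x_0$. Define
\[
c_\delta(c,r):=\phi(x_0)=\min_{|x-c|\le r}\phi(x)>0 .
\]
By Step 2 this is strictly positive, and it gives $\inf_{|x-c|\le r}|p(x+\delta)-p(x-\delta)|\ge c_\delta(c,r)$.

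The main subtlety is choosing the argument that yields positivity. Strict monotonicity does not imply $p'<0$ pointwise (for example $p'$ may vanish at isolated points), so a mean-value bound of the form $2\delta\inf|p'|$ could degenerate to zero. The compactness argument avoids this. If a quantitative constant is wanted, one can use
\[
c_\delta(c,r)\ \ge\ p(c+r-\delta)-p(c+r+\delta),
\]
but this is not obvious in general, and the existence statement does not need it.

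Combining this with Lemma~\ref{lem:cdf-gap} gives Theorem~\ref{asm:p-non-deg}. That lemma requires $p$ to be positive and continuous on $[c-r-\delta,\,c+r+\delta]$. Under Assumption~\ref{asm:p-sym-uni} this holds whenever $p$ has full support, which is the setting in which strict decrease on $(0,\infty)$ is assumed.
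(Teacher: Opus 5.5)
Your argument is correct and is essentially the paper's proof: reduce to $c>0$ by symmetry (your check that $\phi(-x)=-\phi(x)$ makes explicit what the paper leaves implicit), note that $c-r-\delta>0$ puts $x\pm\delta$ in $(0,\infty)$ where $p$ is strictly decreasing, and take the positive minimum of the continuous function $\phi(x)=p(x-\delta)-p(x+\delta)$ on $[c-r,c+r]$. One correction to your aside: the bound $c_\delta(c,r)\ge p(c+r-\delta)-p(c+r+\delta)$ is false in general, not merely non-obvious; for the standard Gaussian with $c=0.65$, $r=0.35$, $\delta=0.1$ one has $\phi(0.3)\approx 0.023<\phi(1)\approx 0.048$, because $|p'(t)|=t\,p(t)$ increases on $(0,1)$, and the bound holds only when $|p'|$ is nonincreasing on $[c-r-\delta,c+r+\delta]$, so it is right that your proof does not use it.
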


\begin{proof}
By symmetry it suffices to consider $c>0$. The condition $c>r+\delta$ implies
$x-\delta>0$ and $x+\delta>0$ for all $|x-c|\le r$; hence $x\pm\delta$ lie in the
compact interval
{\small\begin{align}
J:=[c-r-\delta,\ c+r+\delta]\subset(0,\infty).
\end{align}}

Define $\phi(x):=p(x-\delta)-p(x+\delta)$ for $x\in[c-r,c+r]$.
Since $p$ is strictly decreasing on $(0,\infty)$ and $x-\delta<x+\delta$, we have
$\phi(x)>0$ for every $x$ in the domain. Moreover, $\phi$ is continuous because
$p$ is continuous.

A continuous positive function on a compact set attains a positive minimum.
Therefore,
{\small\begin{align}
m:=\min_{x\in[c-r,c+r]}\phi(x)\ >\ 0.
\end{align}}
Finally,
{\small\begin{align}
|p(x+\delta)-p(x-\delta)|&=p(x-\delta)-p(x+\delta)\\
&=\phi(x)\ge m > 0\;\;\;\;\forall |x-c|\le r.
\end{align}}
Setting $c_{\delta}(c,r):=m$ completes the proof.
\end{proof}}

\subsection{Proof of Theorem \ref{thm:invariance}}\label{app:invariance}
\begin{proof}
Consider the partial derivative $\partial G_{\theta,\sigma}(\mu) / \partial\mu_{i}$.
We first decompose this derivative into two terms as
{\small{\small\begin{align}
    \frac{\partial G_{\theta,\sigma}(\mu)}{\partial\mu_{i}} &=\int_{\mathbf{x}\in\mathcal{S}}g(\theta,f(\mathbf{x})) \frac{\partial \mathbf{p}_{\boldsymbol{\mu},\sigma}(\mathbf{x})}{\partial\mu_{i}} d\mathbf{x}\\
    &=\int_{\mathbf{x}\in\mathrm{cube}(\mathbf{x}^{*};\delta)}g(\theta,f(\mathbf{x})) \frac{\partial \mathbf{p}_{\boldsymbol{\mu},\sigma}(\mathbf{x})}{\partial\mu_{i}} d\mathbf{x}+\int_{\mathbf{x}\in\mathcal{S}\setminus\mathrm{cube}(\mathbf{x}^{*};\delta)}g(\theta,f(\mathbf{x})) \frac{\partial \mathbf{p}_{\boldsymbol{\mu},\sigma}(\mathbf{x})}{\partial\mu_{i}} d\mathbf{x}\\
    &:=\frac{\partial H(\mu)}{\partial\mu_{i}} + \frac{\partial R(\mu)}{\partial\mu_{i}},
\end{align}}}
where $\mathrm{cube}(\mathbf{x}^{*};\delta):=\{\mathbf{x} | \forall i\in[d], |x_{i}-x_{i}^{*}|\leq \delta\}$ is a $d$ dimensional cube centered at $\mathbf{x}^{*}$ with $2\delta$ length.
Then, the main strategy of the proof is to show that the first term $\frac{\partial H(\mu)}{\partial\mu_{i}}$ dominates the second term $\frac{\partial R(\mu)}{\partial\mu_{i}}$ for sufficiently large $\theta$, and then, the sign of $\frac{\partial H(\mu)}{\partial\mu_{i}}$ behaves like the statement.

First, we bound $\frac{\partial R(\mu)}{\partial\mu_{i}}$. Let $V_{\delta}:=\sup_{\mathbf{u}\notin \mathrm{cube}(\mathbf{x}^{*};\delta)}f(\mathbf{u})$.
{\small{\small\begin{align}
   \bigg|\frac{\partial R(\mu)}{\partial\mu_{i}}\bigg|&=\Bigg|\int_{\mathbf{x}\in\mathcal{S}\setminus\mathrm{cube}(\mathbf{x}^{*};\delta)} g(\theta,f(\mathbf{x})) \frac{\partial \mathbf{p}_{\boldsymbol{\mu},\sigma}(\mathbf{x})}{\partial\mu_{i}} d\mathbf{x}\Bigg|\leq \int_{\mathbf{x}\notin\mathrm{cube}(\mathbf{x}^{*};\delta)}g(\theta,f(\mathbf{x})) \left|\frac{\partial \mathbf{p}_{\boldsymbol{\mu},\sigma}(\mathbf{x})}{\partial\mu_{i}}\right| d\mathbf{x}\\
    &\leq g(\theta,V_{\delta})\int_{\mathbf{x}\notin\mathrm{cube}(\mathbf{x}^{*};\delta)} \left|\frac{\partial \mathbf{p}_{\boldsymbol{\mu},\sigma}(\mathbf{x})}{\partial\mu_{i}}\right| d\mathbf{x}\leq g(\theta,V_{\delta})\int_{\mathbf{x}\in\mathbb{R}^{d}} \left|\frac{\partial \mathbf{p}_{\boldsymbol{\mu},\sigma}(\mathbf{x})}{\partial\mu_{i}}\right| d\mathbf{x}\\
    &= g(\theta,V_{\delta})\int_{\mathbf{x}_{\setminus i}\in\mathbb{R}^{d-1}}\prod_{j\neq i}^{d}p\left(\frac{x_{j}-\mu_{j}}{\sigma}\right)\frac{d\mathbf{x}_{\setminus i}}{\sigma^{d-1}}\int_{x_{i}\in\mathbb{R}}\frac{1}{\sigma^{2}}\left|p'\left(\frac{x_{i}-\mu_{i}}{\sigma}\right)\right|dx_{i}\\
    &= g(\theta,V_{\delta})\prod_{j\neq i}^{d}\int_{x_{j}\in\mathbb{R}}p\left(\frac{x_{j}-\mu_{j}}{\sigma}\right)\frac{dx_{j}}{\sigma}\int_{x_{i}\in\mathbb{R}}\frac{1}{\sigma^{2}}\left|p'\left(\frac{x_{i}-\mu_{i}}{\sigma}\right)\right|dx_{i}\\
    &= g(\theta,V_{\delta})\int_{z\in\mathbb{R}}\frac{1}{\sigma}\left|p'\left(z\right)\right|dz \leq \frac{2g(\theta,V_{\delta})p(0)}{\sigma},
\end{align}}}%
where $\textbf{x}_{\setminus i}$ indicates $d-1$ dimensional variables except for $x_i$.

Second, we bound $\frac{\partial H(\mu)}{\partial\mu_{i}}$.
Let $D_{\delta}:=(f(\mathbf{x}^{*})+V_{\delta})/2$.
Let $\epsilon_{\delta} := f(\mathbf{x}^{*})-D_{\delta}$.
Then, there exists $\delta'$ such that, for all $x\in\mathrm{cube}(\mathbf{x}^{*},\delta')$,
{\small{\small\begin{align}
    f(\mathbf{x}) > f(\mathbf{x}^{*}) - \epsilon_{\delta} = D_{\delta} > V_{\delta}.
\end{align}}}%
Then, under the condition $|\mu_{i} - x^{*}_{i}| > \delta$ in the statement, the sign of the partial derivative does not change in the cube, $\mathrm{cube}(\mathbf{x}^{*};\delta)$.
In fact, if $\mu_{i} < x^{*}_{i} -\delta$, then for all $\mathbf{x}\in\mathrm{cube}(\mathbf{x}^{*};\delta)$, $x_{i} - \mu_{i} = x_{i} - x_{i}^{*} + x_{i}^{*} - \mu_{i} > -\delta + \delta = 0$ and $\frac{\partial \mathbf{p}_{\boldsymbol{\mu},\sigma}(\mathbf{x})}{\partial\mu_{i}} \geq 0$. If $\mu_{i} > x^{*}_{i} + \delta$, then for all $\mathbf{x}\in\mathrm{cube}(\mathbf{x}^{*};\delta)$, $x_{i} - \mu_{i} = x_{i} - x_{i}^{*} + x_{i}^{*} - \mu_{i} < \delta - \delta = 0$ and $\frac{\partial \mathbf{p}_{\boldsymbol{\mu},\sigma}(\mathbf{x})}{\partial\mu_{i}} \leq 0$.
Hence, for $\frac{\partial H(\mu)}{\partial\mu_{i}}$, we have,
{\small{\small\begin{align}
    \Bigg|\frac{\partial H(\mu)}{\partial\mu_{i}}\Bigg|&=\Bigg|\int_{\mathbf{x}\in\mathrm{cube}(\mathbf{x}^{*};\delta)}g(\theta,f(\mathbf{x})) \frac{\partial \mathbf{p}_{\boldsymbol{\mu},\sigma}(\mathbf{x})}{\partial\mu_{i}} d\mathbf{x}\Bigg|= \int_{\mathbf{x}\in\mathrm{cube}(\mathbf{x}^{*};\delta)}g(\theta,f(\mathbf{x})) \Bigg|\frac{\partial \mathbf{p}_{\boldsymbol{\mu},\sigma}(\mathbf{x})}{\partial\mu_{i}}\Bigg| d\mathbf{x}\\
    &\geq\int_{\mathbf{x}\in\mathrm{cube}(\mathbf{x}^{*};\delta')}g(\theta,f(\mathbf{x})) \Bigg|\frac{\partial \mathbf{p}_{\boldsymbol{\mu},\sigma}(\mathbf{x})}{\partial\mu_{i}}\Bigg| d\mathbf{x}\geq g(\theta,D_{\delta})\int_{\mathbf{x}\in\mathrm{cube}(\mathbf{x}^{*};\delta')} \Bigg|\frac{\partial \mathbf{p}_{\boldsymbol{\mu},\sigma}(\mathbf{x})}{\partial\mu_{i}}\Bigg| d\mathbf{x}\\
    &= g(\theta,D_{\delta})\left[\prod_{j\neq i}^{d}\int_{x_{j}\in[x_{j}^{*}-\delta',x_{j}^{*}+\delta']}p\left(\frac{x_{j}-\mu_{j}}{\sigma}\right)\frac{dx_{j}}{\sigma}\right]\int_{x_{i}\in[x_{i}^{*}-\delta',x_{i}^{*}+\delta']}\frac{1}{\sigma^{2}}\left|p'\left(\frac{x_{i}-\mu_{i}}{\sigma}\right)\right|dx_{i}\\
    &= \frac{g(\theta,D_{\delta})}{\sigma}\left[\prod_{j\neq i}^{d}\int_{z\in\left[\frac{x_{j}^{*}-\mu_{j}-\delta'}{\sigma},\frac{x_{j}^{*}-\mu_{j}+\delta'}{\sigma}\right]}p\left(z\right)dz\right]\int_{z\in\left[\frac{x_{i}^{*}-\mu_{i}-\delta'}{\sigma},\frac{x_{i}^{*}-\mu_{i}+\delta'}{\sigma}\right]}\left|p'\left(z\right)\right|dz\\
    &= \frac{g(\theta,D_{\delta})}{\sigma}\prod_{j\neq i}^{d}\left(F\left(\frac{x_{j}^{*}-\mu_{j}+\delta'}{\sigma}\right) -F\left(\frac{x_{j}^{*}-\mu_{j}-\delta'}{\sigma}\right)\right)\left|p\left(\frac{x_{i}^{*}-\mu_{i}+\delta'}{\sigma}\right)-p\left(\frac{x_{i}^{*}-\mu_{i}-\delta'}{\sigma}\right)\right|\\
    &\geq \frac{g(\theta,D_{\delta})}{\sigma}\prod_{j\neq i}^{d}\inf_{z\in[x_{j}^{*}+M,x_{j}^{*}-M]}\left(F\left(\frac{z+\delta'}{\sigma}\right) -F\left(\frac{z-\delta'}{\sigma}\right)\right)\inf_{z\in[x_{i}^{*}+M,x_{i}^{*}-M]}\left|p\left(\frac{z+\delta'}{\sigma}\right)-p\left(\frac{z-\delta'}{\sigma}\right)\right|\\
    &\geq \frac{g(\theta,D_{\delta})}{\sigma}\cdot \prod_{j\neq i}^{d} C_{\delta'/\sigma}(x_{j}^{*}/\sigma, M/\sigma)c_{\delta'/\sigma}(x_{i}^{*}/\sigma, M/\sigma)\quad \because \text{Theorem \ref{asm:p-non-deg}}
\end{align}}}
Hence, there exists the positive number $\theta$ such that the following inequality holds,
{\small{\small\begin{align}
    \frac{g(\theta,D_{\delta})}{\sigma}&\cdot \prod_{j\neq i}^{d} C_{\delta'/\sigma}(x_{j}^{*}/\sigma, M/\sigma)c_{\delta'/\sigma}(x_{i}^{*}/\sigma, M/\sigma)> \frac{g(\theta,V_{\delta})\cdot 2p(0)}{\sigma}.
\end{align}}}
Since $D_{\delta} > V_{\delta}$ holds and $g(\theta,D_{\delta})/g(\theta,V_{\delta})$ is increasing due to the Assumption \ref{asm:g-ratio}, we can always pick a sufficiently large $\theta$ that makes the above inequality hold.

{\color{red}Choose $\theta$ large enough so that
\[
\Bigl|\frac{\partial H(\mu)}{\partial \mu_i}\Bigr|
>
\Bigl|\frac{\partial R(\mu)}{\partial \mu_i}\Bigr|.
\]
Then, by the decomposition
$\frac{\partial G_{\theta,\sigma}(\mu)}{\partial \mu_i}
=\frac{\partial H(\mu)}{\partial \mu_i}
+\frac{\partial R(\mu)}{\partial \mu_i}$,
we have
\[
\operatorname{sign}\!\left(\frac{\partial G_{\theta,\sigma}(\mu)}{\partial \mu_i}\right)
=
\operatorname{sign}\!\left(\frac{\partial H(\mu)}{\partial \mu_i}\right),
\]
because the perturbation term cannot flip the sign:
indeed,
\[
\frac{\partial H(\mu)}{\partial \mu_i}>0 \ \Rightarrow\
\frac{\partial G_{\theta,\sigma}(\mu)}{\partial \mu_i}
\ge
\frac{\partial H(\mu)}{\partial \mu_i}
-\Bigl|\frac{\partial R(\mu)}{\partial \mu_i}\Bigr|
>
0,
\]
and similarly,
\[
\frac{\partial H(\mu)}{\partial \mu_i}<0 \ \Rightarrow\
\frac{\partial G_{\theta,\sigma}(\mu)}{\partial \mu_i}
\le
\frac{\partial H(\mu)}{\partial \mu_i}
+\Bigl|\frac{\partial R(\mu)}{\partial \mu_i}\Bigr|
<
0.
\]

It remains to identify the sign of $\frac{\partial H(\mu)}{\partial \mu_i}$.
On $\mathrm{cube}(\mathbf{x}^*;\delta)$, the sign of
$\frac{\partial \mathbf{p}_{\boldsymbol{\mu},\sigma}(\mathbf{x})}{\partial \mu_i}$
is constant under the condition $|\mu_i-x_i^*|>\delta$.
More precisely, since
\[
\frac{\partial \mathbf{p}_{\boldsymbol{\mu},\sigma}(\mathbf{x})}{\partial \mu_i}
=
-\frac{1}{\sigma}\,s\!\left(\frac{x_i-\mu_i}{\sigma}\right)\mathbf{p}_{\boldsymbol{\mu},\sigma}(\mathbf{x}),
\qquad s(z)=\frac{p'(z)}{p(z)},
\]
and $s(z)<0$ for $z>0$ while $s(z)>0$ for $z<0$ (because $p$ is strictly
decreasing on $(0,\infty)$ and symmetric), we obtain:
\begin{itemize}
\item If $\mu_i < x_i^*-\delta$, then $x_i-\mu_i>0$ for all
$\mathbf{x}\in \mathrm{cube}(\mathbf{x}^*;\delta)$, hence
$\frac{\partial \mathbf{p}_{\boldsymbol{\mu},\sigma}(\mathbf{x})}{\partial \mu_i}\ge 0$
on the cube. Since $g(\theta,f(\mathbf{x}))>0$, it follows that
$\frac{\partial H(\mu)}{\partial \mu_i}\ge 0$ (in fact $>0$), and therefore
$\frac{\partial G_{\theta,\sigma}(\mu)}{\partial \mu_i}>0$.

\item If $\mu_i > x_i^*+\delta$, then $x_i-\mu_i<0$ for all
$\mathbf{x}\in \mathrm{cube}(\mathbf{x}^*;\delta)$, hence
$\frac{\partial \mathbf{p}_{\boldsymbol{\mu},\sigma}(\mathbf{x})}{\partial \mu_i}\le 0$
on the cube. Thus
$\frac{\partial H(\mu)}{\partial \mu_i}\le 0$ (in fact $<0$), and therefore
$\frac{\partial G_{\theta,\sigma}(\mu)}{\partial \mu_i}<0$.
\end{itemize}
This matches the claimed invariance of the sign outside
$\mathrm{cube}(\mathbf{x}^*;\delta)$ and completes the proof.}
\end{proof}

\subsection{Proof of Theorem \ref{thm:existence-general}}\label{app:existence-general}

\begin{proof}
Under Assumptions~\ref{asm:p-score}--\ref{asm:p-non-deg} and \ref{asm:g-essential}--\ref{asm:g-ratio}, we have the following two properties:

\noindent\textit{(1) Lipschitz continuity.} $G_{\theta,\sigma}$ is Lipschitz (hence continuous) on $\mathbb{R}^d$.
{\small{\small\begin{align}
    |G_{\theta,\sigma}(\boldsymbol{\mu}_{1}) - 
    G_{\theta,\sigma}(\boldsymbol{\mu}_{2})|&=\left|\nabla_{\boldsymbol{\mu}} G_{\theta,\sigma}(\nu)^{\intercal}
    (\boldsymbol{\mu}_{1}-\boldsymbol{\mu}_{2})\right|\\
    &\leq \|\nabla_{\boldsymbol{\mu}} G_{\theta,\sigma}(\nu)\|_{2}\|\boldsymbol{\mu}_{1}-\boldsymbol{\mu}_{2}\|_{2}\leq \frac{g_{*,\theta}\sqrt{dI}}{\sigma}\|\boldsymbol{\mu}_{1}-\boldsymbol{\mu}_{2}\|_{2}
\end{align}}}%
\noindent\textit{(2) Vanishing at infinity.} $\lim_{\|\boldsymbol{\mu}\|\to\infty} G_{\theta,\sigma}(\boldsymbol{\mu})=0$ and there exists $\boldsymbol{\mu}_0$ with $G_{\theta,\sigma}(\boldsymbol{\mu}_0)>0$.
Since $\mathcal{S}\subset\mathbb{R}^{d}$ is compact, it is bounded.
Assume that $\mathcal{S}\subset\mathrm{cube}(\mathbf{0}, M)$ for some $M>0$.
Then, $g(\theta,f(\mathbf{x}))=0$ if $\|\mathbf{x}\|_{\infty}>M$. Hence,
{\small{\small\begin{align}
    G_{\theta,\sigma}(\boldsymbol{\mu}) &\leq g_{*,\theta} \prod_{j=1}^{d}\int_{x_{j}\in[-M,M]}p\left(\frac{x_{j}-\mu_{j}}{\sigma}\right)\frac{dx_{j}}{\sigma}\\
    &=g_{*,\theta} \prod_{j=1}^{d}\left[F\left(\frac{M-\mu_{j}}{\sigma}\right) - F\left(\frac{-M-\mu_{j}}{\sigma}\right)\right]\\
    &\rightarrow 0 \text{ as }\|\boldsymbol{\mu}\|_{\infty}\rightarrow\infty,\;\;\;\;\because \text{For any fixed $\delta$, }\lim_{x\rightarrow \pm\infty}F(x) - F(x-\delta)=0
\end{align}}}%
Given \textbf{(1)} and \textbf{(2)}, the standard compactness-continuity argument (as in Chen Xu's case) applies verbatim: choose a closed ball where the interior value exceeds the uniform tail bound, invoke the extreme value theorem to attain a maximizer on the ball, and compare with the exterior to conclude global maximality.
We omit further details.
\end{proof}

\subsection{Proof of Lemma \ref{lem:lip_grad_GNs}}\label{app:lip_grad_GNs}
\begin{proof}
The gradient can be expressed in score form as 
{\small\begin{align}
\nabla_{\mu}G_{\theta,\sigma}(\mu)
=\mathbb{E}\big[g(\theta,f(\mathbf{X}))\,\nabla_{\mu}\log\mathbf{p}_{\boldsymbol{\mu},\sigma}(\mathbf{X});\mathcal{S}\big].    
\end{align}}
For the $i$th coordinate,
{\small\begin{align}
    \frac{\partial}{\partial\mu_{i}}\log\mathbf{p}_{\boldsymbol{\mu},\sigma}(\mathbf{x}) =-\frac{1}{\sigma}\,s(z_{i}),
\end{align}}%
where $z_{i}=\frac{x_{i}-\mu_{i}}{\sigma}$ and $s(z)=\frac{p'(z)}{p(z)}$.
Differentiating once more with respect to $\mu_{i}$ gives
{\small\begin{align}
\frac{\partial^{2}}{\partial^{2}\mu_{i}}\log\mathbf{p}_{\boldsymbol{\mu},\sigma}(\mathbf{x})
=\frac{1}{\sigma^{2}}\Bigl(\frac{p''(z_{i})}{p(z_{i})}-s(z_{i})^{2}\Bigr).
\end{align}}
Hence the diagonal entries of the Hessian $H(\boldsymbol{\mu})=\nabla^{2}_{\mu}G_{\theta,\sigma}(\boldsymbol{\mu})$ are bounded in absolute value by $g_{*,\theta}K/\sigma^{2}$
{\small\begin{align}
    \nabla_{\mu}^{2}G_{\theta,\sigma}(\boldsymbol{\mu})=&\mathbb{E}\big[g(\theta,f(\mathbf{X}))\,\nabla_{\mu}\log\mathbf{p}_{\boldsymbol{\mu},\sigma}(\mathbf{X})\left(\nabla_{\mu}\log\mathbf{p}_{\boldsymbol{\mu},\sigma}(\mathbf{X})\right)^{\intercal};\mathcal{S}\big]+\mathbb{E}\big[g(\theta,f(\mathbf{X}))\,\nabla_{\mu}^{2}\log\mathbf{p}_{\boldsymbol{\mu},\sigma}(\mathbf{X});\mathcal{S}\big].
\end{align}}
{\color{red}Furthermore, we have,
\begin{align}
[\nabla_{\mu}^{2}G_{\theta,\sigma}(\boldsymbol{\mu})]_{ii}&=\mathbb{E}\left[g(\theta,f(\mathbf{X}))\cdot\frac{1}{\sigma^{2}}\,s(z_{i})^{2}+g(\theta,f(\mathbf{X}))\cdot\frac{1}{\sigma^{2}}\Bigl(\frac{p''(z_{i})}{p(z_{i})}-s(z_{i})^{2}\Bigr)\right]\\
&=\mathbb{E}\left[g(\theta,f(\mathbf{X}))\cdot\frac{p''(z_{i})}{\sigma^{2}p(z_{i})}\right]\leq \frac{g_{*\theta}}{\sigma^{2}}\int |p''(z)|\,dz=\frac{g_{*\theta}K}{\sigma^{2}}.    
\end{align}
For the off-diagonal entries $i\neq j$,

{\small\begin{align}[\nabla_{\mu}^{2}G_{\theta,\sigma}(\boldsymbol{\mu})]_{ij}\le\frac{g_{\ast,\theta}}{\sigma^2} \int_{\mathbb{R}^d}|s(z_i)s(z_j)|p_{\mu,\sigma}(x)dx=
\frac{g_{\ast,\theta}}{\sigma^2}
\left(\int_{\mathbb R}|p'(z)|dz\right)^2\le\frac{g_{\ast,\theta}}{\sigma^2}I,\end{align}}
since $\left(\int_{\mathbb R}|p'(z)|dz\right)^2\le\int_{\mathbb R}\frac{(p'(z))^2}{p(z)}dz\int_{\mathbb R}p(z)dz=I$ by using Cauchy--Schwarz.

Since $H(\mu)$ is symmetric, we use

{\small\begin{align}
\lambda_{\max}(H) \le \|H\|_\infty  = \max_i \sum_{j=1}^{d} |H_{ij}| \leq \frac{g_{\ast,\theta}}{\sigma^2}\bigl(K+(d-1)I\bigr) \leq \frac{g_{\ast,\theta}}{\sigma^2} d \max\bigl(K,I\bigr).
\end{align}}
By the fundamental theorem of calculus,
{\small\begin{align}
\|\nabla_{\mu}G_{\theta,\sigma}(\mu)
-
\nabla_{\mu}G_{\theta,\sigma}(\nu)\|_{2}
&=
\left\|\int_0^1 H(\nu + t(\mu-\nu))(\mu-\nu)\,dt\right\|_{2}\\
&\leq \|H(\nu + t(\mu-\nu))\|_{\mathrm{op}}\|(\mu-\nu)\|_{2}\\
&\leq g_{*,\theta}d\max(K,I)/\sigma^{2}\|(\mu-\nu)\|_{2},
\end{align}}}
which proves the claim.
\end{proof}

\subsection{Proof of Lemma \ref{lem:mc_second_moment}}\label{app:mc_second_moment}

\begin{proof}
Define $Y=g(\theta,f(X))\,\nabla_{\mu}\log\mathbf{p}(X)$ with mean $m=\mathbb{E}[Y]=\nabla_{\mu}G_{\theta,\sigma}(\mu)$.  
For $B$ i.i.d.\ copies $Y_{1},\dots,Y_{B}$, independence gives
{\small\begin{align}
\mathbb{E}\Bigl\|\tfrac{1}{B}\sum_{k=1}^{B}Y_{k}\Bigr\|^{2}
=\tfrac{1}{B}\,\mathbb{E}\|Y\|^{2}+\tfrac{B-1}{B}\,\|m\|^{2}.
\end{align}}

To bound $\mathbb{E}\|Y\|^{2}$, note that 
\(
\|\nabla_{\mu}\log\mathbf{p}(X)\|^{2}
=\sigma^{-2}\sum_{i=1}^{d}s(Z_{i})^{2}
\)
with $Z_{i}=(X_{i}-\mu_{i})/\sigma$ and $s(z)=p'(z)/p(z)$.  
Taking expectations yields $\mathbb{E}\|\nabla_{\mu}\log\mathbf{p}(X)\|^{2}=(d I)/\sigma^{2}$.  
Since $g(\theta,f(X))\le g_{*,\theta}$, it follows that $\mathbb{E}\|Y\|^{2}\le g_{*,\theta}^{2}(dI)/\sigma^{2}$.  
Moreover, $\|m\|^{2}\le \mathbb{E}\|Y\|^{2}$ by Cauchy-Schwarz, so the same bound holds for $\|m\|^{2}$.  
Substituting back into the decomposition gives the uniform bound  
{\small\begin{align}
\mathbb{E}\bigl[\|\hat{\nabla}_{\mu}G_{\theta,\sigma}(\mu)\|^{2}\bigr]
\le g_{*,\theta}^{2}\frac{dI}{\sigma^{2}}.
\end{align}}
\end{proof}

\subsection{Proof of Theorem \ref{thm:onestep-promot}}\label{app:onestep-promot}

\begin{proof}
By Lemma~\ref{lem:lip_grad_GNs}, $G_{\theta,\sigma}$ is $L_{\theta}$-smooth, so for all $\mathbf{x},\mathbf{y}\in\mathbb{R}^{d}$,
{\small{\small\begin{align}
G_{\theta,\sigma}(\mathbf{y})
\ \ge\
G_{\theta,\sigma}(\mathbf{x})
&+\nabla_{\boldsymbol{\mu}}G_{\theta,\sigma}(\mathbf{x})^{\!\top}(\mathbf{y}-\mathbf{x})-\frac{L_{\theta}}{2}\,\|\mathbf{y}-\mathbf{x}\|^{2}. \label{eq:smooth-lower-reuse}
\end{align}}}
Apply \eqref{eq:smooth-lower-reuse} with $\mathbf{x}=\boldsymbol{\mu}_{t}$ and the update
$\mathbf{y}=\boldsymbol{\mu}_{t+1}
=\boldsymbol{\mu}_{t}+\eta_t\,\widehat{\nabla}_{\boldsymbol{\mu}}G_{\theta,\sigma}(\boldsymbol{\mu}_{t})$ to obtain
{\small{\small\begin{align}
G_{\theta,\sigma}(\boldsymbol{\mu}_{t+1})
\ \ge\
G_{\theta,\sigma}(\boldsymbol{\mu}_{t})
&+\eta_t\,\nabla_{\boldsymbol{\mu}}G_{\theta,\sigma}(\boldsymbol{\mu}_{t})^{\!\top}
\widehat{\nabla}_{\boldsymbol{\mu}}G_{\theta,\sigma}(\boldsymbol{\mu}_{t})-\frac{L_{\theta}\eta_t^{2}}{2}\,
\bigl\|\widehat{\nabla}_{\boldsymbol{\mu}}G_{\theta,\sigma}(\boldsymbol{\mu}_{t})\bigr\|^{2}. \label{eq:one-step-raw-reuse}
\end{align}}}
Taking conditional expectation given $\boldsymbol{\mu}_{t}$ and using the score unbiasedness
\(
\mathbb{E}[\widehat{\nabla}_{\boldsymbol{\mu}}G_{\theta,\sigma}(\boldsymbol{\mu}_{t})\mid\boldsymbol{\mu}_{t}]
=\nabla_{\boldsymbol{\mu}}G_{\theta,\sigma}(\boldsymbol{\mu}_{t})
\) converts the mixed inner product into
\(\eta_t\|\nabla_{\boldsymbol{\mu}}G_{\theta,\sigma}(\boldsymbol{\mu}_{t})\|^{2}\). Taking total expectation and invoking the uniform bound from Lemma~\ref{lem:mc_second_moment} yields
{\small{\small\begin{align}
\mathbb{E}\!\bigl[G_{\theta,\sigma}(\boldsymbol{\mu}_{t+1})\bigr]
-
&\mathbb{E}\!\bigl[G_{\theta,\sigma}(\boldsymbol{\mu}_{t})\bigr]
\ \ge\ \eta_t\,\mathbb{E}\!\bigl[\|\nabla_{\boldsymbol{\mu}}G_{\theta,\sigma}(\boldsymbol{\mu}_{t})\|^{2}\bigr]-\frac{L_{\theta}\eta_t^{2}}{2}\,Q_{\theta}. \label{eq:exp-incr-reuse}
\end{align}}}
Summing \eqref{eq:exp-incr-reuse} over $t=0,\dots,T-1$ telescopes the left-hand side to
$\mathbb{E}[G_{\theta,\sigma}(\boldsymbol{\mu}_{T})]-\mathbb{E}[G_{\theta,\sigma}(\boldsymbol{\mu}_{0})]$. Hence
{\small{\small\begin{align}
\sum_{t=0}^{T-1}\eta_t\,\mathbb{E}\!\bigl[\|\nabla_{\boldsymbol{\mu}}G_{\theta,\sigma}(\boldsymbol{\mu}_{t})\|^{2}\bigr]
\ \le\
&\mathbb{E}\!\bigl[G_{\theta,\sigma}(\boldsymbol{\mu}_{T})\bigr]-\mathbb{E}\!\bigl[G_{\theta,\sigma}(\boldsymbol{\mu}_{0})\bigr]+\frac{L_{\theta} Q_{\theta}}{2}\sum_{t=0}^{T-1}\eta_t^{2}.
\end{align}}}
\end{proof}

\subsection{Proof of Corollary \ref{cor:pgs-epgs-conv}}\label{app:pgs-epgs-conv}

\begin{proof}
The localization claim follows from Theorem~\ref{thm:invariance}: for each $\delta>0$ and $M>0$ there exists $\theta_{\delta,\sigma,M}$ such that the coordinatewise derivatives have fixed signs outside $\mathrm{cube}(\mathbf{x}^{*};\delta)$ within $\{\|\boldsymbol{\mu}\|_{\infty}<M\}$, hence no stationary point (and therefore no local maximizer) can lie outside $\mathrm{cube}(\mathbf{x}^{*};\delta)$.

For the rate, start from Theorem~\ref{thm:onestep-promot}. With the polynomial steps,
{\small{\small\begin{align}
\sum_{t=0}^{\infty}\eta_t^{2}\le \frac{\sigma^{4}}{d^{2}}\!\Bigl(1+\tfrac{1}{2\gamma}\Bigr),\;
\sum_{t=0}^{T-1}\eta_t \geq \frac{\sigma^{2}(2^{\frac{1}{2}-\gamma}-1)}{d\left(\frac12-\gamma\right)}\,T^{\,\frac12-\gamma}.
\end{align}}}
From $G_{\theta}\le (d I/\sigma^{2})\,g_{*,\theta}^{2}$ and $\ L_{\theta}\le g_{*,\theta}(d \max(K,I))/\sigma^{2}$, we have
{\small{\small\begin{align}
\min_{0\le t\le T-1}\mathbb{E}&\bigl[\|\nabla_{\boldsymbol{\mu}}G_{\theta,\sigma}(\boldsymbol{\mu}_{t})\|^{2}\bigr] \leq
\frac{g_{*,\theta}+\tfrac12\,L_{\theta} G_{\theta}\,\sum_{t}\eta_t^{2}}{\sum_{t}\eta_t} \leq
\frac{g_{*,\theta}+\,I \max(K,I)\,g_{*,\theta}^{3}}{\sigma^{2}\,T^{\,\frac12-\gamma}}\cdot d\cdot C_{\gamma},
\end{align}}}%
where $C_{\gamma}:=(\frac12-\gamma)/(2^{\frac{1}{2}-\gamma}-1)$
which implies the stated $T$ bound after inversion.
Finally, we get the iteration complexity $T>
\left(C_{\gamma}d\cdot\frac{g_{*,\theta}+\,I \max(K,I)\,g_{*,\theta}^{3}}{\sigma^{2}\varepsilon}\right)^{\!\frac{2}{1-2\gamma}}$.
\end{proof}

\subsection{Proof of Corollary \ref{cor:dimwise-rate}}\label{app:dimwise-rate}

\begin{proof}
We work in the single-kernel, anisotropic setting
\(
\mathbf p_{\mu,\Sigma}(x)=\prod_{i=1}^d \sigma_i^{-1}p\!\big((x_i-\mu_i)/\sigma_i\big)
\).
It is enough to derive the Lipschitz constant and the second moment bound.
For the Lipschitz constant, let $H(\mu):=\nabla_\mu^2 G_{\theta,\Sigma}(\mu)$ and use the induced $2$-operator norm.

For the diagonal Hessian entries,
{\small\begin{align}
\partial_{\mu_i\mu_i}^2 \mathbf p_{\mu,\Sigma}(x)
=
\frac{1}{\sigma_i^2}\frac{p''(z_i)}{p(z_i)}\,\mathbf p_{\mu,\Sigma}(x),
\end{align}}
and therefore
{\small\begin{align}
\bigl|\partial_{\mu_i\mu_i}^2 G_{\theta,\Sigma}(\mu)\bigr|
\le
\frac{g_{*,\theta}}{\sigma_i^2}\int_{\mathbb R}|p''(z)|\,dz
=
\frac{g_{*,\theta}}{\sigma_i^2}K.
\end{align}}

For the off-diagonal entries \(i\neq j\),
{\small\begin{align}
\partial_{\mu_i\mu_j}^2 \mathbf p_{\mu,\Sigma}(x)
=
\frac{1}{\sigma_i\sigma_j}s(z_i)s(z_j)\mathbf p_{\mu,\Sigma}(x),
\end{align}}
so that
{\small\begin{align}
\bigl|\partial_{\mu_i\mu_j}^2 G_{\theta,\Sigma}(\mu)\bigr|
\le
\frac{g_{*,\theta}}{\sigma_i\sigma_j}
\left(\int_{\mathbb R}|p'(z)|\,dz\right)^2
\le
\frac{g_{*,\theta}}{\sigma_i\sigma_j}I.
\end{align}}

Thus,
{\small\begin{align}
\|H(\mu)\|_{\mathrm{op}}
\le
\|H(\mu)\|_\infty
=
\max_{1\le i\le d}\sum_{j=1}^d |H_{ij}(\mu)|
\le
g_{*,\theta}
\max_{1\le i\le d}
\left(
\frac{K}{\sigma_i^2}
+
\sum_{j\ne i}\frac{I}{\sigma_i\sigma_j}
\right).
\end{align}}
Hence
{\small\begin{align}
L_\theta(\Sigma)
\le
g_{*,\theta}
\max_{1\le i\le d}
\left(
\frac{K}{\sigma_i^2}
+
\sum_{j\ne i}\frac{I}{\sigma_i\sigma_j}
\right).
\end{align}}

Now define
{\small\begin{align}
S_2(\Sigma):=d\max_{1\le i\le d}\sigma_i^{-2}.
\end{align}}
Since \(K\le \max(K,I)\), \(I\le \max(K,I)\), and
{\small\begin{align}
\frac{1}{\sigma_i\sigma_j}\le \max_{1\le \ell\le d}\sigma_\ell^{-2},
\end{align}}
we obtain
{\small\begin{align}
\frac{K}{\sigma_i^2}
+
\sum_{j\ne i}\frac{I}{\sigma_i\sigma_j}
\le
d\,\max(K,I)\,\max_{1\le \ell\le d}\sigma_\ell^{-2}.
\end{align}}
Therefore
{\small\begin{align}
L_\theta(\Sigma)\le g_{*,\theta}\,\max(K,I)\,S_2(\Sigma).
\end{align}}
For the second moment bound,
{\small\begin{align}
\mathbb E\!\left[\|\hat\nabla_\mu G_{\theta,\Sigma}(\mu)\|^2\right]
\le
\mathbb E\!\left[\|g(\theta,f(X))\nabla_\mu\log \mathbf p_{\mu,\Sigma}(X)\|^2\right]
\le
g_{*,\theta}^2\,I\,S_2(\Sigma).
\end{align}}
Hence \(Q_\theta(\Sigma)=g_{*,\theta}^2\,I\,S_2(\Sigma)\).
The remaining steps follow exactly as in Corollary~\ref{cor:pgs-epgs-conv}.
\end{proof}


\subsection{Proof of Lemma \ref{lem:loo-unbiased}}\label{app:loo-unbiased}

\begin{proof}
Given $\boldsymbol\mu_t$, $b_k$ is a function of $\{(\mathbf X^{(j)})\}_{j\neq k}$ and thus independent of $\mathbf X^{(k)}$. 
Using $\mathbb E[S(\mathbf X^{(k)})\mid \boldsymbol\mu_t]=0$,
{\small\begin{align}
\mathbb E\!\left[\big(h(\mathbf X^{(k)})-b_k\big)\,S(\mathbf X^{(k)})\mid \boldsymbol\mu_t\right]&=\mathbb E\!\left[h(\mathbf X^{(k)})\,S(\mathbf X^{(k)})\mid\boldsymbol\mu_t\right]
-\mathbb E\!\left[b_k\,S(\mathbf X^{(k)})\mid \boldsymbol\mu_t\right]\\
&=\mathbb E\!\left[h(\mathbf X^{(k)})\,S(\mathbf X^{(k)})\mid\boldsymbol\mu_t\right]
-\mathbb E[b_k\mid \boldsymbol\mu_t]\;\underbrace{\mathbb E\!\left[S(\mathbf X^{(k)})\mid \boldsymbol\mu_t\right]}_{=\,0}\\
&=\mathbb E\!\left[h(\mathbf X^{(k)})\,S(\mathbf X^{(k)})\mid\boldsymbol\mu_t\right].
\end{align}}
Averaging over $k$ yields the claim.
\end{proof}

\subsection{Proof of Lemma \ref{lem:loo-second-moment}}\label{app:loo-second-moment}

\begin{proof}
Let $Y_k=(h(\mathbf X^{(k)})-b_k)\,S(\mathbf X^{(k)})$. Then,
\[
\mathbb E\!\left[\big\|\tfrac{1}{B}\textstyle\sum_k Y_k\big\|^2\middle|\boldsymbol\mu_t\right]
\le \frac{1}{B}\sum_{k=1}^B \mathbb E\!\left[\|Y_k\|^2\middle|\boldsymbol\mu_t\right]
= \mathbb E\!\left[\|Y_1\|^2\middle|\boldsymbol\mu_t\right].
\]
{\color{black}
Since $\|Y_1\|^2 = (h(\mathbf X^{(1)})-b_1)^2 \, \|S(\mathbf X^{(1)})\|^2$, we introduce an
auxiliary constant baseline $b^\star$ and write
\[
h(\mathbf X^{(1)})-b_1
=
\big(h(\mathbf X^{(1)})-b^\star\big)
+
\big(b^\star-b_1\big).
\]
Expanding and taking conditional expectation yields
{\small\begin{align}
\mathbb E\left[\|Y_1\|^2 \,\middle|\, \boldsymbol\mu_t\right]\label{eq:A-term}
&=
\underbrace{\mathbb E\!\left[(h(\mathbf X^{(1)})-b^\star)^2\|S(\mathbf X^{(1)})\|^2\middle|\boldsymbol\mu_t\right]}_{=:A} +\underbrace{\mathbb E\!\left[(b^\star-b_1)^2\|S(\mathbf X^{(1)})\|^2\middle|\boldsymbol\mu_t\right]}_{=:B} \notag\\
&\quad
+2\mathbb E\!\left[(b^\star-b_1)(h(\mathbf X^{(1)})-b^\star)\|S(\mathbf X^{(1)})\|^2
\,\middle|\, \boldsymbol\mu_t\right]. \notag
\end{align}}
By construction, $b_1$ is independent of $\mathbf X^{(1)}$, and hence independent of
$(h(\mathbf X^{(1)})-b^\star)\|S(\mathbf X^{(1)})\|^2$.
Moreover, $b^\star$ is defined as the optimal constant baseline minimizing
$\mathbb E[(h(\mathbf X)-b)^2\|S(\mathbf X)\|^2|\mu_{t}]$, which implies the first-order
optimality condition
\[
\mathbb E\!\left[(h(\mathbf X^{(1)})-b^\star)\|S(\mathbf X^{(1)})\|^2|\mu_{t}\right]=0.
\]
Therefore,
\[
\mathbb E\!\left[(b^\star-b_1)(h(\mathbf X^{(1)})-b^\star)
\|S(\mathbf X^{(1)})\|^2|\mu_{t}\right]
=0,
\]
and the final term vanishes after taking conditional expectation.
Consequently, it suffices to analyze the two terms $A$ and $B$ and derive
explicit upper bounds for each.
}
For the first term $A$, by the definition of $R^2_{\theta,\mu,\Sigma}$ and $g(\theta,f)\le g_{*,\theta}$,
{\small\begin{align}
A&=(1-R^{2}_{\theta,\boldsymbol\mu_t,\Sigma})\;\mathbb E[h(\mathbf X)^2\|S(\mathbf X)\|^2]\le (1-R^{2}_{\theta,\boldsymbol\mu_t,\Sigma})\,g_{*,\theta}^{2}\,I\,S_2(\Sigma).    
\end{align}}
For the second term $B$, note that $b_k$ is independent of $\mathbf X^{(k)}$ conditional on $\boldsymbol\mu_t$, so
{\small\begin{align}
\mathbb E&[(b^\star-b_k)^2\|S(\mathbf X)\|^2\mid\boldsymbol\mu_t]=\mathbb E[(b^\star-b_k)^2\mid\boldsymbol\mu_t]\;\mathbb E[\|S(\mathbf X)\|^2\mid\boldsymbol\mu_t]\leq\mathbb E[(b^\star-b_k)^2]\;I\,S_2(\Sigma).
\end{align}}
We now bound $\mathbb E[(b^\star-b_k)^2]$ using the mean value theorem for the two-variable map $g(u,v)=u/(v+\lambda)$. 
Decompose
{\small\begin{align}
b^\star-b_k
&=\Big(\tfrac{\mu_U}{\mu_V}-\tfrac{\mu_U}{\mu_V+\lambda}\Big)
+\Big(\tfrac{\mu_U}{\mu_V+\lambda}-\tfrac{U_{\setminus k}}{V_{\setminus k}+\lambda}\Big)=: \delta_0 - \Delta,
\end{align}}
so that $\delta_0=\frac{b^\star\lambda}{\mu_V+\lambda}$ captures the deterministic ridge bias and $\Delta$ is the random fluctuation around $(\mu_U,\mu_V)$.

Let $U_t=\mu_U+t\Delta_u$ and $V_t=\mu_V+t\Delta_v$ with $\Delta_u=U_{\setminus k}-\mu_U$ and $\Delta_v=V_{\setminus k}-\mu_V$. 
By the (integral) mean value representation,
{\small\begin{align}
\Delta
&= g(U_{\setminus k},V_{\setminus k})-g(\mu_U,\mu_V)= \int_{0}^{1}\!\nabla g(U_t,V_t)\cdot(\Delta_u,\Delta_v)\,dt.
\end{align}}
Since $\nabla g(u,v)=\big((v+\lambda)^{-1},\, -u(v+\lambda)^{-2}\big)$, it follows that
{\small\begin{align}
|\Delta|
&\le \int_{0}^{1}\!\Big(\frac{|\Delta_u|}{V_t+\lambda}
+\frac{|U_t|\,|\Delta_v|}{(V_t+\lambda)^2}\Big)\,dt \le \frac{|\Delta_u|}{\lambda}+\frac{(\mu_U+|\Delta_u|)\,|\Delta_v|}{\lambda^{2}},
\end{align}}
where we used $V_t+\lambda\ge \lambda$ and $|U_t|\le \mu_U+|\Delta_u|$.

Squaring and using $(x+y)^2\le 2x^2+2y^2$ gives
{\small\begin{align}
\Delta^2
&\le \frac{2\,\Delta_u^2}{\lambda^{2}}
+ \frac{2(\mu_U+|\Delta_u|)^{2}\,\Delta_v^{2}}{\lambda^{4}}\le \frac{2\,\Delta_u^2}{\lambda^{2}}
+ \frac{4\mu_U^{2}\,\Delta_v^{2}}{\lambda^{4}}
+ \frac{4\,\Delta_u^{2}\,\Delta_v^{2}}{\lambda^{4}}.
\end{align}}
Taking expectations and using the sample-mean scalings 
$\mathbb E[\Delta_u^2]=\mathrm{Var}(U)/s$, $\mathbb E[\Delta_v^2]=\mathrm{Var}(V)/s$, and 
$\mathbb E[\Delta_u^{2}\Delta_v^{2}]\le (\mathbb E\Delta_u^{4})^{1/2}(\mathbb E\Delta_v^{4})^{1/2}=O(s^{-2})$, we obtain
{\small\begin{align}
\mathbb E[\Delta^{2}]
\le \frac{2}{\lambda^{2}}\cdot\frac{\mathrm{Var}(U)}{s}
+ \frac{4\mu_U^{2}}{\lambda^{4}}\cdot\frac{\mathrm{Var}(V)}{s}
+ O\!\Big(\frac{1}{\lambda^{4}s^{2}}\Big).
\end{align}}
Finally,
{\small\begin{align}
\mathbb E[(b^\star-b_k)^2]
= \delta_0^{2} - 2\delta_{0}\mathbb E[\Delta] +\mathbb E[\Delta^{2}]&\le \left(\frac{b^\star\lambda}{\mu_V+\lambda}\right)^{\!2} + \frac{2\,\mathrm{Std}(U)}{\lambda \sqrt{s}} + \frac{2\mu_U\,\mathrm{Std}(V)}{\lambda^{2}\sqrt{s}}+O\left(\frac{1}{\lambda^{2}s}\right)\nonumber\\
&\qquad + \frac{2\,\mathrm{Var}(U)}{\lambda^{2}s}+ \frac{4\mu_U^{2}\,\mathrm{Var}(V)}{\lambda^{4}s}
+ O\!\Big(\frac{1}{\lambda^{4}s^{2}}\Big),
\end{align}}
which is a mean-value-theorem alternative to the Taylor-based bound and has the same rate profile in $s=B-1$.
\end{proof}

\subsection{Proof of Theorem \ref{thm:loo-simplified}}\label{app:loo-simplified}

\begin{proof}
The claim follows directly from Lemma~\ref{lem:loo-second-moment} with an appropriate choice of the ridge parameter.
From \eqref{eq:loo-master}, for any $\lambda>0$ we have
{\small\begin{align}
\mathbb{E}\!\left[\left\|\widehat{\nabla}_{\!\boldsymbol\mu}^{\,\mathrm{loo}}
G_{\theta,\Sigma}(\boldsymbol\mu_t)\right\|^{2}\middle|\boldsymbol\mu_t\right]
\le
\Big((1-R^{2}_{\theta,\boldsymbol\mu_t,\Sigma})\,g_{*,\theta}^{2}
+ C_{\mathrm{loo}}(B,\lambda)\Big)\,I\,S_2(\Sigma).
\label{eq:loo-master-rewrite}
\end{align}}

We choose $\lambda=(B-1)^{-1/8}$.
Substituting this choice into the expression of $C_{\mathrm{loo}}(B,\lambda)$ in
\eqref{eq:Cloo-def} shows that each term in $C_{\mathrm{loo}}(B,\lambda)$ decays at rate
$O((B-1)^{-1/4})$, and therefore $C_{\mathrm{loo}}(B,\lambda)=O(B^{-1/4})$.
This yields the first inequality stated in the corollary.

Taking expectation of \eqref{eq:loo-master-rewrite} with respect to $\boldsymbol\mu_t$ gives
{\small\begin{align}
\mathbb{E}\!\left[\left\|\widehat{\nabla}_{\!\boldsymbol\mu}^{\,\mathrm{loo}}
G_{\theta,\Sigma}(\boldsymbol\mu_t)\right\|^{2}\right]
\le
\Big(g_{*,\theta}^{2}\big(1-\mathbb{E}[R^{2}_{\theta,\boldsymbol\mu_t,\Sigma}]\big)
+ C_{\mathrm{loo}}(B,\lambda)\Big)\,I\,S_2(\Sigma).
\end{align}}
By definition of $C_{\theta,\Sigma}$, we have
$\mathbb{E}[R^{2}_{\theta,\boldsymbol\mu_t,\Sigma}]\ge C_{\theta,\Sigma}$ for all $t$,
and hence
{\small\begin{align}
\mathbb{E}\!\left[\left\|\widehat{\nabla}_{\!\boldsymbol\mu}^{\,\mathrm{loo}}
G_{\theta,\Sigma}(\boldsymbol\mu_t)\right\|^{2}\right]
\le
\Big(g_{*,\theta}^{2}(1-C_{\theta,\Sigma})
+ C_{\mathrm{loo}}(B,\lambda)\Big)\,I\,S_2(\Sigma).
\end{align}}

Since $C_{\mathrm{loo}}(B,\lambda)=O(B^{-1/4})$, there exists a numerical constant $c>0$
such that $C_{\mathrm{loo}}(B,\lambda)\le c\,B^{-1/4}$.
Under the assumption
$B=\Omega\!\big(16/(C_{\theta,\Sigma}g_{*,\theta}^{2})^{4}\big)$,
the remainder term satisfies
$c\,B^{-1/4}\le \tfrac12 C_{\theta,\Sigma}g_{*,\theta}^{2}$.
Substituting this bound into the previous inequality yields
{\small\begin{align}
\mathbb{E}\!\left[\left\|\widehat{\nabla}_{\!\boldsymbol\mu}^{\,\mathrm{loo}}
G_{\theta,\Sigma}(\boldsymbol\mu_t)\right\|^{2}\right]
\le
\left(1-\frac{C_{\theta,\Sigma}}{2}\right)\,g_{*,\theta}^{2}\,I\,S_2(\Sigma),
\end{align}}
which proves the claim.
\end{proof}

\subsection{Proof of Corollary \ref{cor:final-LOO-complexity}}\label{app:final-LOO-complexity}

\begin{proof}
The one-step expected improvement inequality in Theorem~\ref{thm:onestep-promot} holds with the same smoothness constant 
$L_\theta(\Sigma)=g_{*,\theta}\max(K,I)\,S_2(\Sigma)$ from Lemma~\ref{lem:lip_grad_GNs}.
Replacing the plain score second moment $Q_\theta(\Sigma)=g_{*,\theta}^2 I S_2(\Sigma)$ by the LOO constant from Theorem~\ref{thm:loo-simplified}, we have
\[
Q_{\theta}^{\mathrm{(loo)}}(\Sigma)
~=~\Big(\,(1-\tfrac{C_{\theta,\Sigma}}{2})+O(B^{-1/4})\Big)\,g_{*,\theta}^{2}\,I\,S_2(\Sigma).
\]
Plugging $L_\theta(\Sigma)$ and $Q_{\theta}^{\mathrm{(loo)}}(\Sigma)$ into the telescoping argument of Theorem~\ref{thm:onestep-promot}, and then using the polynomial step schedule $\eta_t=S_2(\Sigma)^{-1}(t+1)^{-(\frac12+\gamma)}$ (whose sums satisfy $\sum_{t=0}^{T-1}\eta_t \ge C_{\gamma} S_2(\Sigma)^{-1} T^{\,\frac12-\gamma}$ and $\sum_{t=0}^{\infty}\eta_t^2 \le (1+\frac{1}{2\gamma})S_2(\Sigma)^{-2}$).
\end{proof}

\section{Experimental Detail} \label{app:experiment_detail}
This appendix provides full experimental details for the canonical optimization
benchmarks and black-box targeted adversarial attack experiments reported in the
main text, including hyperparameter selection procedures, optimization settings,
and dataset-specific configurations.

{\color{black}
\subsection{Canonical Benchmark Objective Functions}
\paragraph{Ackley.}
\begin{equation}\label{eq:ackley}
\begin{split}
\text{Ackley}(\mathbf{x}) = \, & 20\exp\left(-0.2\sqrt{\frac{1}{D}\sum_{d=1}^D x_d^2}\right)
+ \exp\left(\frac{1}{D}\sum_{d=1}^D \cos(2\pi x_d)\right) - 20 - e,
\end{split}
\end{equation}
which attains its global optimum value of $0$ at $\mathbf{x}=\mathbf{0}$.

\paragraph{Rosenbrock.}
\begin{equation}\label{eq:rosenbrock}
\text{Rosenbrock}(\mathbf{x}) =
- \frac{1}{D-1}\sum_{d=1}^{D-1} \left[ 100(x_{d+1}-x_d^2)^2 + (1-x_d)^2 \right],
\end{equation}
which attains its global optimum value of $0$ at $\mathbf{x}=\mathbf{1}$.

\paragraph{Griewank.}
\begin{equation}\label{eq:griewank}
\begin{split}
\text{Griewank}(\mathbf{x}) = \, & -1 - \frac{1}{4000}\sum_{d=1}^D x_d^2
+ \prod_{d=1}^D 1.05^{0.2}\cos\left(\frac{x_d}{\sqrt{d}}\right),
\end{split}
\end{equation}
which attains its global optimum value of $1.05^{0.2D}-1$ at $\mathbf{x}=\mathbf{0}$.

\subsection{Hyperparameters of Canonical Benchmarks}
Hyperparameters for the benchmark test functions were determined by averaging results from 20 trajectories initialized at different points. For each configuration, we calculated the average of the best MSE values obtained across these 20 runs and selected the one with the lowest overall mean. These selected values are reported in the table~\ref{tab:ackley_parameters}, \ref{tab:rosenbrock_parameters}, and~\ref{tab:griewank_parameters}. The experimental environment was fixed with dimension $D = 500$, the total number of optimization steps per run was set to $T = 400$, and the number of Monte Carlo samples for gradient estimation was set to $B = 50$.

\begin{table}[!htbp]
\centering
\caption{Hyperparameters for Optimizing Ackley. The initial points for the Ackley function were sampled from a multivariate normal distribution $\mathcal{N}(5\mathbf{1}, 0.01^2 \mathbf{I})$. The candidate set for initial learning rate ($\eta_0$) is $\mathcal{L} := \{0.1, 0.5\}$, and for initial smoothing scale ($\sigma$) is $\mathcal{S} := \{0.1, 0.5\}$. The candidate set for the amplification parameter ($\theta$) is $\mathcal{A} := \{1, 3, 5\}$. For both ProMoT and ProMoT-loo, a Logistic kernel was employed as the smoothing kernel, with the transformation function defined as $g(\theta, y) = (y + c)^\beta e^{\theta y}$ (where $c = 600$ and $\beta = 10$). In RSGF and ZO-SLGHd/r, $\gamma$ denotes the decreasing factor for the smoothing scale $\sigma$, while in the baseline ZO-SLGHd, $\alpha$ represents the step size used to update the smoothing scale.}
\label{tab:ackley_parameters}
\resizebox{\columnwidth}{!}{
\begin{tabular}{lll}
\hline
Method       & Selected Values & Candidates \\ \hline
ProMoT       & $\eta_0=0.5$, $\sigma=0.5$, $\theta=5$                & $\eta_0 \in \mathcal{L}$, $\sigma \in \mathcal{S}$, $\theta \in \mathcal{A}$          \\
ProMoT-loo   & $\eta_0=0.5$, $\sigma=0.1$, $\theta=1$                & $\eta_0 \in \mathcal{L}$, $\sigma \in \mathcal{S}$, $\theta \in \mathcal{A}$          \\
EPGS         & $\eta_0=0.5$, $\sigma=0.5$, $\theta=5$                & $\eta_0 \in \mathcal{L}$, $\sigma \in \mathcal{S}$, $\theta \in \mathcal{A}$          \\
RSGF & $\eta_0=0.1$, $\sigma=0.1$, $\gamma=0.9$                & $\eta_0 \in \mathcal{L}$, $\sigma \in \mathcal{S}$, $\gamma \in \{ 0.8, 0.9, 0.99 \}$          \\
ZO-SGD       & $\eta_0=10$, $\sigma=0.1$                & $\eta_0 \in \{ 1, 5, 10 \}$, $\sigma \in \mathcal{S}$ \\
ZO-AdaMM     & $\eta_0=0.5$, $\sigma=0.1$, $\beta_1=0.5$, $\beta_2=0.5$               & $\eta_0 \in \mathcal{L}$, $\sigma \in \mathcal{S}$, $\beta_1 \in \{ 0.5,0.7,0.9 \}$, $\beta_2 \in \{ 0.1,0.3,0.5 \}$          \\
ZO-SLGHd     & $\eta_0=5$, $\sigma=0.5$, $\gamma=0.99$, $\alpha=0.001$                & $\eta_0 \in \{ 1, 5, 10 \}$, $\sigma \in \mathcal{S}$, $\gamma \in \{ 0.95,0.99 \}$, $\alpha \in \{ 0.1,0.01,0.001 \}$          \\
ZO-SLGHr     & $\eta_0=5$, $\sigma=0.5$, $\gamma=0.99$                & $\eta_0 \in \{ 1, 5, 10 \}$, $\sigma \in \mathcal{S}$, $\gamma \in \{ 0.95,0.99 \}$          \\
CMA-ES       & $\sigma=0.5$                & $\sigma \in \mathcal{S}$          \\ \hline
\end{tabular}
}
\end{table}

\begin{table}[!htbp]
\centering
\caption{Hyperparameters for Optimizing Rosenbrock. The initial points for the Rosenbrock function were sampled from a multivariate normal distribution $\mathcal{N}(3\mathbf{1}, 0.01^2 \mathbf{I})$. The candidate set for initial learning rate ($\eta_0$) is $\mathcal{L} := \{0.1, 0.5\}$, and for initial smoothing scale ($\sigma$) is $\mathcal{S} := \{0.1, 0.5\}$. The candidate set for the amplification parameter ($\theta$) is $\mathcal{A} := \{0.1, 0.01, 0.001\}$. For both ProMoT and ProMoT-loo, a Logistic kernel was employed as the smoothing kernel, with the transformation function defined as $g(\theta, y) = (y + c)^\beta e^{\theta y}$ (where $c = 6,000$ and $\beta = 10$). In RSGF and ZO-SLGHd/r, $\gamma$ denotes the decreasing factor for the smoothing scale $\sigma$, while in the baseline ZO-SLGHd, $\alpha$ represents the step size used to update the smoothing scale.}
\label{tab:rosenbrock_parameters}
\resizebox{\columnwidth}{!}{
\begin{tabular}{lll}
\hline
Method       & Selected Values & Candidates \\ \hline
ProMoT       & $\eta_0=0.1$, $\sigma=0.1$, $\theta=0.1$                & $\eta_0 \in \mathcal{L}$, $\sigma \in \mathcal{S}$, $\theta \in \mathcal{A}$          \\
ProMoT-loo   & $\eta_0=0.1$, $\sigma=0.1$, $\theta=0.001$                & $\eta_0 \in \mathcal{L}$, $\sigma \in \mathcal{S}$, $\theta \in \mathcal{A}$          \\
EPGS         & $\eta_0=0.1$, $\sigma=0.1$, $\theta=0.1$                & $\eta_0 \in \mathcal{L}$, $\sigma \in \mathcal{S}$, $\theta \in \mathcal{A}$          \\
RSGF & $\eta_0=0.1$, $\sigma=0.5$, $\gamma=0.99$                & $\eta_0 \in \mathcal{L}$, $\sigma \in \mathcal{S}$, $\gamma \in \{ 0.9, 0.95, 0.99 \}$          \\
ZO-SGD       & $\eta_0=0.01$, $\sigma=0.1$                & $\eta_0 \in \{0.01, 0.001\}$, $\sigma \in \mathcal{S}$ \\
ZO-AdaMM     & $\eta_0=0.5$, $\sigma=0.1$, $\beta_1=0.3$, $\beta_2=0.9$               & $\eta_0 \in \mathcal{L}$, $\sigma \in \mathcal{S}$, $\beta_1 \in \{ 0.5,0.7,0.9 \}$, $\beta_2 \in \{ 0.1,0.3,0.5 \}$          \\
ZO-SLGHd     & $\eta_0=0.001$, $\sigma=0.5$, $\gamma=0.95$, $\alpha=0.1$                & $\eta_0 \in \{0.01,0.001\}$, $\sigma \in \mathcal{S}$, $\gamma \in \{ 0.95,0.99 \}$, $\alpha \in \{ 0.1,0.01,0.001 \}$          \\
ZO-SLGHr     & $\eta_0=0.001$, $\sigma=0.5$, $\gamma=0.95$                & $\eta_0 \in \{0.01,0.001\}$, $\sigma \in \mathcal{S}$, $\gamma \in \{ 0.95,0.99 \}$          \\
CMA-ES       & $\sigma=0.1$                & $\sigma \in \mathcal{S}$          \\ \hline
\end{tabular}
}
\end{table}

\begin{table}[!htbp]
\centering
\caption{Hyperparameters for Optimizing Griewank. The initial points for the Griewank function were sampled from a multivariate normal distribution $\mathcal{N}(5\mathbf{1}, 0.01^2 \mathbf{I})$. The candidate set for initial learning rate ($\eta_0$) is $\mathcal{L} := \{0.1, 0.5\}$, and for initial smoothing scale ($\sigma$) is $\mathcal{S} := \{1, 2\}$. The candidate set for the amplification parameter ($\theta$) is $\mathcal{A} := \{1,3,5\}$. For both ProMoT and ProMoT-loo, a Logistic kernel was employed as the smoothing kernel, with the transformation function defined as $g(\theta, y) = (y + c)^\beta e^{\theta y}$ (where $c = 1,000$ and $\beta = 10$). In RSGF and ZO-SLGHd/r, $\gamma$ denotes the decreasing factor for the smoothing scale $\sigma$, while in the baseline ZO-SLGHd, $\alpha$ represents the step size used to update the smoothing scale.}
\label{tab:griewank_parameters}
\resizebox{\columnwidth}{!}{
\begin{tabular}{lll}
\hline
Method       & Selected Values & Candidates \\ \hline
ProMoT       & $\eta_0=0.1$, $\sigma=2$, $\theta=5$                & $\eta_0 \in \mathcal{L}$, $\sigma \in \mathcal{S}$, $\theta \in \mathcal{A}$          \\
ProMoT-loo   & $\eta_0=0.1$, $\sigma=1$, $\theta=1$                & $\eta_0 \in \mathcal{L}$, $\sigma \in \mathcal{S}$, $\theta \in \mathcal{A}$          \\
EPGS         & $\eta_0=0.1$, $\sigma=2$, $\theta=5$                & $\eta_0 \in \mathcal{L}$, $\sigma \in \mathcal{S}$, $\theta \in \mathcal{A}$          \\
RSGF & $\eta_0=0.1$, $\sigma=1$, $\gamma=0.9$                & $\eta_0 \in \mathcal{L}$, $\sigma \in \mathcal{S}$, $\gamma \in \{ 0.8, 0.9, 0.99 \}$          \\
ZO-SGD       & $\eta_0=10$, $\sigma=1$                & $\eta_0 \in \{1, 5, 10\}$, $\sigma \in \mathcal{S}$ \\
ZO-AdaMM     & $\eta_0=0.5$, $\sigma=1$, $\beta_1=0.1$, $\beta_2=0.7$               & $\eta_0 \in \mathcal{L}$, $\sigma \in \mathcal{S}$, $\beta_1 \in \{ 0.5,0.7,0.9 \}$, $\beta_2 \in \{ 0.1,0.3,0.5 \}$          \\
ZO-SLGHd     & $\eta_0=10$, $\sigma=1$, $\gamma=0.99$, $\alpha=0.1$                & $\eta_0 \in \{1,5,10\}$, $\sigma \in \mathcal{S}$, $\gamma \in \{ 0.95,0.99 \}$, $\alpha \in \{ 0.1,0.01,0.001 \}$          \\
ZO-SLGHr     & $\eta_0=10$, $\sigma=2$, $\gamma=0.95$                & $\eta_0 \in \{1,5,10\}$, $\sigma \in \mathcal{S}$, $\gamma \in \{ 0.95,0.99 \}$          \\
CMA-ES       & $\sigma=2$                & $\sigma \in \mathcal{S}$          \\ \hline
\end{tabular}
}
\end{table}

\subsection{Experimental Settings for Black-Box Targeted Adversarial Attacks}
We first describe the black-box attack objective shared across all adversarial
experiments, and then provide dataset-specific model architectures,
preprocessing procedures, and optimization hyperparameters.

\paragraph{Attack objective.}
Let $C$ denote a black-box classifier and $\mathbf{x}$ an input sample.
A targeted adversarial attack seeks a perturbation $\boldsymbol{\mu}$ such that
\[
\arg\max_y C(\mathbf{x}+\boldsymbol{\mu})_y = y_{\mathrm{tgt}},
\]
where the target label is chosen as the \emph{most unlikely class}
$y_{\mathrm{tgt}} = \arg\min_y C(\mathbf{x})_y$.
We optimize the C\&W-style black-box objective
\citep{carlini2017towards,xu2025power}
\begin{equation}\label{eq:attack-objective}
\begin{split}
L(\boldsymbol{\mu}) =
- \max\!\left\{ \max_{y \neq y_{\mathrm{tgt}}} C(\mathbf{x}+\boldsymbol{\mu})_y
- C(\mathbf{x}+\boldsymbol{\mu})_{y_{\mathrm{tgt}}}, \kappa \right\}
- \lambda \lVert \boldsymbol{\mu} \rVert_2 ,
\end{split}
\end{equation}
which encourages the target logit to exceed all non-target logits by a margin
$\kappa$ while penalizing the perturbation magnitude.
Once the margin constraint is satisfied, optimization focuses on minimizing
$\lVert \boldsymbol{\mu} \rVert_2$, leading to less perceptible adversarial examples.

\paragraph{CIFAR-10.}
We evaluated the proposed methods and baselines against a CNN-based classifier trained on the CIFAR-10 dataset \cite{krizhevsky2009learning}. Specifically, the target model's architecture, following the configurations of the model in Tables 1 and 2 of \cite{carlini2017towards}, was implemented using the PyTorch framework. To evaluate the effectiveness of our methods and baselines against a robust target, defensive distillation \cite{papernot2016distillation} was applied during the training process. After training, the model achieved a test accuracy of 75.7\%, serving as a representative target for our adversarial attack evaluations. Detailed hyperparameter configurations for these experiments are listed in Table~\ref{tab:cifar10_params}. The input dimension was set to $D = 3,072$ (corresponding to $32 \times 32 \times 3$ pixels), and we limited the total number of optimization steps per run to $T = 500$. For gradient estimation, we employed $B = 30$ Monte Carlo samples.

\begin{table}[!htbp]
\centering
\caption{Hyperparameters for CIFAR-10 Attack. In this experiment, $\eta_0$, $\sigma$, and $\theta$ denote the initial learning rate, the initial smoothing scale, and the amplification parameter, respectively. For the smoothing kernel, ProMoT employed a Gaussian kernel, while ProMoT-loo utilized a Generalized Gaussian kernel. Both algorithms shared the same transformation function, defined as $g(\theta, y) = (y + c)^\beta e^{\theta y}$ (where $c = 10,000$ and $\beta = 10$). Following the baseline configurations, $\gamma$ represents the decreasing factor for the smoothing scale $\sigma$ in RSGF and ZO-SLGHd/r, and $\alpha$ denotes the step size used for updating the smoothing scale in ZO-SLGHd.}
\label{tab:cifar10_params}
\begin{tabular}{ll}
\hline
Method       & Hyperparameters \\ \hline
ProMoT       & $\eta_0=0.005$, $\sigma=0.1$, $\theta=0.03$ \\
ProMoT-loo   & $\eta_0=0.005$, $\sigma=0.01$, $\theta=0.03$ \\
EPGS         & $\eta_0=0.005$, $\sigma=0.1$, $\theta=0.03$ \\
RSGF & $\eta_0=0.03$, $\sigma=1.0$, $\gamma=0.8$ \\
ZO-SGD       & $\eta_0=0.00005$, $\sigma=0.1$ \\
ZO-AdaMM     & $\eta_0=0.03$, $\sigma=0.1$ ,$\beta_1=0.9$, $\beta_2=0.1$ \\
ZO-SLGHd     & $\eta_0=0.00003$, $\sigma=0.1$, $\gamma=0.999$, $\alpha=0.1/3072$ \\
ZO-SLGHr     & $\eta_0=0.00003$, $\sigma=0.1$, $\gamma=0.995$ \\
CMA-ES       & $\sigma=0.05$ \\ \hline
\end{tabular}
\end{table}

\paragraph{VitalDB.}
We used the case-level clinical table (cases) from the VitalDB dataset \cite{lee2022vitaldb} and defined \texttt{death\_inhosp} as the label. We binarized \texttt{sex} and removed categorical fields as well as metadata identifiers and timestamps. Missing values were imputed as follows: (i) \texttt{intraop\_ebl}, \texttt{intraop\_uo}, and \texttt{intraop\_crystalloid} were set to 0; (ii) arterial blood gas variables were filled with normal reference values; and (iii) all remaining features were imputed using the feature-wise median. We applied a $\log(1+x)$ transform to skewed lab and medication variables and applied a Yeo--Johnson transform to \texttt{preop\_be}. Finally, min--max scaling was used to map all features to $[-2,2]$. This resulted in a final input dimension of $D = 42$. All preprocessing transforms were fitted on the training split only and then applied to the test split.

The XGBoost classifier was trained on a highly imbalanced dataset consisting of 4,791 training samples and 1,597 test samples, with an event rate of only 0.9\%. Despite this extreme class imbalance, the model achieved an AUROC of 0.843 and an AUPRC of 0.535, with an overall accuracy of 91.1\%. At an operating threshold of 0.011, the model yielded an F1-score of 0.113, corresponding to 1,446 TN, 137 FP, 5 FN, and 9 TP.

For adversarial optimization, we limited the total number of attack iterations per run to $T = 500$ and used $B = 30$ Monte Carlo samples for gradient estimation. Detailed hyperparameter settings for the adversarial optimization are provided in Table~\ref{tab:vitaldb_params}.

\begin{table}[!htbp]
\centering
\caption{Hyperparameters for VitalDB Attack. In this experiment, $\eta_0$, $\sigma$, and $\theta$ denote the initial learning rate, the initial smoothing scale, and the amplification parameter, respectively. Both ProMoT and ProMoT-loo used the same Gaussian smoothing kernel and shared an identical transformation function defined as $g(\theta, y) = (y + c)^\beta e^{\theta y}$, where $c = 1,000$ and $\beta = 10$. Following the baseline configurations, $\gamma$ represents the decreasing factor for the smoothing scale $\sigma$ in RSGF and ZO-SLGHd/r, and $\alpha$ denotes the step size used for updating the smoothing scale in ZO-SLGHd.}
\label{tab:vitaldb_params}
\begin{tabular}{ll}
\hline
Method       & Hyperparameters \\ \hline
ProMoT       & $\eta_0=0.01$, $\sigma=0.3$, $\theta=5$ \\
ProMoT-loo   & $\eta_0=0.01$, $\sigma=0.3$, $\theta=5$ \\
EPGS         & $\eta_0=0.03$, $\sigma=0.3$, $\theta=5$ \\
RSGF & $\eta_0=0.03$, $\sigma=3.0$, $\gamma=0.95$ \\
ZO-SGD       & $\eta_0=0.3$, $\sigma=0.3$ \\
ZO-AdaMM     & $\eta_0=0.5$, $\sigma=0.3$ ,$\beta_1=0.7$, $\beta_2=0.3$ \\
ZO-SLGHd     & $\eta_0=0.1$, $\sigma=1.0$, $\gamma=0.999$, $\alpha=0.001$ \\
ZO-SLGHr     & $\eta_0=0.1$, $\sigma=0.3$, $\gamma=0.995$ \\
CMA-ES       & $\sigma=0.5$ \\ \hline
\end{tabular}
\end{table}
} 
\end{document}